\documentclass[twoside]{article}
\usepackage[preprint]{aistats2027}

\usepackage[utf8]{inputenc}
\usepackage[T1]{fontenc}
\usepackage{microtype}
\usepackage{wrapfig}
\usepackage{float}
\usepackage{xcolor}
\usepackage{graphicx}
\usepackage{subcaption}
\usepackage{tabularx}
\usepackage{booktabs}
\usepackage{longtable}
\usepackage{url}
\usepackage{tikz}
\usepackage{amsmath}
\usepackage{amssymb}
\usepackage{amsfonts}
\usepackage{mathtools}
\usepackage{enumitem}
\usepackage{amsthm}
\usepackage{caption}

 \usepackage{natbib}
\usepackage[ruled,vlined,linesnumbered]{algorithm2e}
\usepackage{nicefrac}
\usepackage[ruled,vlined,linesnumbered]{algorithm2e}
\usepackage{hyperref}
\usepackage[capitalize,noabbrev]{cleveref}

\usetikzlibrary{positioning,arrows.meta,calc,fit,shapes.geometric, backgrounds}

\definecolor{RawPurple}{HTML}{5B2DA0}

\definecolor{RawFill}{HTML}{F3EEFB}

\definecolor{RawLight}{HTML}{F7F3FC}

\definecolor{ExpGreen}{HTML}{2E7D32}

\definecolor{ExpFill}{HTML}{EEF7ED}

\definecolor{ExpLight}{HTML}{F4FAF3}

\definecolor{SccrBlue}{HTML}{174EA6}

\definecolor{SccrFill}{HTML}{F2F7FF}

\definecolor{HeadOrange}{HTML}{B06D15}

\definecolor{HeadFill}{HTML}{FFF6E8}

\definecolor{Ink}{HTML}{111827}

\definecolor{SoftGray}{HTML}{F8FAFC}

\definecolor{MidGray}{HTML}{6B7280}

\theoremstyle{plain}
\newtheorem{theorem}{Theorem}[section]
\newtheorem{proposition}[theorem]{Proposition}

\newtheorem{corollary}[theorem]{Corollary}

\theoremstyle{definition}

\theoremstyle{remark}
\newtheorem{remark}[theorem]{Remark}

\newcommand{\R}{\mathbb{R}}

\providecommand{\auditTableStyle}{}
\renewcommand{\auditTableStyle}{%
  \normalsize
  \setlength{\tabcolsep}{2pt}%
  \renewcommand{\arraystretch}{1}%
}
\providecommand{\auditsd}[2]{}
\renewcommand{\auditsd}[2]{\ensuremath{#1{\pm}#2}}
\providecommand{\auditci}[3]{}
\renewcommand{\auditci}[3]{\ensuremath{#1\,[#2,#3]}}

\hypersetup{%
  pdftitle={Shared Canonical Component Removal Beyond Expert Views},
  pdfauthor={Jose Roberto Tello Ayala, Akl C. Fahed, Roy Perlis, Finale Doshi-Velez}
}

\begin{document}
\runningauthor{Tello Ayala, Fahed, Perlis, and Doshi-Velez}

\twocolumn[

\aistatstitle{Shared Canonical Component Removal Beyond Expert Views}

\aistatsauthor{%
  Jose Roberto Tello Ayala \\
  \normalfont School of Engineering and Applied Sciences\\
  \normalfont Harvard University\\
  \normalfont\texttt{jtelloayala@g.harvard.edu}
  \And
  Akl C. Fahed\\
  \normalfont Heart and Vascular Institute\\
  \normalfont Mass General Brigham
  \AND
  Roy Perlis\\
  \normalfont Center for Quantitative Health\\
  \normalfont Mass General Brigham
  \And
  Finale Doshi-Velez\\
  \normalfont School of Engineering and Applied Sciences\\
  \normalfont Harvard University
}
\bigskip
]

\begin{abstract}
 In clinical practice, experts often begin by converting raw inputs into a set of expert features before making a decision. While black-box models trained on raw input can outperform models trained on expert views, it remains unclear what predictive signal the raw model exploits beyond what the expert view captures. We introduce an audit framework to quantify this gap using Shared Canonical Component Removal (SCCR). Unlike standard residualization or concept erasure, SCCR selectively removes only those directions strongly shared between learned raw and expert embeddings. 
 Evaluations on synthetic data, coronary artery disease (CAD) detection, antidepressant treatment dropout prediction, and skin lesion classification using SkinCon annotations show that SCCR identifies the ways in which expert views are insufficient and recovers residual predictive signal while minimizing redundancy with the expert view.
\end{abstract}

\section{Introduction}

In clinical practice, experts often begin by converting the raw input into an \textit{expert view}: a domain-informed transformation of the data, such as a vessel-morphology summary from a coronary angiogram or a curated set of psychiatric markers from electronic health records (EHR). ML models using the original raw input may use features not used by the expert. For example, when detecting coronary artery disease (CAD), clinicians focus on vessel geometry and stenosis patterns; however, a black-box classifier trained on the full angiogram may exploit additional cues, 
such as grayscale intensity and small vessels omitted from the mask. Similarly, when predicting antidepressant treatment dropout, a model trained only on psychiatric codes may miss broader clinical signals captured by a model trained on all diagnosis codes—such as sleep-disorder diagnoses, pain-related prescriptions, or repeated primary-care visits—that are predictive of antidepressant dropout but not explicitly included in the psychiatric view. Especially in high-stakes settings such as clinical decision making, this leads to a critical audit question: \textit{What predictive signal does the raw model use beyond what the expert view captures?} This question is important not only to identify potentially problematic short-cuts, but also to advance science: perhaps the model has found a valuable feature of which the experts are not aware. 

Classical concept bottleneck models route prediction through a specified layer of human-interpretable concepts \citep{pmlr-v119-koh20a}. More recent variants relax the assumption that a fixed human-specified concept set is sufficient for prediction \citep{NEURIPS2022_944ecf65,yuksekgonul2023posthoc,oikarinen2023labelfree,zarlenga2022concept,Shang_2024_CVPR,11094052}.  %
Our goal is to test whether linear information from the raw input still improves prediction after reducing its strongest overlap with the expert view. We then ask whether omitted concepts explain this improvement. The audit can therefore guide later refinement of the concept set.

Other techniques, such as latent-space concept erasure, aim to prevent a specific attribute from being recovered by a linear probe \citep{ravfogel-etal-2020-null, 10.5555/3666122.3669006}. While appropriate for fairness or privacy, removing every linear association can discard useful predictive signal \citep{holstege2025preservingtaskrelevantinformationlinear}. For example, psychiatric codes may weakly reflect general primary-care use, while the broader EHR may record those visits directly. Removing every association with psychiatric codes may therefore discard information that the psychiatric view captures only weakly. We seek to retain such information for further audit.

Finally, multi-view learning typically %
combines information from paired inputs\citep{jordanbach,pmlr-v28-andrew13,pmlr-v37-wangb15,wang2017deepvariationalcanonicalcorrelation,NIPS2016_45fbc6d3,10.1214/12-AOAS597,PALZER2022107547}. In our setting, the expert view serves as the reference for examining the raw input. For instance, in CAD detection, combining the raw angiogram and vessel morphology summary such as a segmentation mask may improve prediction. Our audit asks whether the angiogram still adds predictive value after reducing its strongest overlap with the mask.

Formally, let $x$ be the raw input and $\tilde{x}=g(x)$ be the expert view. Learned encoders produce raw and expert embeddings $z_r$ and $z_e$, respectively. The audit problem is to identify which complementary signal in \(z_r\) explain the performance gap between an expert-view model and a raw black-box model. A natural approach is to regress $z_r$ on $z_e$ and subtract the predicted component. However, when the expert view contains only a noisy proxy for a raw feature, this can leave a noisy version of that feature in the residual (see Sec.~\ref{sec:ped-example}).

To support this audit, we introduce Shared Canonical Component Removal (SCCR), a selective residualization operator based on regularized canonical correlation analysis (CCA) \citep{JordanEssaiSL,da6385d2-9c65-3860-bbcd-b821fdff69ff}. SCCR identifies linear directions shared between the raw and expert embeddings and removes only those whose canonical correlation meets or exceeds a threshold $\tau$. 
We combine the remaining raw signal with the expert view for prediction. The audit tests whether this remaining signal improves prediction and examines what explains that improvement. Here, ``beyond the expert view'' refers to signal remaining after this selective linear removal; it does not imply independence from the expert view. Our contributions are three-fold:
\begin{enumerate}[
    leftmargin=*,
    topsep=0pt,
    itemsep=0pt,
    parsep=0pt,
    partopsep=0pt
]
\item \textbf{An audit framework:} We formulate a paired-view setting in which an expert-derived view serves as a reference for testing whether domain-specified information is sufficient for prediction and for measuring complementary predictive signal beyond that view. Through synthetic and clinical experiments on CAD, antidepressant treatment dropout, and skin lesions, we show how the expert view can be used as an audit anchor.

\item \textbf{The SCCR operator:} We define a canonical removal operator that uses the expert view as a reference anchor and removes only strongly shared linear directions from the raw embedding. %

\item \textbf{Frozen calibration for deterministic inference:} 
We prove stability of both the frozen operator and the induced inference map, and show empirically that the resulting dual-branch model can retain black-box performance while reducing redundancy with the expert view.
\end{enumerate}

\section{Related Work}

\paragraph{Concept-based interpretation and bottleneck models.}
Linear probes and concept-based explanation methods diagnose whether task-relevant attributes or user-defined concepts are represented in hidden activations \citep{alain2018understandingintermediatelayersusing,pmlr-v80-kim18d}, but they are primarily diagnostic and do not impose a concept-mediated prediction pathway. Concept bottleneck models (CBMs) offer a more constructive alternative by first predicting human-interpretable concepts and then using those concepts to predict the label \citep{pmlr-v119-koh20a}. While classical CBMs rely on a user-specified concept bottleneck, later work has shown that this bottleneck can be limited by insufficient concept sets, scarce concept supervision, or accuracy--interpretability trade-offs. Leakage-aware CBMs analyze how insufficient concept sets and inexpressive concept predictors can induce leakage \citep{NEURIPS2022_944ecf65}; post-hoc and label-free CBMs reduce the need for dense concept annotations and end-to-end CBM training \citep{yuksekgonul2023posthoc,oikarinen2023labelfree}; concept embedding models replace scalar concept variables with richer concept representations \citep{zarlenga2022concept}; and residual or hybrid CBM variants explicitly model information missing from an incomplete predefined concept bank \citep{Shang_2024_CVPR,11094052}. Our framework differs in its fundamental goal: we do not seek to complete the concept set or optimize a bottleneck for accuracy. Instead, we use the expert view as an audit anchor to quantify residual predictive signal in the raw representation that is not captured by domain expertise.

\paragraph{Concept erasure and information removal.}
Concept erasure studies how to suppress designated attributes from a representation, typically for fairness, privacy, or controlled interpretation \citep{ravfogel-etal-2020-null,pmlr-v162-ravfogel22a}. Many of these methods aim for ``linear guardedness'': after transformation, the target concept should not be recoverable by linear probes. Extensions include closed-form linear erasure, kernelized adversarial erasure, and spectral covariance-based removal methods such as SAL \citep{10.5555/3666122.3669006,ravfogel-etal-2022-adversarial,shao-etal-2023-gold}. Our objective is fundamentally different: we do not seek complete erasure or unlearnability of the expert view. In an audit setting, a complementary predictive factor may be only partially represented in the expert view as a noisy proxy. For example, curated EHR markers may weakly reflect treatment failure, while the raw longitudinal record contains cleaner temporal evidence of repeated medication changes, utilization patterns, or comorbidity trajectories. Methods that enforce guardedness can therefore remove label-relevant variation when the erased concept and task-relevant factors overlap \citep{10.5555/3600270.3601578,holstege2025preservingtaskrelevantinformationlinear}. By thresholding the regularized canonical spectrum, SCCR replaces full erasure with selective filtering: it suppresses strongly shared raw--expert directions while preserving lower-correlation directions that may encode signal related to domain knowledge but not sufficiently explained by the expert view.

\paragraph{Multi-view learning and shared/private representations.}
The closest conceptual literature is multi-view representation learning, especially methods that separate shared/common structure from view-specific/private structure. Classical CCA and its probabilistic interpretation \citep{jordanbach} model common linear variation across paired views. Deep extensions such as DCCA \citep{pmlr-v28-andrew13}, DCCAE \citep{pmlr-v37-wangb15}, and VCCA \citep{wang2017deepvariationalcanonicalcorrelation} learn nonlinear common representations, while VCCA-private and Domain Separation Networks (DSN) \citep{NIPS2016_45fbc6d3} explicitly introduce both shared and private components. Related factorization methods such as JIVE \citep{10.1214/12-AOAS597}, AJIVE \citep{10.1016/j.jmva.2018.03.008}, and sJIVE \citep{PALZER2022107547} decompose data into joint and individual structure, with sJIVE incorporating supervision through the outcome. Decomposition-based CCA (D-CCA) \citep{Shu02012020} separates common and distinctive variation, requiring the distinctive components of the two views to be uncorrelated. Recent work has questioned the assumption that downstream-relevant information is fully redundant across views, emphasizing shared, view-specific, complementary, and synergistic information \citep{10.5555/3666122.3667552,dufumier2025what,gao2025deep}; related theory studies partial observability, where each view may expose only a subset of the underlying latent factors \citep{ICLR2024_956e5427}. Unlike standard multi-view learning, our two views are not peer modalities: we do not seek to learn a common representation of both views, but instead use the expert view asymmetrically as an audit anchor for testing whether it is sufficient and for identifying what predictive signal remains beyond it.

\section{Method}
\label{sec:method}
In this section, we define the prediction setting and introduce our SCCR operator and architecture. In Sec.~\ref{sec:ped-example}, we show how SCCR differs from existing residualization techniques in ways that support our audit goals.

\subsection{Setting}

\paragraph{Paired-view setup.}
We consider supervised classification tasks with paired views of the same underlying input. Let \(x \in \R^{d_{\mathrm{in}}}\) denote the raw input and let \(\tilde{x}=g(x)\in\R^{d_{\mathrm{exp}}}\) denote an expert view, such as a segmentation mask or a curated subset of features. We observe training samples
\(\{(x_i,\tilde{x}_i,y_i)\}_{i=1}^N, y_i \in \{1,\dots,C\}\). The expert view is intended to capture information that practitioners consider relevant, but it is not assumed to be complete. The central audit question is whether this expert view is sufficient for prediction, and, if not, whether the raw input contains additional predictive signal beyond it.  

\paragraph{Dual branch architecture.}
The model consists of a raw branch mapping $x$ to $z_r = f_{\mathrm{raw}}(x)$ and an expert branch mapping $\tilde{x}$ to \(z_e = f_{\mathrm{expert}}(\tilde{x})\). These networks, along with the prediction head, are trained end-to-end. The SCCR operator which removes strongly shared canonical directions from the raw representation $z_r$ to obtain the residual $z_\perp$, is not a trainable layer; instead, it is computed from empirical second moments of paired embeddings for any desired correlation threshold $\tau$. The final predictor takes the form $\hat{y} = h(z_\perp,z_e^{\mathrm{ctr}})$, where \(z_\perp\) is the residual raw embedding and \(z_e^{\mathrm{ctr}}\) is the centered expert embedding.

\begin{figure*}[!t]
\centering

\def\RawInputImage{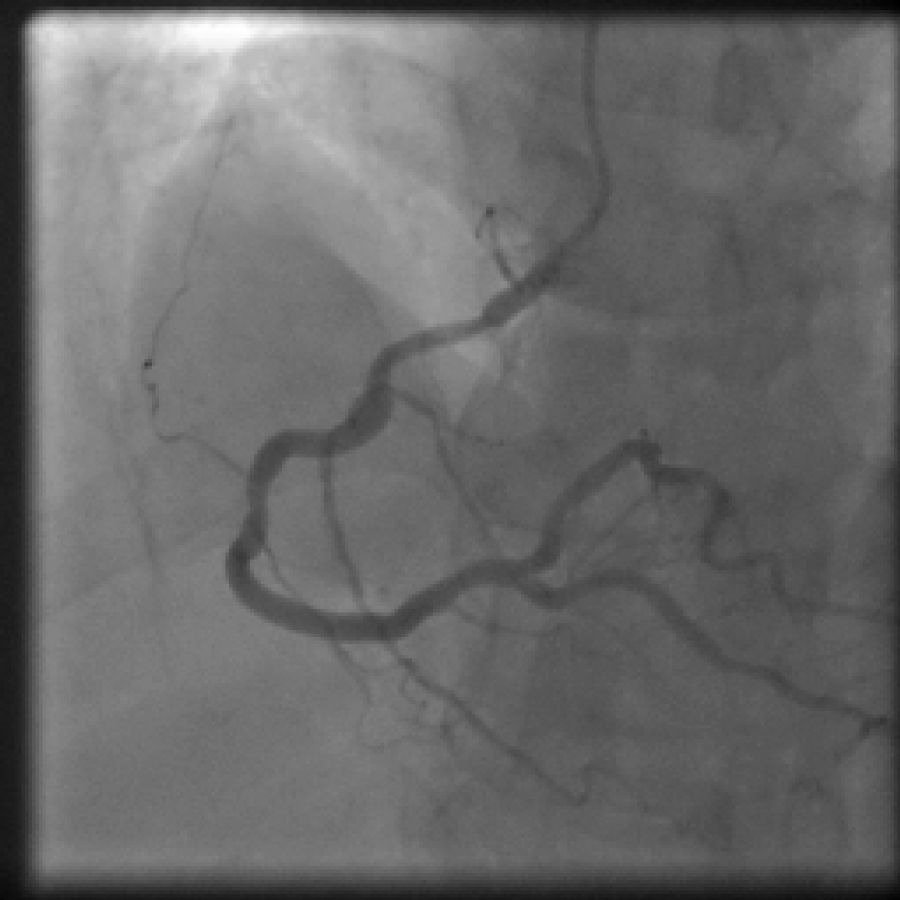}
\def\ExpertViewImage{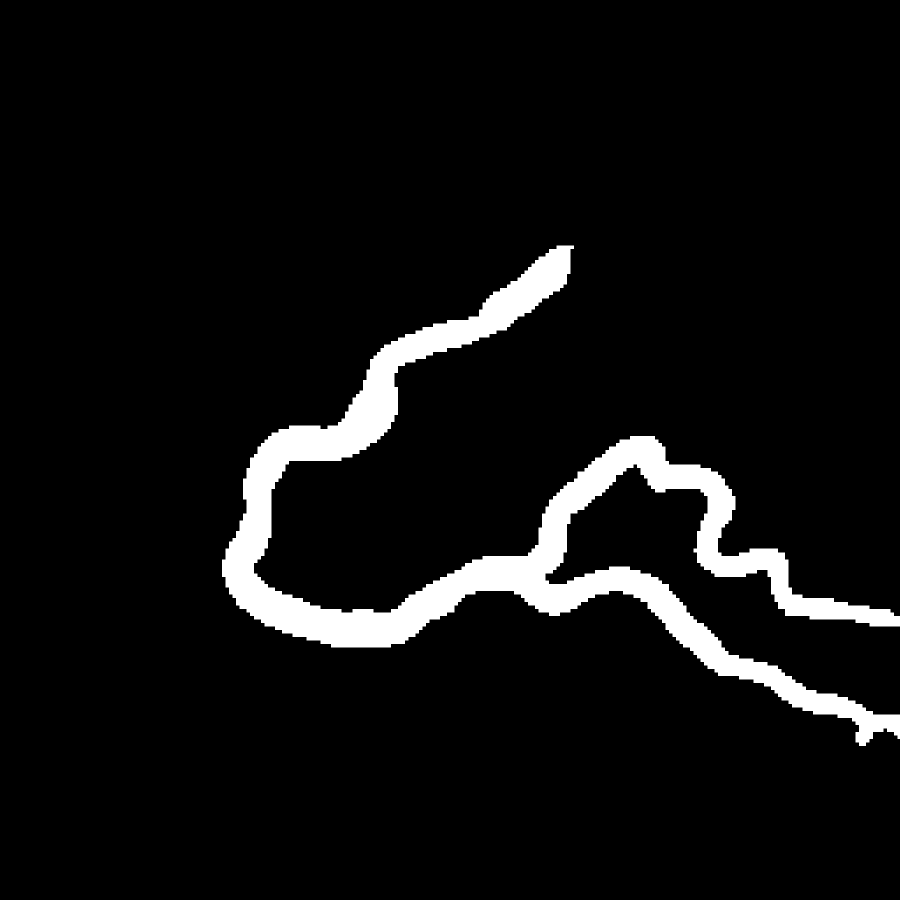}

\def\InputImageSize{1.50cm}
\def\InputImagePanel#1{%
    \node[inner sep=0pt, outer sep=0pt] at (0.94,0.47)
        {\includegraphics[width=\InputImageSize,height=\InputImageSize]{#1}};
}

\resizebox{\textwidth}{!}{    %
\begin{tikzpicture}[
    >=Latex,
    line cap=round,
    line join=round,
    node distance=0.65cm and 0.95cm,
    data/.style={draw, rounded corners=4pt, fill=SoftGray, minimum width=2.25cm, minimum height=0.72cm, align=center, inner sep=3pt},
    block/.style={draw, rounded corners=5pt, minimum width=1.65cm, minimum height=0.92cm, align=center, inner sep=4pt},
    procraw/.style={block, draw=RawPurple, fill=RawFill},
    procexp/.style={block, draw=ExpGreen, fill=ExpFill},
    embedraw/.style={draw=RawPurple!80!black, rounded corners=5pt, minimum width=0.42cm, minimum height=1.78cm, align=center, inner sep=1.5pt},
    embedexp/.style={draw=ExpGreen!80!black, rounded corners=5pt, minimum width=0.42cm, minimum height=1.78cm, align=center, inner sep=1.5pt},
    funnelraw/.style={draw=RawPurple, fill=RawLight, line width=0.75pt, rounded corners=2.2pt},
    funnelexp/.style={draw=ExpGreen, fill=ExpLight, line width=0.75pt, rounded corners=2.2pt},
    sccr/.style={draw=SccrBlue, dashed, line width=0.8pt, rounded corners=9pt, fill=SccrFill, minimum width=4.35cm, minimum height=3.75cm, align=center, inner sep=7pt},
    residual/.style={draw=RawPurple, rounded corners=5pt, fill=RawFill, minimum width=2.65cm, minimum height=1.12cm, align=center, inner sep=4pt, font=\scriptsize},
    expertbranch/.style={draw=ExpGreen, rounded corners=5pt, fill=ExpFill, minimum width=2.05cm, minimum height=0.9cm, align=center, inner sep=4pt, font=\scriptsize},
    head/.style={draw=HeadOrange, rounded corners=5pt, fill=HeadFill, minimum width=2.75cm, minimum height=1.22cm, align=center, inner sep=4pt, font=\scriptsize},
    rawarrow/.style={->, line width=1.1pt, draw=RawPurple},
    exparrow/.style={->, line width=1.1pt, draw=ExpGreen},
    statarrow/.style={->, line width=1.0pt, draw=ExpGreen, dashed},
    outarrow/.style={->, line width=1.0pt, draw=HeadOrange}
]

\coordinate (rawLane) at (0,1.70);
\coordinate (expLane) at (0,-1.25);

\node[align=center, text=RawPurple, font=\bfseries\small] (rawtitle) at (0,2.85) {RAW INPUT};
\node[align=center, font=\small] (rawlabel) at (0,2.51) {$x$};

\begin{scope}[shift={(-0.94,1.10)}]
    \InputImagePanel{\RawInputImage}
\end{scope}

\node[align=center, text=ExpGreen, font=\bfseries\small] (exptitle) at (0,-0.10) {EXPERT VIEW};
\node[align=center, font=\small] (explabel) at (0,-0.44) {$\tilde{x}=g(x)$};

\begin{scope}[shift={(-0.94,-1.85)}]
    \InputImagePanel{\ExpertViewImage}
\end{scope}

\coordinate (encrawIn)  at (1.72,1.70);
\coordinate (encrawOut) at (3.18,1.70);

\path[funnelraw]
    (1.72,2.43) -- (3.18,2.09) -- (3.18,1.31) -- (1.72,0.97) -- cycle;

\node[align=center, font=\small] (encraw) at (2.43,1.70) {encoder\\[-2pt]$f_{\mathrm{raw}}$};

\coordinate (encexpIn)  at (1.72,-1.25);
\coordinate (encexpOut) at (3.18,-1.25);

\path[funnelexp]
    (1.72,-0.52) -- (3.18,-0.86) -- (3.18,-1.64) -- (1.72,-1.98) -- cycle;

\node[align=center, font=\small] (encexp) at (2.43,-1.25) {encoder\\[-2pt]$f_{\mathrm{expert}}$};

\node[embedraw] (zrbar) at (3.95,1.70) {$\bullet$\\[-2pt]$\bullet$\\[-2pt]$\vdots$\\[-2pt]$\bullet$};
\node[above=0.08cm of zrbar, font=\small] {$z_r$};

\node[embedexp] (zebar) at (3.95,-1.25) {$\bullet$\\[-2pt]$\bullet$\\[-2pt]$\vdots$\\[-2pt]$\bullet$};
\node[above=0.08cm of zebar, font=\small] {$z_e$};

\node[procraw] (projraw) at (5.45,1.70) {project\\[-2pt]\& center\\[-1pt]$z_r-\mu_r$};
\node[procexp] (projexp) at (5.45,-1.25) {project\\[-2pt]\& center\\[-1pt]$z_e-\mu_e$};

\node[sccr] (sccrbox) at (9.05,0.35) {};

\node[font=\bfseries\small, text=SccrBlue] 
    at ($(sccrbox.north)+(0,-0.30)$) {SCCR OPERATOR};

\node[font={\itshape\fontsize{5.5pt}{6pt}\selectfont}, text=SccrBlue]
    at ($(sccrbox.north)+(0,-0.60)$)
    {shared canonical component removal};

\draw[SccrBlue!40, line width=0.35pt] 
    ($(sccrbox.center)+(0,0.42)$) -- ($(sccrbox.center)+(0,-1.18)$);

\node[
    align=center,
    font=\scriptsize,
    text=SccrBlue!95!black,
    text width=1.55cm
] at ($(sccrbox.center)+(-1.3,0.62)$) {
    \textbf{ESTIMATION}\\[-2pt]
    $T$
};

\node[
    align=center,
    font=\scriptsize,
    text width=1.55cm
] at ($(sccrbox.center)+(-1.1,-0.42)$) {
    Minibatch\\[-2pt]
    statistics\\[-1pt]
    Regularized\\[-2pt]
    CCA\\[-1pt]
    Threshold\\[-2pt]
    $s_i \ge \tau$
};

\node[
    align=center,
    font=\scriptsize,
    text=SccrBlue!95!black,
    text width=1.55cm
] at ($(sccrbox.center)+(1.03,0.62)$) {
    \textbf{INFERENCE}\\[-2pt]
    $T^\star$
};

\node[
    align=center,
    font=\scriptsize,
    text width=1.55cm
] at ($(sccrbox.center)+(1.12,-0.42)$) {
    Calibration\\[-2pt]
    set $\mathcal{C}$\\[-1pt]
    Frozen means\\[-2pt]
    $\mu^\star$\\[-1pt]
    Deterministic\\[-2pt]
    projection
};

\node[font=\small, text=black] 
    at ($(sccrbox.south)+(0,0.25)$) {$T = L_r^{-\top} K L_r^{\top}$};

\node[residual, right=0.45cm of sccrbox, yshift=1.10cm] (resid) {
    \textbf{RESIDUAL RAW BRANCH}\\[-1pt]
    $z_\perp=(z_r-\mu_r^\star)T^\star$\\[-1pt]
    {\scriptsize complementary signal}
};

\node[expertbranch] (expertbranch) at ($(resid.center)+(0,-3.10)$) {
    \textbf{EXPERT BRANCH}\\[-1pt]
    $z_e^{\mathrm{ctr}}$
};

\node[head, right=0.45cm of resid, yshift=-1.45cm] (head) {
    \textbf{DUAL-BRANCH HEAD}\\[-1pt]
    $\hat y=h(z_\perp,z_e^{\mathrm{ctr}})$\\[-1pt]
    {\scriptsize main effects + interaction}
};

\node[below=0.38cm of head, font=\bfseries\scriptsize] (predlab) {PREDICTION};
\node[below=0.12cm of predlab, font=\small] (pred) {$\hat y$};

\coordinate (sccrRawIn)   at (sccrbox.west |- projraw.east);
\coordinate (sccrStatIn)  at (sccrbox.west |- projexp.east);
\coordinate (sccrRawOut)  at (sccrbox.east |- resid.west);
\coordinate (expBypassA)  at ($(projexp.south east)+(0.58,0)$);
\coordinate (expBypassB)  at ($(expertbranch.west)+(-0.72,0)$);
\coordinate (headRawIn)   at ($(head.west)+(0,0.33)$);
\coordinate (headExpIn)   at ($(head.west)+(0,-0.33)$);
\coordinate (rawHeadTurn) at ($(headRawIn)+(-0.34,0)$);
\coordinate (expHeadTurn) at ($(headExpIn)+(-0.58,0)$);

\draw[rawarrow] (0.88,1.70) -- (encrawIn);
\draw[exparrow] (0.88,-1.25) -- (encexpIn);

\draw[rawarrow] (encrawOut) -- (zrbar.west);
\draw[exparrow] (encexpOut) -- (zebar.west);

\draw[rawarrow] (zrbar.east) -- (projraw.west);
\draw[exparrow] (zebar.east) -- (projexp.west);

\draw[rawarrow]  (projraw.east) -- (sccrRawIn);
\draw[statarrow] (projexp.east) -- (sccrStatIn);

\draw[exparrow, rounded corners=4pt] 
    (projexp.south east) -- (expBypassA) |- (expBypassB) -- (expertbranch.west);

\draw[rawarrow] (sccrRawOut) -- (resid.west);

\draw[rawarrow, rounded corners=4pt] 
    (resid.east) -- (resid.east -| rawHeadTurn) -- (rawHeadTurn) -- (headRawIn);

\draw[exparrow, rounded corners=4pt] 
    (expertbranch.east) -- (expertbranch.east -| expHeadTurn) -- (expHeadTurn) -- (headExpIn);

\draw[outarrow] (head.south) -- (predlab.north);

\fill[RawPurple] (sccrRawIn) circle (1.15pt);
\fill[ExpGreen] (sccrStatIn) circle (1.15pt);
\fill[RawPurple] (sccrRawOut) circle (1.15pt);

\node[
    draw=MidGray,
    rounded corners=5pt,
    fill=white,
    minimum width=19.00cm,
    minimum height=0.85cm
] (legendbox) at (9.15,-2.85) {};

\draw[rawarrow]
    ($(legendbox.west)+(0.35,0.18)$) -- ++(0.85,0);
\node[anchor=west, font=\scriptsize]
    at ($(legendbox.west)+(1.55,0.18)$)
    {raw branch (complementary residual)};

\draw[exparrow]
    ($(legendbox.west)+(0.35,-0.18)$) -- ++(0.85,0);
\node[anchor=west, font=\scriptsize]
    at ($(legendbox.west)+(1.55,-0.18)$)
    {expert branch (anchor + predictive)};

\draw[statarrow]
    ($(legendbox.west)+(6.50,0.18)$) -- ++(0.85,0);
\node[anchor=west, font=\scriptsize] (statLegend)
    at ($(legendbox.west)+(7.70,0.18)$)
    {statistics flow (training only)};

\node[
    anchor=west,
    draw=SccrBlue,
    dashed,
    rounded corners=3pt,
    minimum width=0.55cm,
    minimum height=0.42cm
] (sccrLegend)
    at ($(statLegend.east)+(0.45,-0.18)$) {};

\node[anchor=west, font=\scriptsize]
    at ($(sccrLegend.east)+(0.35,0)$)
    {shared canonical component removal (SCCR)};
\end{tikzpicture}
}
\caption{
\textbf{Dual-branch SCCR architecture.} The raw input \(x\) and expert view \(\tilde{x}=g(x)\) are encoded into raw and expert embeddings. The expert branch serves as both a predictive input and a reference for constructing the residual raw branch. During training, the SCCR operator $T$ is estimated from minibatch statistics to compute \(Z_\perp=Z_{r,c}T\). After training, a calibration pass produces a frozen SCCR operator \(T^\star\), yielding deterministic residuals \(z_\perp=(z_r-\mu_r^\star)T^\star\).
}
\label{fig:architecture}
\end{figure*}

\subsection{Shared canonical component removal}

Because learned embedding coordinates are arbitrary, we use CCA to identify directions maximally correlated across views. We introduce a correlation threshold $\tau \in (0,1)$ to distinguish between redundant signal and complementary evidence. Without this threshold---as in standard orthogonal projection---any component of $z_r$ that has any linear overlap with $z_e$ is removed. This is problematic when a raw feature is a ``stronger'' version of a ``weak'' proxy in the expert view; such a feature contains complementary signal despite having some correlation with the expert view. SCCR addresses this by selectively thresholding the canonical spectrum: we remove only those directions whose correlations exceed $\tau$, treating directions with correlations above $\tau$ as redundant and directions with correlations below $\tau$ as candidate complementary evidence.
Let \(Z_r\in\mathbb{R}^{n\times d_r}\) and \(Z_e\in\mathbb{R}^{n\times d_e}\) be a collection of $n$ paired raw and expert embeddings. We define the row-wise centered versions,
\[
Z_{r,c}=Z_r-\mathbf{1}\mu_r, \quad Z_{e,c}=Z_e-\mathbf{1}\mu_e,
\]
where $\mu_j$ is the empirical mean of branch $j$. Let \(B=\max(n-1,1)\). The centered empirical second moments are:
\[
S_{rr}=\frac{1}{B}Z_{r,c}^{\top}Z_{r,c}, \enspace S_{ee}=\frac{1}{B}Z_{e,c}^{\top}Z_{e,c}, \enspace S_{re}=\frac{1}{B}Z_{r,c}^{\top}Z_{e,c}.
\]

\paragraph{OAS-selected branchwise ridge.} 
The empirical second moments above may be noisy because they are computed from finite samples. To stabilize the whitening, we
regularize the auto-covariance matrices using separate ridge
parameters selected by Oracle Approximating Shrinkage (OAS)\citep{10.1109/TSP.2010.2053029}. 
For the raw branch, let $m_r=\operatorname{tr}(S_{rr})/d_r$ be its average variance. The OAS coefficient is
\begin{equation}
\label{eq:oas_rho}
\rho_r = \min\!\left\{1,
\frac{
(1-\frac{2}{d_r})\operatorname{tr}(S_{rr}^2)
+\operatorname{tr}(S_{rr})^2
}{
(B+1-\frac{2}{d_r})
\left(\operatorname{tr}(S_{rr}^2)
-\frac{\operatorname{tr}(S_{rr})^2}{d_r}\right)
}
\right\}.
\end{equation}
The expert coefficient $\rho_e$ and average variance $m_e$
are computed analogously. We clamp both coefficients strictly
below one, using their maximum value when the denominator
is numerically zero. We then define
\begin{equation}
\label{eq:equiv_ridge}
\lambda_r=\frac{\rho_r m_r}{1-\rho_r},
\qquad
\lambda_e=\frac{\rho_e m_e}{1-\rho_e},
\end{equation}
and the regularized auto-covariances \(R_{rr}=S_{rr}+\lambda_r I\), \(R_{ee}=S_{ee}+\lambda_e I\).
These matrices account for differences in variance between
branches and improve conditioning for Cholesky whitening.
They are proportional to the OAS estimates. For example,
$S_{rr}^{\mathrm{OAS}}=(1-\rho_r)R_{rr}$. We use the ridge matrices $R_{rr}$ and $R_{ee}$ for whitening. We leave $S_{re}$ unchanged as we do not invert it during whitening
\paragraph{Whitened cross-covariance and thresholding.}
Let \(R_{rr}=L_rL_r^{\top}\) and \(R_{ee}=L_eL_e^{\top}\)
be Cholesky factorizations. The ridge-scale whitened cross-covariance is \(M=L_r^{-1}S_{re}L_e^{-\top}\).
We compute a thin singular value decomposition \mbox{\(M=U\operatorname{diag}(s)V^{\top}.\)}
The singular values \(s_i\in[0,1]\) correspond to the regularized canonical correlations.  Given a threshold \(\tau\in(0,1)\),  we define the high-correlation index set $H=\{i:s_i\ge \tau\}$ and the projector onto the complement of these directions in whitened raw space as $K=I-U_HU_H^{\top}$, where $U_H=U[:,H]$. The SCCR operator $T$ is then defined by transforming this projector back to the raw embedding coordinates:
\begin{equation}
\label{eq:sccr_operator_oas}
T=L_r^{-\top}KL_r^{\top}.
\end{equation}
The resulting SCCR residual is $Z_{\perp}=Z_{r,c}T$. %
Let $z_{\perp}$ be the residual raw embedding and $z_e^{\mathrm{ctr}}$ be the centered expert embedding. The architecture then allows each branch a nonlinear transformation:
$u_r=\phi_r(z_{\perp})$, and $u_e=\phi_e(z_e^{\mathrm{ctr}})$,
where $u_r$ and $u_e$ have the same dimensionality, and $\phi_e$ and $\phi_r$ are one-hidden-layer MLPs with SiLU activations and dropout, with optional branch normalization before the linear map. The logits $\ell$ for the final prediction consist of these outputs $u_r$ and $u_e$ as well as an interaction feature $u_{\times}=u_r\odot u_e$: 
{\small\begin{equation}
\label{eq:mei_head}
\ell = W_r^{\mathrm{out}}u_r + W_e^{\mathrm{out}}u_e + W_{\times}^{\mathrm{out}}u_{\times} + b, \enspace \hat y=\operatorname{softmax}(\ell).
\end{equation}}
This design supports auditability in two ways. First, by maintaining separate hidden states $u_r$ and $u_e$, we ensure that the raw-residual and expert pathways remain distinct. Second, the explicit interaction term \(u_{\times}\) allows us to isolate whether predictive utility depends on complementary evidence from both branches. %
\subsection{End-to-end Training}

The encoders, projectors, and prediction head are trained jointly by minimizing cross-entropy loss between $\hat{y}$ (Eq.~\ref{eq:mei_head}) and the ground truth labels. During training, we use the current minibatch as the reference set for the SCCR operator, rendering it a stochastic differentiable layer. The SCCR operation is differentiable with respect to branch embeddings, as gradients propagate through the covariance estimates, OAS-selected ridge construction, Cholesky whitening, and SVD. The only non-differentiable point is the hard threshold \(H=\{i:s_i\ge\tau\}\). For a fixed $H$ and distinct singular values, the map is smooth. In practice, ridge regularization and shrinkage clipping ensure stable optimization.

\subsection{Calibration for Frozen Inference}

To avoid dependence on test-batch composition, we freeze the SCCR operator after training via a calibration pass over the training set. Let \(\{(z_{r,i},z_{e,i})\}_{i=1}^N\) denote the post-projector embeddings for all training samples. We compute global emebdding means $\mu_r^\star$ and $\mu_e^\star$, then apply the SCCR construction to obtain the frozen operator $T^\star$. At test time:
\[ z_{\perp}=(z_r-\mu_r^\star)T^\star, \quad z_e^{\mathrm{ctr}}=z_e-\mu_e^\star.\]
This ensures that the residual map is deterministic and consistent with the training distribution. 

\paragraph{Ridge Scaling and Calibration Dynamics.}
Because OAS adaptive regularization depends on the effective sample size $n_{\mathrm{eff}}$, the ridge levels $\lambda_j$ differ between training and calibration. During training, SCCR is computed per-minibatch ($n_{\mathrm{eff}} = B_{\mathrm{mini}}-1$), resulting in larger ridge levels $\lambda_j^{\mathrm{train}}$ that ensure stable Cholesky decomposition on small batches. In contrast, the calibration pass utilizes the full training set ($n_{\mathrm{eff}} = N-1$) and global means $\mu_r^\star, \mu_e^\star$. This typically reduces the ridge penalty ($\lambda_j^\star  <\lambda_j^{\mathrm{train}}$), yielding canonical correlations that are closer to their population values. The threshold $\tau$ must  be chosen relative to calibration-scale correlations, as minibatch spectra may lead to inconsistent component removal during inference. The complete algorithm is given in Appendix~\ref{app:algorithm}.%
\section{Analysis}
\label{sec:analysis}

\subsection{Pitfalls of Existing Residualization Techniques and Benefit of SCCR}
\label{sec:ped-example}
To show the trade-off between redundancy and complementary signal discussed in Section \ref{sec:method}, we consider a toy example with four raw factors: \(S\) is fully shared with the expert view, \(A\) is strongly shared, and \(F_1,F_2\) are weakly shared factors that we seek to retain.
Let \(S,A,F_1,F_2,N_A,N_1,N_2\sim\mathcal N(0,1)\) be mutually independent. The noise variables \(N_A,N_1,N_2\) form the expert proxies
\[
Q=\rho A+\sqrt{1-\rho^2}N_A,\qquad
P_i=\gamma F_i+\sqrt{1-\gamma^2}N_i,
\]
where \(Q\) is a noisy version of \(A\), and \(P_i\) is a noisy version of \(F_i\). We set \(\rho=.7\), \(\gamma=.3\), and define
{\small
\begin{equation*}
    z_r=(\sqrt2S,A,.1F_1,\sqrt{.02}F_2), \enspace
    z_e=(\sqrt2S,Q,.1P_1,\sqrt{.02}P_2).
\end{equation*}
}
The scaling gives both views coordinate variances \((2,1,.01,.02)\), while the corresponding correlations are \((1,\rho,\gamma,\gamma)\). This separates variance from sharing: the weakly shared factors also have low variance. The audit objective is to remove \(S,A\) while
retaining \(F_1,F_2\).
We analytically compute optimal linear recovery from each retained representation (Table~\ref{tab:synthetic}(a)). The operators and population residuals are given in Appendix~\ref{app:synthetic_operators}, Table~\ref{tab:synthetic_operators}.

Table~\ref{tab:synthetic}(a) compares how different residualization operators recover $A, F_1,$ and $F_2$. The results highlight a fundamental trade-off in information removal. Exact erasure methods remove any component of $z_r$ whose covariance with $z_e$ is non-zero. As $\operatorname{Cov}(z_r,z_e)$ is full rank both $F_i$ are discarded. Shared/private decomposition methods yield different results: D-CCA keeps substantial information about \(A\) (\(R_A^2=.857\)), while JIVE with \(q=1\) and AJIVE with initial ranks \((4,4)\) retain \(A\) completely. With \(q=2\), JIVE’s reconstruction objective instead yields an individual component that discards the low-variance \(F_1\) while retaining \(A\), illustrating a difference between reconstruction and the specified audit objective. Instead, the SCCR operator removes \(S,A\) and preserves both \(F_i\) for \(\gamma<\tau<\rho\), despite their nonzero covariance with the expert proxies \(P_i\). This distinction is clinically important because complementary signals can remain correlated with expert views. For example, in predicting antidepressant treatment dropout from EHR data, an expert view may be restricted to psychiatric codes, while a raw model sees all diagnostic groups. Broader non-psychiatric signals $F_i$ (e.g., chronic pain or metabolic comorbidity) may be predictive of antidepressant treatment dropout and not explicitly part of the psychiatric expert view. However, these signals may still be weakly proxied in the expert view through $P_i$ due to overlapping comorbidity patterns, medication refills, or increased EHR contact for complex patients.

\subsection{Theoretical Properties}
\label{sec:theory}

These results concern the SCCR operator rather than the prediction head. We prove in Appendix~\ref{app:stability} that the operator removes exactly those canonical directions with correlations $s_i \ge \tau$, ensuring that no correlation above $\tau$ remains in the resulting residual embedding. We next show that the calibrated
frozen operator is stable to small covariance perturbations.

\begin{theorem}[Frozen SCCR is stable under calibration perturbations]
\label{thm:stability_main}
Let \(T\) be the population SCCR operator and let \(\widehat T\) be the frozen
operator computed from a calibration set using the same OAS-selected branchwise ridge
rule. Under the local OAS regularity, population conditioning, and strict
threshold-gap assumptions stated formally in Appendix~\ref{app:stability}, if the
calibration covariance error is sufficiently small, then the calibrated operator
removes the same number of canonical directions as the population operator and
{\small\[
\|\widehat T-T\|_2
\le
C\left(
\|\widehat S_{rr}-S_{rr}\|_2
+
\|\widehat S_{ee}-S_{ee}\|_2
+
\|\widehat S_{re}-S_{re}\|_2
\right)
\]}
for a constant \(C>0\). Consequently, \(\widehat T\to T\) whenever the calibration
covariance blocks converge.
\end{theorem}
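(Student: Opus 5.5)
The proof is a chain of local Lipschitz estimates through the pipeline that builds \(T\). The pipeline is: covariance blocks, then OAS ridges \((\lambda_r,\lambda_e)\), then regularized auto-covariances \(R_{rr},R_{ee}\), then Cholesky factors \(L_r,L_e\), then the whitened cross-covariance \(M\), then the spectral projector \(P_H=U_HU_H^\top\), and finally \(T=L_r^{-\top}(I-P_H)L_r^\top\). Write \(\varepsilon=\|\widehat S_{rr}-S_{rr}\|_2+\|\widehat S_{ee}-S_{ee}\|_2+\|\widehat S_{re}-S_{re}\|_2\). We show that each intermediate quantity moves by \(O(\varepsilon)\) once \(\varepsilon\le\varepsilon_0\), for some \(\varepsilon_0\) depending only on the population quantities. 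We then compose the bounds.

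\textbf{Step 1 (ridge).} In a neighborhood of \(S_{rr}\), the OAS map \(S\mapsto\rho(S)\) in \eqref{eq:oas_rho} is a rational function of \(\operatorname{tr}(S)\) and \(\operatorname{tr}(S^2)\). By the local OAS regularity assumption, its denominator stays bounded away from zero and the clamp at \(1\) is inactive with \(\rho_r\le 1-\delta\). Hence \(\rho\) is locally Lipschitz. Since \(|\operatorname{tr}A-\operatorname{tr}B|\le d\|A-B\|_2\), it follows that \(|\hat\lambda_r-\lambda_r|\le c\,\|\widehat S_{rr}-S_{rr}\|_2\) via \eqref{eq:equiv_ridge}. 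Because \(n_{\mathrm{eff}}\) enters \(\rho\), we treat it as fixed: the same rule is used at calibration and population scale, or its effect is absorbed into the constant.

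\textbf{Step 2 (Cholesky and whitening).} We get \(\|\widehat R_{rr}-R_{rr}\|_2=O(\varepsilon)\), and \(\lambda_{\min}(R_{rr})\ge\lambda_r>0\) (population conditioning). Using the standard first-order perturbation bound for the Cholesky factor of a positive definite matrix, \(\|\widehat L_r-L_r\|_2\le c\,\kappa(R_{rr})\|R_{rr}\|_2^{-1/2}\|\widehat R_{rr}-R_{rr}\|_2\) for small perturbations. Combined with \(\|\widehat L^{-1}-L^{-1}\|\le\|\widehat L^{-1}\|\|L^{-1}\|\|\widehat L-L\|\), this makes \(L_r^{\pm1}\) and \(L_e^{-\top}\) Lipschitz. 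Expanding the product \(M=L_r^{-1}S_{re}L_e^{-\top}\) telescopically then gives \(\|\widehat M-M\|_2\le c\varepsilon\).

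\textbf{Step 3 (spectral projector; the main obstacle).} Here the strict threshold gap enters: \(s_{(k)}\ge\tau+\gamma\) and \(s_{(k+1)}\le\tau-\gamma\) for some \(\gamma>0\), where \(k=|H|\). By Weyl's inequality for singular values, \(|\hat s_i-s_i|\le\|\widehat M-M\|_2<\gamma\) once \(c\varepsilon<\gamma\). This yields \(|\widehat H|=|H|\), so the same number of directions is removed. The individual singular vectors may be non-unique inside \(H\), so we bound the projector rather than \(U_H\). We write \(P_H\) as the spectral projector of the symmetric matrix \(MM^\top\) onto eigenvalues at least \(\tau^2\). These are separated from the remaining eigenvalues by a gap of at least \((\tau+\gamma)^2-(\tau-\gamma)^2=4\tau\gamma\). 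The Davis--Kahan \(\sin\Theta\) theorem (or Wedin's theorem applied directly to \(M\)) then gives \(\|\widehat P_H-P_H\|_2\le \|\widehat M\widehat M^\top-MM^\top\|_2/(2\tau\gamma)\), which is \(O(\varepsilon)\) because \(\|M\|\le1\). The hard part is to ensure that the gap hypothesis is exactly what makes the discontinuous index set \(H\) locally constant; this step is where the assumption is indispensable.

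\textbf{Step 4 (assemble).} Write \(T=L_r^{-\top}L_r^\top-L_r^{-\top}P_HL_r^\top=I-L_r^{-\top}P_HL_r^\top\). Then
\[
\widehat T-T=-(\widehat L_r^{-\top}-L_r^{-\top})\widehat P_H\widehat L_r^\top-L_r^{-\top}(\widehat P_H-P_H)\widehat L_r^\top-L_r^{-\top}P_H(\widehat L_r-L_r)^\top .
\]
Each term is bounded by uniformly bounded factors, namely \(\|L_r^{\pm1}\|\) and \(\|\widehat L_r^{\pm1}\|\le 2\|L_r^{\pm1}\|\) for small \(\varepsilon\), times an \(O(\varepsilon)\) difference. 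This gives \(\|\widehat T-T\|_2\le C\varepsilon\), with \(C\) depending on \(\gamma,\tau\), the conditioning of \(R_{rr},R_{ee}\), and the OAS Lipschitz constants. Convergence \(\widehat T\to T\) follows immediately, since the bound applies as soon as the calibration covariance blocks fall within \(\varepsilon_0\) of the population blocks.
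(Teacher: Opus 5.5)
Your proposal is correct and follows the same chain as the paper's proof. That chain runs through a Lipschitz OAS ridge, perturbation of the whitening factors and of $M$, Weyl's inequality to freeze the selected rank, a $\sin\Theta$ bound for the projector, and a three-term telescoping of $T$.

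The differences are only in the sub-lemmas. You perturb the Cholesky factors directly, for which you should cite a non-asymptotic Cholesky perturbation theorem rather than a first-order expansion, and you apply Davis--Kahan to $MM^\top$ with a gap of order $\tau\gamma$. The paper instead first shows that $T$ is unchanged under symmetric square-root whitening, obtains $\|\widehat Q_r-Q_r\|_2\le\|\widehat R_r-R_r\|_2/(2\alpha_r)$ from the Sylvester equation, and applies Wedin's theorem to $M$ with gap $s_q-s_{q+1}$.
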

The exact statement and proof are given in Appendix~\ref{app:stability}.

\begin{corollary}[Stability of the frozen inference map]
\label{cor:stability_main}
Under the assumptions of Theorem~\ref{thm:stability_main}, let
\(g(z)=(z-\mu_r)T\) and
\(\widehat g(z)=(z-\widehat\mu_r)\widehat T\), where
\(\mu_r\) and \(\widehat\mu_r\) are the population and calibration
raw-branch means. For every fixed \(z\in\mathbb{R}^{d_r}\),
\[
\begin{aligned}
\|\widehat g(z)-g(z)\|_2
&\le \|z-\mu_r\|_2\|\widehat T-T\|_2\\
&\quad+\|\widehat\mu_r-\mu_r\|_2\|\widehat T\|_2.
\end{aligned}
\]
Thus \(\widehat g(z)\to g(z)\) whenever the calibration means and
covariance blocks converge. The detailed bound and proof are given in
Corollary~\ref{cor:frozen_appendix} of Appendix~\ref{app:stability}.
\end{corollary}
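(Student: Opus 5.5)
The corollary follows from an add-and-subtract decomposition combined with Theorem~\ref{thm:stability_main}. For fixed \(z\in\mathbb{R}^{d_r}\), I would write
\[
\widehat g(z)-g(z)=(z-\widehat\mu_r)\widehat T-(z-\mu_r)T
=(z-\mu_r)(\widehat T-T)+(\mu_r-\widehat\mu_r)\widehat T .
\]
This identity is checked by expanding: \((z-\mu_r)\widehat T-(z-\mu_r)T+\mu_r\widehat T-\widehat\mu_r\widehat T\), and the terms \(\mp\mu_r\widehat T\) combine to give \((z-\widehat\mu_r)\widehat T-(z-\mu_r)T\). Applying the triangle inequality and the submultiplicativity of the operator norm for row vectors, \(\|vA\|_2\le\|v\|_2\|A\|_2\), gives the displayed bound directly.

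For the convergence claim, I would bound the two terms separately.

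\textbf{First term.} Theorem~\ref{thm:stability_main} gives \(\|\widehat T-T\|_2\le C\sum\|\widehat S_{\cdot\cdot}-S_{\cdot\cdot}\|_2\) once the covariance error is small enough. Since \(\|z-\mu_r\|_2\) is fixed for fixed \(z\), the first term vanishes as the covariance blocks converge.

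\textbf{Second term.} Here we need \(\|\widehat T\|_2\) to stay bounded eventually. I would write \(\|\widehat T\|_2\le\|T\|_2+\|\widehat T-T\|_2\), which is at most \(\|T\|_2+1\) for sufficiently small covariance error. It also helps to note that \(\|T\|_2\le\kappa(L_r)\), because \(K\) is an orthogonal projector with \(\|K\|_2\le1\). Population conditioning then makes \(\|T\|_2\) finite, since \(R_{rr}\) is positive definite under the ridge. Consequently the second term is at most \(\|\widehat\mu_r-\mu_r\|_2(\|T\|_2+1)\to0\).

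The only delicate point is this uniform boundedness of \(\widehat T\). It depends on the regime in which Theorem~\ref{thm:stability_main} applies: the eventual smallness of the covariance error, so that the index set \(H\) is preserved and the Lipschitz bound holds. Given that theorem, the step reduces to the triangle inequality above, so I expect no real obstacle. Everything else is routine algebra.
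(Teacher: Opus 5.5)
Your proposal is correct and matches the paper's proof: the same identity \(\widehat g(z)-g(z)=(z-\mu_r)(\widehat T-T)+(\mu_r-\widehat\mu_r)\widehat T\), followed by the triangle inequality and \(\|vA\|_2\le\|v\|_2\|A\|_2\). The only difference is how you keep \(\|\widehat T\|_2\) bounded: the paper bounds it directly by \(\|\widehat Q_r^{-1}\|_2\|I-\widehat P\|_2\|\widehat Q_r\|_2\le\beta_r/\alpha_r\), using empirical square-root bounds from inside the theorem's proof, so it holds under exactly the stated small-error condition; your bound \(\|T\|_2+\|\widehat T-T\|_2\le\kappa(L_r)+1\) uses only the theorem's conclusion, at the cost of the extra requirement \(C\varepsilon_{\mathrm{cal}}\le 1\), which is harmless for the limit.
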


This result supports freezing SCCR after calibration. Under the theorem's assumptions, including a gap around the threshold \(\tau\), sufficiently small calibration perturbations produce small changes in the residualization operator. Once frozen, the residual map is deterministic at inference.

\begin{table*}[!t]
\centering
\auditTableStyle
\setlength{\abovecaptionskip}{\baselineskip}
\setlength{\belowcaptionskip}{\baselineskip}
\caption[Linear Factor Recovery]{
Synthetic Linear Factor Recovery.
(a) Analytic values.
(b) Image experiments: mean $\pm$ sample SD across five seeds.
}
\label{tab:synthetic}
\label{tab:toy_tau_probe_results}

\begin{tabular*}{\textwidth}
{@{\extracolsep{\fill}}l*{9}{c}@{}}
\toprule
\textbf{(a)} & Raw
& Feature proj.
& OLS
& Exact LEACE
& D-CCA
& JIVE$_1$
& JIVE$_2$
& AJIVE$_{4,4}$
& \textbf{SCCR}\\
\midrule
$R_A^2\downarrow$
& 1 & 0 & .510 & 0 & .857 & 1 & 1 & 1 & \textbf{0}\\
$R_{F_1}^2\uparrow$
& 1 & 0 & .910 & 0 & .977 & 1 & 0 & 1 & \textbf{1}\\
$R_{F_2}^2\uparrow$
& 1 & 0 & .910 & 0 & .977 & 1 & 1 & 1 & \textbf{1}\\
\bottomrule
\end{tabular*}

\begingroup
\setlength{\tabcolsep}{1.4pt}
\begin{tabular*}{\textwidth}
{@{\extracolsep{\fill}}l*{10}{c}@{}}
\toprule
\textbf{(b)} & \multicolumn{5}{c}{Common frozen embeddings}
& \multicolumn{5}{c}{End-to-end learned embeddings}\\
\cmidrule(lr){2-6}\cmidrule(lr){7-11}
& Raw
& D-CCA
& JIVE
& AJIVE$_{\mathrm{gap}}$
& AJIVE$_{4,4}$
& OLS
& R-LEACE
& VCCA
& DSN-MMD
& \textbf{SCCR}\\
\midrule
$R_A^2\downarrow$
& \auditsd{.826}{.021}
& \auditsd{.451}{.039}
& NC
& \auditsd{.445}{.014}
& \auditsd{.013}{.014}
& \auditsd{.669}{.272}
& \auditsd{.000}{.001}
& \auditsd{.841}{.267}
& \auditsd{.068}{.091}
& \auditsd{\mathbf{.051}}{\mathbf{.092}}\\
$R_{F_1}^2\uparrow$
& \auditsd{.693}{.028}
& \auditsd{.690}{.029}
& NC
& \auditsd{-.000}{.001}
& \auditsd{.010}{.013}
& \auditsd{.628}{.022}
& \auditsd{.001}{.003}
& \auditsd{.110}{.085}
& \auditsd{.315}{.215}
& \auditsd{\mathbf{.708}}{\mathbf{.044}}\\
$R_{F_2}^2\uparrow$
& \auditsd{.634}{.032}
& \auditsd{.632}{.030}
& NC
& \auditsd{.002}{.004}
& \auditsd{.010}{.011}
& \auditsd{.593}{.052}
& \auditsd{.003}{.008}
& \auditsd{.049}{.041}
& \auditsd{.157}{.169}
& \auditsd{\mathbf{.644}}{\mathbf{.082}}\\
\bottomrule
\end{tabular*}
\par
\endgroup
\end{table*}

\subsection{SCCR on Synthetic Data}

\label{sec:toy}
We extend the construction in Section~\ref{sec:ped-example} to image inputs and learned embeddings, testing the same removal and retention objective in the full training pipeline.

\begin{figure}[!b]
\centering
\includegraphics[scale=0.9]{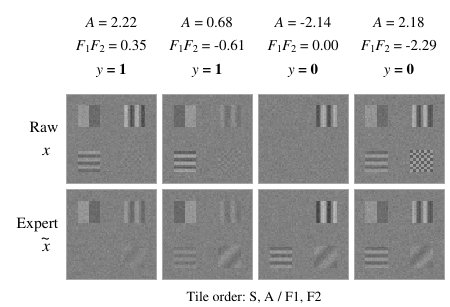}%
\caption{
Synthetic paired-view examples. The four raw patches encode
$S,A,F_1,F_2$, from upper left to lower right. The expert view contains
$S$ and noisy proxies of $A,F_1,F_2$, with correlations $.95,.30,.30$,
respectively, before independent pixel noise. %
}

\label{fig:synthetic_data_and_extra}
\end{figure}
\paragraph{Data generation.}
We use the same factors and expert-proxy definitions, now with \(\rho=.95\), \(\gamma=.3\), and rendering coefficients $c_r=(S,A,F_1,F_2)$ and $c_e=(S,Q,P_1,P_2).$ %
Unlike the theoretical example, all coefficients have unit variance. This avoids attenuating \(F_1,F_2\) in the images, while the larger \(\rho\) supplies a less noisy expert
proxy of \(A\). Each coefficient multiplies a fixed texture in a separate \(16\times16\) tile of a \(64\times64\) grayscale image. Thus, the factor values control the texture amplitudes. Independent pixel noise is added to each view (Figure~\ref{fig:synthetic_data_and_extra}). These rendering coefficients are distinct from the learned embeddings. The binary label is \(y=\mathbb I\{A+F_1F_2>0\}\), so prediction depends on both \(A\) and the interaction between \(F_1,F_2\). At each epoch, we generate 4,096 new raw--expert image pairs. We use separate, fixed sets of 4,096 pairs for operator calibration, 1,024 for checkpoint and probe selection, and 4,096 for final testing. SCCR is trained only on \(y\), without additional supervision on the underlying factors. %

\paragraph{Models and training.}
We train the dual-branch architecture from Section~\ref{sec:method} on the binary label \(y\).%
We use EfficientNetV2-S \cite{pmlr-v139-tan21a} initialized with ImageNet-pretrained weights \cite{5206848} for both branch encoders \(f_{\mathrm{raw}}\) and \(f_{\mathrm{expert}}\), with all encoder weights unfrozen and fine-tuned during training. %
We compare recovery of \(A,F_1,F_2\) with end-to-end OLS and regularized LEACE-style adaptations, unsupervised VCCA-private, and DSN-MMD. We also apply JIVE, AJIVE, and D-CCA to common frozen embeddings from the dual-branch baseline. For all models we train for 100 epochs with Adam, learning rate \(10^{-4}\), batch size 512, 64-dimensional embeddings, and \(\tau=0.4\) for SCCR. We note that these baselines were not originally formulated for the specific audit objective considered here: using an expert view as a reference to suppress strongly shared linear directions while retaining weakly shared predictive information. We adapt their retained representations to this objective and describe the implementation details in Appendix~\ref{apptrain}. 

\paragraph{Evaluation via linear probes.}
After training, we freeze the encoders and operators and fit linear and quadratic probes to recover \(A,F_1,F_2\), and \(F_1F_2\) from the raw, expert, and retained representations. We fit probes on training data, select their regularization on validation data, and report test-set \(R^2\). The audit objective is to reduce linear recovery of \(A\) while retaining \(F_1,F_2\). Quadratic probes test whether information remains accessible through nonlinear relationships.

\paragraph{Recovery Results.}
Table~\ref{tab:toy_tau_probe_results}(b) shows the expected pattern. Running for five different seeds, SCCR gives residual linear \(R_A^2=.051\pm.092\), while retaining \(R_{F_1}^2=.708\pm.044\) and \(R_{F_2}^2=.644\pm.082\); its native-head AUROC is \(0.9928\pm0.0025\). Values are mean \(\pm\) standard deviation across seeds. Quadratic recovery of \(A\) remains high (\(.818\pm.057\)), so these results support reduced linear recovery.%

We evaluate SCCR on dermatologist-annotated skin images and deidentified data obtained through hospital collaborations: longitudinal EHRs from four regional hospitals (Sites A–D) and coronary angiograms from Site A. Both hospital studies received IRB approval with a waiver of consent.
\subsection{Skin-lesion classification}
 
We use the SkinCon \citep{skincon} dataset to evaluate whether SCCR retains predictive information beyond an expert predictor. We then trace that remaining information to an omitted clinical concept. We examine three questions: whether the residual improves diagnosis prediction, whether an omitted concept explains part of this gain, and whether restoring that concept changes its linear accessibility after retraining.
\paragraph{Data and expert views}
SkinCon provides dermatologist annotations for 48 binary concepts on images from Fitzpatrick17k \citep{fitz} and DDI \citep{ddi}. Preprocessing details for the image can be found in Appendix \ref{app:skincon}.  We predict malignant versus benign or non-neoplastic lesions using the image as the raw view and 45 annotations as the expert view, withholding Crust, Erosion, and Scale. We then restore Crust to test whether a known clinical descriptor explains part of the residual’s predictive value. %
\begin{table*}[!t]
\centering
\auditTableStyle
\setlength{\abovecaptionskip}{\baselineskip}
\setlength{\belowcaptionskip}{\baselineskip}
\caption[Clinical Prediction and Audit Results]{
Clinical prediction and residual audits. (a) skin-lesion classification, SCCR $\tau=.40$; (b) antidepressant treatment dropout, held-out-site AUROC mean $\pm$ sample SD over five seeds; (c) coronary artery disease, one seed. Brackets: pointwise 95\% paired-bootstrap intervals conditional on fitted models.
}
\label{tab:clinical_audits}

\begin{tabular*}{\textwidth}
{@{\extracolsep{\fill}}l*{5}{c}@{}}
\toprule
\textbf{(a)} Seed
& Diagnosis gain $G$
& Gain over OLS
& Fixed residual $\Delta G_C$
& Retraining $\Delta D_C$
& External DDI gain $G$\\
\midrule
909
& \auditci{.073}{.027}{.114}
& \auditci{.105}{.039}{.174}
& \auditci{.020}{.008}{.034}
& \auditci{.016}{.002}{.029}
& \auditci{-.187}{-.266}{-.109}\\
910
& \auditci{.051}{.003}{.093}
& \auditci{.050}{.005}{.103}
& \auditci{.009}{-.001}{.020}
& \auditci{.045}{.003}{.088}
& \auditci{-.229}{-.309}{-.147}\\
\bottomrule
\end{tabular*}

\begin{tabular*}{\textwidth}
{@{\extracolsep{\fill}}l*{6}{c}@{}}
\toprule
\textbf{(b)} & \multicolumn{3}{c}{Dropout prediction}
& \multicolumn{3}{c}{Linear code recovery}\\
\cmidrule(lr){2-4}\cmidrule(lr){5-7}
Site
& Expert-only
& Raw-only
& SCCR ($\tau=.40$)
& Included: raw
& Included: residual
& Omitted: residual\\
\midrule
B
& \auditsd{.565}{.001}
& \auditsd{.619}{.001}
& \auditsd{.613}{.002}
& \auditsd{.885}{.021}
& \auditsd{.640}{.020}
& \auditsd{.920}{.050}\\
C
& \auditsd{.574}{.002}
& \auditsd{.611}{.003}
& \auditsd{.605}{.002}
& \auditsd{.874}{.017}
& \auditsd{.674}{.052}
& \auditsd{.933}{.063}\\
D
& \auditsd{.555}{.002}
& \auditsd{.585}{.006}
& \auditsd{.578}{.008}
& \auditsd{.927}{.016}
& \auditsd{.686}{.072}
& \auditsd{.864}{.103}\\
\bottomrule
\end{tabular*}

\begin{tabular*}{\textwidth}
{@{\extracolsep{\fill}}lccc@{}}
\toprule
\textbf{(c)} $\tau$
& Native AUROC
& Removed directions
& Residual gain over own expert embedding\\
\midrule
.10
& \auditci{.792}{.775}{.811}
& 4
& \auditci{-.0009}{-.0055}{.0040}\\
.30
& \auditci{.788}{.770}{.807}
& 0
& \auditci{-.0005}{-.0057}{.0049}\\
\bottomrule
\end{tabular*}
\end{table*}
\paragraph{Models and training.} We use an ImageNet-pretrained EfficientNetV2-S image encoder and a multilayer perceptron for the expert annotations, with 32-dimensional projected embeddings. SCCR is trained end to end on the diagnosis label with \(\tau=.4\), without an auxiliary concept-prediction loss. We compare SCCR with the same baseline methods as in Section~\ref{sec:toy}. We train for two seeds, training details can be found in Appendix \ref{app:skincon}.
\paragraph{Results: Predictive gain and a linear response to Crust restoration}
SCCR’s residual reduces diagnosis log loss by .073 and .051 across two seeds, exceeding the gains from end-to-end OLS (Table~\ref{tab:clinical_audits}). 
With the residual fixed, supplying the true Crust annotations reduces this gain more than supplying shuffled annotations, by $0.020$ and $0.009$; the interval includes zero for the second seed. This suggests that an omitted clinical descriptor explains part of the additional predictive value. After retraining, true rather than shuffled restoration also increases the drop in linear Crust recovery from raw to residual embeddings, although nonlinear probes still recover Crust. Residual gains are negative on DDI ($-0.187$ and $-0.229$). Thus, the audit provides partial evidence linking the predictive gain to Crust and shows a linear response to its restoration. The predictive benefit does not transfer to DDI. 
\subsection{Antidepressant Treatment Dropout from EHR Data}

\label{sec:trd_dropout}
\paragraph{Data and expert views} We study antidepressant treatment dropout using longitudinal EHRs from Sites A–D. The raw view contains counts of CPT, ICD, and RxNorm codes and demographics. The expert view contains the same demographics and a clinician-curated subset of psychiatric codes. 
\paragraph{Models and training}
We train and validate on Site A and test on Sites B–D, asking whether the broader coded record adds predictive information beyond the selected psychiatric codes. We also omit psychotherapy code CPT 90806, then restore its true or shuffled values to examine what the residual retains and whether this code explains its predictive gain. Model and training details are provided in Appendix~\ref{app:psyctraj-details}.

\paragraph{Results: Predictive gain across three held-out hospitals}
Across the three held-out hospitals, SCCR at \(\tau=.40\) improves AUROC over the psychiatric view by .048, .031, and .023, while remaining close to raw-only prediction (Table~\ref{tab:clinical_audits}). In the omission audit, the psychotherapy code remains recoverable from the residual (.864–.933 AUROC), while linear recovery of included psychiatric codes falls from .874–.927 to .640–.686. Nonlinear recovery remains substantial. Thus, SCCR retains predictive value beyond the psychiatric view across three held-out hospitals while reducing linear recovery of included codes. The omitted psychotherapy code remains recoverable, but adding it to the expert predictor with the residual fixed leaves the predictive gain largely unchanged.

\subsection{Coronary angiograms} %
\paragraph{Data and expert views} We study CAD prediction using coronary angiograms from Site A only. Each example pairs a single angiogram frame with its right-coronary-artery vessel mask, generated by a U-Net \citep{10.1007/978-3-319-24574-4_28} trained on ARCADE \citep{arcade}. The mask provides a summary of vessel morphology while omitting grayscale intensity. We ask how predictions depend on the mask and the full image. 
\begin{figure}[!b]
\centering
\includegraphics[width=\linewidth]{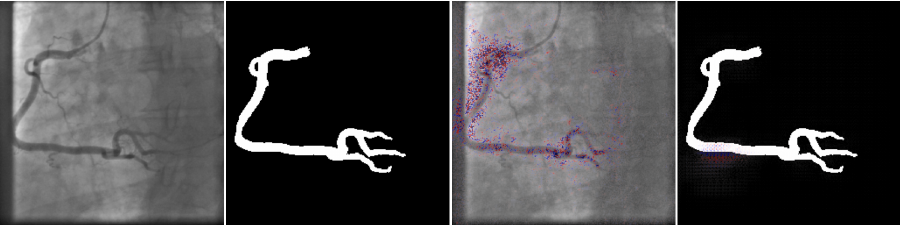}\\[2pt]
\includegraphics[width=\linewidth]{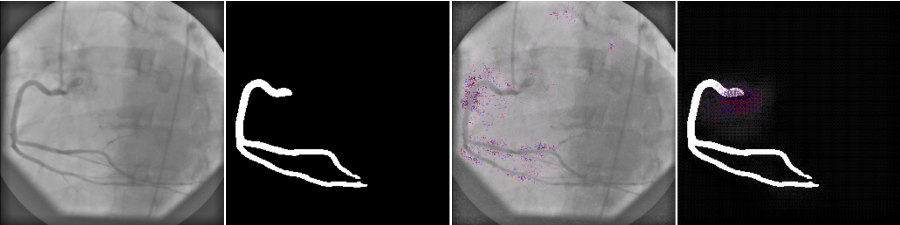}
\caption{Coronary angiogram examples. From left: raw image,
vessel mask, and Grad$\times$Input overlays for the raw and
mask inputs, respectively.} 
\label{fig:angio_saliency}
\end{figure}
\paragraph{Models and training.} For each threshold \(\tau\), we train a separate model and evaluate on a held-out video split. Branch perturbations and Grad×Input assess dependence on the raw image and mask; implementation details are provided in the Appendix \ref{app:angio-sccr}
\paragraph{Results: Separating branch reliance from predictive gain}
Reliance on the two branches differs across separately trained models. At \(\tau=.5\), AUROC is .790; zeroing the mask reduces it by .261, while zeroing the raw image has essentially no effect. At \(\tau=.7\), AUROC is .835; zeroing the raw image reduces it by .387, while zeroing the mask has negligible effect. These controls show a shift from mask-based to image-based prediction. However, the residual audits at \(\tau=.1\) and \(.3\) show no clear gain beyond the corresponding expert embeddings (Table~\ref{tab:clinical_audits}). The audit separates which input a model uses from whether its residual improves prediction beyond the expert view. %
Saliency maps (Figure~\ref{fig:angio_saliency}) suggest that the raw branch attends to grayscale intensity and contrast patterns not represented in the binary expert mask. These cues may reflect vessel visibility and local contrast, but because the model processes single frames, they describe static appearance rather than temporal mechanisms such as delayed filling or collateral flow. Their clinical relevance and predictive value beyond the expert view remain unresolved.

\section{Discussion and Conclusion}
SCCR audits raw representations by removing strongly shared linear directions while retaining weakly shared directions that may be necessary to maintain prediction quality. Once trained, the residual can be probed to test what predictive information remains beyond the expert predictor. We show that our approach is distinct in being able to do this compared to a large number of other baselines that fail to do both, and demonstrate value on both an open clinical dataset and two real, clinical problems. SCCR only requires one parameter, the threshold \(\tau\), which can be found by identifying the minimum value at which the prediction quality of our dual branch architecture is equivalent to simply training on the original raw data. In terms of future work, SCCR does not guarantee removal of arbitrary nonlinear dependence. 
SCCR does not guarantee removal of arbitrary nonlinear dependence. Its linear formulation provides an explicit removal rule, and probes of known concepts let us measure how their accessibility changes with the expert view. This makes linear removal useful for testing specific audit questions, while nonlinear probes reveal information that remains recoverable. Future work should extend removal to nonlinear dependence while preserving weakly shared predictive information.

\bibliographystyle{plainnat}  
\bibliography{example_paper}

\clearpage
\onecolumn
\appendix
\section*{Impact Statement}

This work is motivated by settings in which practitioners already have an expert view of what a model should use, for example in healthcare, credit, or other high-stakes decision problems. The main potential benefit is auditability: the method can help determine whether an expert view is sufficient and, when it is not, what complementary signal a model uses beyond it. This can help surface clinically meaningful information, but it can also reveal spurious shortcuts or hidden confounding that should not be trusted. At the same time, the method should not be taken as a guarantee that all unwanted information has been removed. It targets shared linear structure. Any deployment in high-stakes settings therefore requires domain oversight, task-specific validation, and careful review of the residual branch before drawing conclusions about fairness, safety, or causality.

\section{Residualization operators for the Gaussian example}
\label{app:synthetic_operators}

This appendix defines the transformations compared in
Table~\ref{tab:synthetic}(a) and derives their population residuals
and factor recovery. The methods use the same paired embeddings,
but solve different problems. We evaluate their retained outputs
against the stated audit objective: removing $S,A$ while retaining
$F_1,F_2$. No prediction label, encoder training, or fitted probe
is used in these analytic calculations.

\subsection{Construction and notation}
\label{app:gaussian_construction}

As in Section~\ref{sec:ped-example}, let
$S,A,F_1,F_2,N_A,N_1,N_2$ be mutually independent standard Gaussian
variables. Set $\rho=.7$, $\gamma=.3$, and
\[
Q=\rho A+\sqrt{1-\rho^2}N_A,\qquad
P_i=\gamma F_i+\sqrt{1-\gamma^2}N_i,\quad i=1,2.
\]
The raw and expert embeddings are row vectors:
\[
z_r=(\sqrt2S,A,.1F_1,\sqrt{.02}F_2),\qquad
z_e=(\sqrt2S,Q,.1P_1,\sqrt{.02}P_2).
\]
Thus $S$ is exactly shared, $Q$ is a strongly correlated proxy of
$A$, and the $P_i$ are weakly correlated proxies of the $F_i$.
The proxy noise is part of the expert embedding; it is not an
additional matrix perturbation that is removed before fitting.

Matrices $Z_r,Z_e$ contain paired embeddings as rows. Following
Section~\ref{sec:method}, their centered versions are
\[
Z_{r,c}=Z_r-\mathbf1\mu_r,\qquad
Z_{e,c}=Z_e-\mathbf1\mu_e.
\]
The population means are zero. In this appendix, the second-moment
symbols denote their population counterparts:
\[
S_{rr}=\mathbb E[z_r^\top z_r],\quad
S_{ee}=\mathbb E[z_e^\top z_e],\quad
S_{re}=\mathbb E[z_r^\top z_e],\quad S_{er}=S_{re}^\top.
\]
Because each proxy has unit variance and different factor pairs
are independent,
\begin{equation}
\begin{aligned}
S_{rr}=S_{ee}&=\operatorname{diag}(2,1,.01,.02),\\
S_{re}&=\operatorname{diag}(2,\rho,.01\gamma,.02\gamma)
       =\operatorname{diag}(2,.7,.003,.006).
\end{aligned}
\label{eq:gaussian_moments}
\end{equation}
All three matrices have rank four. The two blocks have the same
total variance, $\operatorname{tr}(S_{rr})=
\operatorname{tr}(S_{ee})=3.03$.

We write $\dagger$ for the Moore--Penrose pseudoinverse,
$\operatorname{col}(D)$ for a matrix's column space, and
$\mathcal T_k(D)$ for its rank-$k$ truncated singular value
decomposition (SVD). The Frobenius norm satisfies
$\|D\|_F^2=\sum_{ij}D_{ij}^2$.
Population calculations use exact moments and zero ridge.
In particular, $T$ denotes the population SCCR operator here;
$T^\star$ remains the frozen operator fitted on calibration data
in the main method.

\subsection{Optimal linear recovery}
\label{app:gaussian_recovery}

For a target $W\in\{A,F_1,F_2\}$ and a retained row representation
$z$, define
\begin{equation}
R_W^2(z)=1-
\frac{\min_{\beta_0,\beta}\mathbb E[(W-\beta_0-z\beta)^2]}
{\operatorname{Var}(W)}.
\label{eq:gaussian_recovery_definition}
\end{equation}
This measures recovery from the retained representation alone.
All targets have variance one. For centered $W,z$, the optimal
intercept is zero. Writing $S_{zz}=\mathbb E[z^\top z]$ and
$k_W=\mathbb E[z^\top W]$, expansion of the prediction error gives
\[
\mathbb E[(W-z\beta)^2]
=\operatorname{Var}(W)-2\beta^\top k_W+\beta^\top S_{zz}\beta.
\]
Every zero-variance linear combination of $z$ has zero covariance
with $W$, so $k_W$ lies in the range of $S_{zz}$.
The normal equations $S_{zz}\beta=k_W$ therefore admit the solution
$\beta=S_{zz}^{\dagger}k_W$, giving
\begin{equation}
R_W^2(z)=\frac{k_W^\top S_{zz}^{\dagger}k_W}
{\operatorname{Var}(W)}.
\label{eq:gaussian_recovery_covariance}
\end{equation}
This includes singular outputs: for $z=0$, the optimal prediction
is the mean and $R_W^2=0$.

If the only relevant residual coordinate is a nonzero multiple of
a scalar $u$ with positive variance, independent of all other
residual coordinates, then
\begin{equation}
R_W^2=\frac{\operatorname{Cov}(W,u)^2}
{\operatorname{Var}(W)\operatorname{Var}(u)}.
\label{eq:gaussian_scalar_recovery}
\end{equation}
The nonzero coordinate scale cancels. Thus the raw embedding has
$(R_A^2,R_{F_1}^2,R_{F_2}^2)=(1,1,1)$: the three factors are
recovered by dividing their coordinates by $1,.1,\sqrt{.02}$.
Small variance alone does not reduce optimal population recovery,
although it changes the cost of reconstruction errors.

Each population output below is a fixed linear function of the
Gaussian sources. Its conditional mean for each target is therefore
linear, so a nonlinear squared-error predictor cannot improve these
particular population values. This fact does not extend to the
learned nonlinear image embeddings or to predicting the interaction
$F_1F_2$.

\subsection{Feature-space projection}
\label{app:gaussian_feature_projection}

This baseline projects raw feature coordinates onto the orthogonal
complement of the expert feature span:
\begin{equation}
P_e=Z_{e,c}^{\dagger}Z_{e,c},\qquad
Z_{\mathrm{FP}}=Z_{r,c}(I-P_e).
\label{eq:gaussian_feature_projection}
\end{equation}
Here $P_e$ acts in feature space, not sample space. This construction
assumes the displayed raw and expert feature coordinates have been
identified; it is not a general alignment method for unrelated
embedding coordinates.

Since $S_{ee}$ is positive definite, the population expert support
spans all four feature directions. Therefore $P_e=I_4$, and
\[
z_{\mathrm{FP}}=(0,0,0,0),\qquad
(R_A^2,R_{F_1}^2,R_{F_2}^2)=(0,0,0).
\]
A centered Gaussian sample with at least five observations also has
full column rank almost surely. This feature-space projection is
different from the sample-space regression projection used by OLS.

\subsection{Full OLS residualization}
\label{app:gaussian_ols}

\paragraph{Definition and transformation.}
OLS predicts the raw embedding from the expert embedding by solving
\[
\min_{B_{\mathrm{OLS}}}
\|Z_{r,c}-Z_{e,c}B_{\mathrm{OLS}}\|_F^2.
\]
The minimum-norm coefficient and fitted residual are
\begin{equation}
\begin{aligned}
B_{\mathrm{OLS}}
 &=(Z_{e,c}^\top Z_{e,c})^\dagger Z_{e,c}^\top Z_{r,c},\\
Z_{\mathrm{OLS}}
 &=Z_{r,c}-Z_{e,c}B_{\mathrm{OLS}}
  =(I-Z_{e,c}Z_{e,c}^{\dagger})Z_{r,c}.
\end{aligned}
\label{eq:gaussian_ols_map}
\end{equation}
The population normal equations give
\[
B_{\mathrm{OLS}}^{\mathrm{pop}}
=S_{ee}^{-1}S_{er}=\operatorname{diag}(1,\rho,\gamma,\gamma).
\]
A new residual uses both observations,
$z_{\mathrm{OLS}}=z_r-z_eB_{\mathrm{OLS}}^{\mathrm{pop}}$, yielding
\begin{equation}
z_{\mathrm{OLS}}
=(0,A-\rho Q,.1(F_1-\gamma P_1),
\sqrt{.02}(F_2-\gamma P_2)).
\label{eq:gaussian_ols_output}
\end{equation}

\paragraph{Recovery and interpretation.}
For a standardized pair $u,v$ with correlation $0\le c<1$, let
$u_{\mathrm{OLS}}=u-cv$. Then
\[
\operatorname{Var}(u_{\mathrm{OLS}})=1-c^2,\qquad
\operatorname{Cov}(u,u_{\mathrm{OLS}})=1-c^2,
\]
so Equation~\eqref{eq:gaussian_scalar_recovery} gives $R_u^2=1-c^2$.
Different factor pairs remain independent, and therefore
\[
(R_A^2,R_{F_1}^2,R_{F_2}^2)
=(1-\rho^2,1-\gamma^2,1-\gamma^2)=(.510,.910,.910).
\]
OLS makes its residual uncorrelated with the expert view:
$\mathbb E[z_{\mathrm{OLS}}^\top z_e]
=S_{re}-(B_{\mathrm{OLS}}^{\mathrm{pop}})^\top S_{ee}=0$.
It does not erase the underlying factor $A$, because the expert
contains only its noisy proxy. It also retains most of each $F_i$;
this is partial attenuation, not complete loss. If the expert view
is supplied alongside the residual, the original raw embedding can
be reconstructed as $z_r=z_{\mathrm{OLS}}+
 z_eB_{\mathrm{OLS}}^{\mathrm{pop}}$. Thus these residual-only
recovery values do not establish a failure of combined prediction.

\subsection{Exact LEACE}
\label{app:gaussian_leace}

\paragraph{Definition and transformation.}
We use the continuous-concept covariance formulation of
LEACE~\citep{10.5555/3666122.3669006}, treating the entire expert
embedding as the concept to erase. For centered row embeddings,
its minimum-distortion problem is
\begin{equation}
\min_{T_{\mathrm{LEACE}}}
\mathbb E\|z_rT_{\mathrm{LEACE}}-z_r\|_2^2
\quad\text{subject to}\quad
T_{\mathrm{LEACE}}^\top S_{re}=0.
\label{eq:gaussian_leace_objective}
\end{equation}
The constraint makes every expert coordinate linearly unpredictable
from the transformed raw embedding under squared error. Unlike OLS,
once fitted this map acts on the raw observation alone.

For positive definite $S_{rr}$, use its Cholesky factor and define
\[
S_{rr}=L_rL_r^\top,\qquad C_{\mathrm L}=L_r^{-1}S_{re},\qquad
\Pi_{\mathrm L}=C_{\mathrm L}C_{\mathrm L}^{\dagger}.
\]
Here $\Pi_{\mathrm L}$ projects onto the whitened cross-covariance
column space. The row-action LEACE map is
\begin{equation}
T_{\mathrm{LEACE}}
=L_r^{-\top}(I-\Pi_{\mathrm L})L_r^\top.
\label{eq:gaussian_leace_map}
\end{equation}
To see this, set $D=L_r^\top T_{\mathrm{LEACE}}$. The objective becomes
$\|D-L_r^\top\|_F^2$ and the constraint becomes
$D^\top C_{\mathrm L}=0$, or equivalently $\Pi_{\mathrm L}D=0$.
Projection of each column of $L_r^\top$ onto this feasible subspace
gives $D=(I-\Pi_{\mathrm L})L_r^\top$, proving the formula.
The corresponding column-action map is its transpose; in general
it is not a symmetric Euclidean projection.

\paragraph{Residual and recovery.}
Here $S_{re}$ is full rank, so $C_{\mathrm L}$ is full rank and
$\Pi_{\mathrm L}=I_4$. Consequently,
\[
T_{\mathrm{LEACE}}=0,\qquad z_{\mathrm{LEACE}}=0,\qquad
(R_A^2,R_{F_1}^2,R_{F_2}^2)=(0,0,0).
\]
Equivalently, the constraint $T_{\mathrm{LEACE}}^\top S_{re}=0$
forces $T_{\mathrm{LEACE}}=0$ because $S_{re}$ is invertible.
The nonzero covariance of each $F_i$ with its proxy is sufficient
for erasure here, even though that covariance is small. These are
exact, unregularized LEACE values, not the regularized LEACE-style
adaptation used with learned embeddings.

\subsection{Decomposition-based CCA}
\label{app:gaussian_dcca}

\paragraph{Definition and common scores.}
D-CCA~\citep{Shu02012020} separates common and distinctive variation;
it is different from Deep CCA. We apply its population signal
decomposition to all four canonical pairs, treating the observed
embeddings, including the proxy noise, as the signal vectors.
We do not add covariance denoising or empirical signal-rank selection.

Unregularized whitening of Equation~\eqref{eq:gaussian_moments}
gives canonical pairs $(S,S),(A,Q),(F_1,P_1),(F_2,P_2)$ with
correlations $(1,\rho,\gamma,\gamma)$. For a standardized scalar
pair $(u,v)$ with correlation $c\in(0,1)$, D-CCA first selects a
unit-variance common direction maximizing
\[
\operatorname{Corr}(u,h)^2+\operatorname{Corr}(v,h)^2.
\]
The sum and difference of $u,v$ are orthogonal, with variances
$2(1+c)$ and $2(1-c)$. The maximizing direction is therefore
$h=(u+v)/\sqrt{2(1+c)}$.
Write the common score as $a_c(u+v)$ and the distinctive scores as
\[
d_r=u-a_c(u+v),\qquad d_e=v-a_c(u+v).
\]
D-CCA requires the two distinctive scores to be uncorrelated:
\[
0=\operatorname{Cov}(d_r,d_e)
=c-2a_c(1+c)+2a_c^2(1+c).
\]
Solving this quadratic and choosing the branch whose common variance
increases from zero with $c$ gives
\begin{equation}
a_c=\frac12\left(1-\sqrt{\frac{1-c}{1+c}}\right),\qquad
 d_c(u,v)=u-a_c(u+v).
\label{eq:gaussian_dcca_score}
\end{equation}
The common variance is $1-\sqrt{1-c^2}$.

\paragraph{Reconstruction and residual.}
To state the full matrix transformation, use the unregularized
Cholesky factors $S_{rr}=L_rL_r^\top$, $S_{ee}=L_eL_e^\top$ and
the SVD $L_r^{-1}S_{re}L_e^{-\top}=U\operatorname{diag}(s)V^\top$.
The standardized canonical score matrices are
\[
\Xi_r=Z_{r,c}L_r^{-\top}U,\qquad
\Xi_e=Z_{e,c}L_e^{-\top}V.
\]
The raw reconstruction loading matrix is $L_rU$, giving
\[
C_r=(\Xi_r+\Xi_e)\operatorname{diag}(a_{s_j})U^\top L_r^\top,
\qquad Z_{\mathrm{DCCA}}=Z_{r,c}-C_r.
\]
These expressions use population moments and their sample analogues
when fitting; no ridge is used in the analytic calculation.
In the displayed coordinate system this reduces to
\[
C_r=(Z_{r,c}+Z_{e,c})
\operatorname{diag}(\tfrac12,a_\rho,a_\gamma,a_\gamma).
\]
Define $D_A=d_\rho(A,Q)$ and $D_i=d_\gamma(F_i,P_i)$. Then
\begin{equation}
z_{\mathrm{DCCA}}=(0,D_A,.1D_1,\sqrt{.02}D_2).
\label{eq:gaussian_dcca_output}
\end{equation}
As in OLS, this paired construction uses the expert observation
when forming a new distinctive output.

\paragraph{Recovery and interpretation.}
For $0\le c<1$, substitution into Equation~\eqref{eq:gaussian_dcca_score}
gives
\[
\operatorname{Var}(d_r)=1-c,\qquad
\operatorname{Cov}(u,d_r)=\frac{1-c+\sqrt{1-c^2}}2.
\]
It follows that
\begin{equation}
R_u^2(d_r)
=\frac{(1-c+\sqrt{1-c^2})^2}{4(1-c)}
=\frac{1+\sqrt{1-c^2}}2.
\label{eq:gaussian_dcca_recovery}
\end{equation}
Therefore
\[
R_A^2=\frac{1+\sqrt{.51}}2\approx.857,\qquad
R_{F_1}^2=R_{F_2}^2=\frac{1+\sqrt{.91}}2\approx.977.
\]
At $c=1$, the views coincide and the distinctive output is exactly
zero; Equation~\eqref{eq:gaussian_dcca_recovery} cannot be used at
that endpoint because its derivation divides by $1-c$.
This removes $S$ completely.

D-CCA makes $d_r$ and $d_e$ uncorrelated, not $d_r$ and the original
expert score $v$. In fact,
\[
\operatorname{Cov}(d_r,v)
=\frac{c-1+\sqrt{1-c^2}}2>0\quad(0<c<1).
\]
Its substantial recovery of $A$ is therefore consistent with its
objective. It is not a failure to satisfy the D-CCA constraint.

\subsection{JIVE with prescribed ranks}
\label{app:gaussian_jive}

\paragraph{Definition and transformation.}
JIVE~\citep{10.1214/12-AOAS597} approximates each centered block by
joint and individual components. With samples as rows, write
\[
Z_{r,c}=J_r+I_r+E_r,\qquad Z_{e,c}=J_e+I_e+E_e,
\]
where $E_r,E_e$ are reconstruction errors, and let $J=[J_r\ J_e]$.
For joint rank $q$ and individual ranks $(4-q,4-q)$, we use the
reconstruction problem
\begin{equation}
\begin{aligned}
\min_{J_r,J_e,I_r,I_e}\quad
 &\|Z_{r,c}-J_r-I_r\|_F^2+
   \|Z_{e,c}-J_e-I_e\|_F^2,\\
\text{subject to}\quad
 &\operatorname{rank}(J)\le q,\qquad
   \operatorname{rank}(I_r),\operatorname{rank}(I_e)\le4-q,\\
 &J^\top I_r=0,\qquad J^\top I_e=0.
\end{aligned}
\label{eq:gaussian_jive_objective}
\end{equation}
The last constraints separate joint and individual sample-score
spaces. They do not require $I_r^\top I_e=0$.
For a fixed joint space with orthogonal projector $H_J$, the
least-squares joint reconstruction and individual output are
\begin{equation}
J_r=H_JZ_{r,c},\qquad
I_r=\mathcal T_{4-q}((I-H_J)Z_{r,c}),
\label{eq:gaussian_jive_map}
\end{equation}
with analogous expressions for the expert block. The table evaluates
$I_r$, not the full remainder $Z_{r,c}-J_r$.
The derivations below concern global minima, not a guarantee for
an alternating SVD solver. There is no columnwise standardization
or whitening. Equal whole-block normalization multiplies the
population objective by $1/3.03$ and leaves its minimizers unchanged.

\paragraph{Population score spaces.}
A scalar score is a linear combination of the seven Gaussian
sources. Its inner product with another score is their covariance.
The block score spaces are
\[
\mathcal U_r=\operatorname{span}\{S,A,F_1,F_2\},\qquad
\mathcal U_e=\operatorname{span}\{S,Q,P_1,P_2\}.
\]
Their intersection is $\operatorname{span}\{S\}$. Indeed, equality
of a raw and an expert linear combination forces the coefficients
of $N_A,N_1,N_2$ to vanish; since $\rho,\gamma<1$, no other
expert score lies exactly in the raw score space.

\paragraph{Joint rank one.}
For $q=1$, the following population decomposition has zero error:
\[
\begin{aligned}
j_r=j_e&=(\sqrt2S,0,0,0),\\
i_r&=(0,A,.1F_1,\sqrt{.02}F_2),\\
i_e&=(0,Q,.1P_1,\sqrt{.02}P_2).
\end{aligned}
\]
It satisfies joint rank one, individual ranks three, and
joint--individual orthogonality. Zero is the global minimum of
the nonnegative reconstruction objective. At any zero-error solution,
each rank-four block is the sum of orthogonal components of ranks
one and three. Its joint space must consequently belong to both
block score spaces. Their intersection forces it to be the $S$
direction. Thus every global optimum has this raw individual output,
and
\[
(R_A^2,R_{F_1}^2,R_{F_2}^2)=(1,1,1).
\]

\paragraph{Joint rank two: a global lower bound.}
For $q=2$, the pooled fitted matrix has rank at most
$2+2+2=6$. To obtain a population bound, write
$\eta=(S,A,F_1,F_2,N_A,N_1,N_2)$ and define the coefficient matrix
$\Lambda\in\mathbb R^{7\times8}$ by $[z_r\ z_e]=\eta\Lambda$.
Since $\mathbb E[\eta^\top\eta]=I_7$, mean squared reconstruction
error equals squared Frobenius error between coefficient matrices.

For a pair of unit-variance scores $u,v$ with correlation $c$, the
sum and difference directions have pooled eigenvalues $1+c$ and
$1-c$. Applying their coordinate scales, the seven nonzero pooled
covariance eigenvalues are
\begin{equation}
\begin{aligned}
&4,\ 1+\rho,\ 1-\rho,\ .02(1+\gamma),\ .02(1-\gamma),\
.01(1+\gamma),\ .01(1-\gamma)\\
&\hspace{2em}=4,1.70,.30,.026,.014,.013,.007.
\end{aligned}
\label{eq:gaussian_jive_spectrum}
\end{equation}
The smallest corresponds to $F_1-P_1$. A rank-six fitted coefficient
matrix must lose at least its energy, $.007$. More explicitly,
let $\widehat\Lambda$ have rank at most six and take a unit vector
$h$ in the null space of $\widehat\Lambda^\top$. Then
\[
\|\Lambda-\widehat\Lambda\|_F^2
\ge\|h^\top\Lambda\|_2^2
=h^\top\Lambda\Lambda^\top h\ge.007.
\]
The smallest eigenvalue is simple. Equality therefore determines
the unique pooled fit obtained by deleting only that contrast.

\paragraph{Joint rank two: a feasible optimum.}
The bound is attained by
\begin{equation}
\begin{aligned}
j_r=j_e&=(\sqrt2S,0,.1(F_1+P_1)/2,0),\\
i_r&=(0,A,0,\sqrt{.02}F_2),\\
i_e&=(0,Q,0,\sqrt{.02}P_2).
\end{aligned}
\label{eq:gaussian_jive_q2_components}
\end{equation}
The joint scores are $S,F_1+P_1$, and the two individual spaces
are spanned by $A,F_2$ and $Q,P_2$. They are orthogonal to the
joint scores and have the prescribed ranks. The only errors are
$+.1(F_1-P_1)/2$ and $-.1(F_1-P_1)/2$ in the third coordinate,
with total error
\[
2\frac{.01}{4}\mathbb E[(F_1-P_1)^2]
=.01(1-\gamma)=.007.
\]
This proves global optimality. The unique pooled fit has rank-four
blocks with score-space intersection
$\operatorname{span}\{S,F_1+P_1\}$. The rank and orthogonality
constraints force both joint components to span this intersection,
so their individual components are also fixed across global minima.
Consequently,
\[
(R_A^2,R_{F_1}^2,R_{F_2}^2)=(1,0,1).
\]
Thus reconstruction discards the low-variance $F_1$ contrast while
retaining $A$. The fit need not satisfy a generative model with
independent reconstruction errors; the result concerns the stated
reconstruction objective.

\paragraph{Individual output, full remainder, and preprocessing.}
At the rank-two optimum, joint subtraction alone gives
\[
z_r-j_r=(0,A,.1(F_1-P_1)/2,\sqrt{.02}F_2).
\]
Its $F_1$ recovery is $(1-\gamma)/2=.35$, not zero. Rank-two
individual truncation removes this remaining contrast. The table's
zero therefore concerns the individual component specifically.

Whitening also changes the result. With all marginal variances set
to one, the smallest pooled eigenvalue is $1-\rho=.3$, associated
with $A-Q$. A rank-two joint fit can then use $S,(A+Q)/2$, leaving
raw individual output $(0,0,F_1,F_2)$ and recovery $(0,1,1)$.
This follows from the same rank-six lower bound and a feasible
fit attaining error $.3$.
A different rank and output can also succeed without whitening:
under the aggregate joint-rank constraint in
Equation~\eqref{eq:gaussian_jive_objective}, $q=3$ and individual
ranks $(1,1)$ admit the joint components
$(\sqrt2S,A,0,0)$ and $(\sqrt2S,Q,0,0)$, with individual components
containing only $F_2,P_2$. This fit attains the rank-five lower
bound $.013+.007=.020$, while its full raw-minus-joint remainder
retains both $F_i$ and removes $A$.
Each block's joint rank is two in this last control, so a convention
requiring joint rank three in every block excludes it.
These controls limit the claim to the stated ranks, feature scales,
and choice of retained output.

\subsection{AJIVE with initial ranks $(4,4)$}
\label{app:gaussian_ajive}

\paragraph{Initial signal extraction.}
AJIVE~\citep{10.1016/j.jmva.2018.03.008} first extracts each block's
signal space, estimates joint directions from their angles, and
then reconstructs joint and individual components. For block
$j\in\{r,e\}$ and initial rank $r_j$, write
\[
Z_{j,c}\approx U_{j,r_j}\Sigma_{j,r_j}V_{j,r_j}^\top,
\qquad Q_j=U_{j,r_j}.
\]
The columns of $Q_j$ are orthonormal sample-score directions.
Let $\sigma_{j,k}$ denote the block singular values. We use the
initial threshold
$t_j=(\sigma_{j,r_j}+\sigma_{j,r_j+1})/2$ and reuse it for the
final individual truncation.
Here both initial ranks are four, so all nonzero block directions
are retained. In an exact-moment realization, after normalization
by $\sqrt{n-1}$ their singular values are
$\sqrt2,1,\sqrt{.02},.1$. We set $\sigma_{j,5}=0$ when no fifth
singular value exists. The midpoint rule therefore gives
$t_r=t_e=.05$ on this normalized scale.

\paragraph{Joint directions and rank selection.}
Form the concatenated score matrix
\[
M_{\mathrm J}=[Q_r\ Q_e].
\]
This is separate from the whitened cross-covariance $M$ used by SCCR.
If $c_i$ are the singular values of $Q_r^\top Q_e$, the squared
singular values of $M_{\mathrm J}$ are $1+c_i$ and $1-c_i$.
Indeed, they are the eigenvalues of
\[
M_{\mathrm J}^\top M_{\mathrm J}
=\begin{pmatrix}I&Q_r^\top Q_e\\Q_e^\top Q_r&I\end{pmatrix}.
\]
The $c_i$ are cosines of principal angles between the retained
score spaces. Under exact population moments here they are
$(1,\rho,\gamma,\gamma)$, yielding
\begin{equation}
\sigma_i(M_{\mathrm J})^2
\in\{2,1.7,1.3,1.3,.7,.7,.3,0\}.
\label{eq:gaussian_ajive_spectrum}
\end{equation}

AJIVE compares these values with Wedin perturbation and random
score-space bounds. For resampling draw $b$, let
$\widetilde Q_{j,b}$ and $\widetilde V_{j,b}$ have four orthonormal
columns in the complements of the retained sample-score and feature
loading spaces, respectively. The perturbation estimate used here is
\[
\beta_{j,b}=\min\!\left\{1,
\frac{\max\{\|Z_{j,c}^\top\widetilde Q_{j,b}\|_2,
                 \|Z_{j,c}\widetilde V_{j,b}\|_2\}}
{\sigma_{j,4}}\right\},
\]
where $\|\cdot\|_2$ is the spectral norm. In the two-block rule,
each draw gives the Wedin squared-singular-value bound
$\max\{2-\beta_{r,b}^2-\beta_{e,b}^2,1\}$. The random bound
comes from the largest squared singular value of two independent
random orthonormal score matrices. The joint cutoff is the larger
of the chosen Wedin and random-bound quantiles (the fifth and
95th percentiles, respectively, in our implementation).

This row uses a zero-perturbation population interpretation: every
block direction, including the proxy noise within $Q,P_i$, is
retained in the initial rank-four signal space. There are no
nonzero discarded singular values. In an exact-moment matrix
realization, products of the data with orthogonal complements of
its retained loading or score space are therefore zero, giving
$\beta_{r,b}=\beta_{e,b}=0$ for every draw. The Wedin bound is
exactly two. The random
bound is at most two since
$\sigma_{\max}([Q_r^{\mathrm{rand}}\ Q_e^{\mathrm{rand}}])^2
=1+\|(Q_r^{\mathrm{rand}})^\top Q_e^{\mathrm{rand}}\|_2\le2$.
Thus the joint cutoff is two.

We explicitly include equality at this boundary: a squared singular
value of two is accepted. The local implementation uses
$\sigma_i^2+\varepsilon>2$ with $\varepsilon=10^{-10}$, which has
the same outcome for this separated spectrum. A strict $\sigma_i^2>2$
rule without this convention would reject even the exactly shared
$S$. Under the stated convention, only $S$ is selected; $A$ and
both $F_i$ remain outside the joint space.

The complement-sampling step also needs enough ambient dimensions.
A literal four-feature rank-four block has no feature complement.
For a numerical realization of this population rule, we either
explicitly declare verified zero perturbation or isometrically
embed each four-coordinate block into at least eight feature
dimensions, using at least nine observations to allow four
complement directions within centered sample space.
Such padding preserves the nonzero singular values, sample-score
spaces, and factor recovery. It is not extra observation noise.

\paragraph{Reconstruction and recovery.}
After joint selection, AJIVE checks that each joint score has
sufficient projected energy in both blocks. The selected score
$S$ has loading norm $\sqrt2$ in each block, above $.05$, so it
passes. Let $H_J$ project onto this joint score space. The joint
and individual reconstructions are
\[
J_r=H_JZ_{r,c},\qquad
I_r=\sum_{\sigma_k((I-H_J)Z_{r,c})>t_r}
\sigma_k u_kv_k^\top,
\]
where the sum uses the SVD of the remainder and thresholds on the
same scale as the original block singular values.
In the population calculation,
\[
z_r-j_r=(0,A,.1F_1,\sqrt{.02}F_2).
\]
Its nonzero normalized singular values $1,\sqrt{.02},.1$ all exceed
$.05$. The final individual component is therefore the whole
remainder, equivalently $\mathcal T_3((I-H_J)Z_{r,c})$, and
\[
(R_A^2,R_{F_1}^2,R_{F_2}^2)=(1,1,1).
\]
This conclusion depends on the initial ranks, perturbation model,
and boundary convention. It does not establish that AJIVE always
retains $A$: different initial ranks or nonzero discarded variation
can change its joint-rank decision.

\subsection{SCCR and matched top-2 CCA removal}
\label{app:gaussian_sccr}

\paragraph{Definition and transformation.}
For this analytic comparison we use the SCCR construction in
Section~\ref{sec:method} with $\lambda_r=\lambda_e=0$, without OAS.
Thus $R_{rr}=S_{rr}$ and $R_{ee}=S_{ee}$, and their Cholesky factors
are
\[
L_r=L_e=\operatorname{diag}(\sqrt2,1,.1,\sqrt{.02}).
\]
The whitened cross-covariance is
\[
M=L_r^{-1}S_{re}L_e^{-\top}
=\operatorname{diag}(1,\rho,\gamma,\gamma).
\]
In the SVD $M=U\operatorname{diag}(s)V^\top$, we may take
$U=V=I_4$ and $s=(1,\rho,\gamma,\gamma)$. A shared rotation within
the repeated $\gamma$ subspace does not change the retained space.
For $H=\{i:s_i\ge\tau\}$, SCCR forms
\[
K=I-U_HU_H^\top,\qquad
T=L_r^{-\top}KL_r^\top,\qquad Z_\perp=Z_{r,c}T.
\]
Here $K$ is an orthogonal projector in whitened raw coordinates.
The raw-coordinate map $T$ is generally an oblique projector; it
is diagonal in this particular construction.

\paragraph{Residual and recovery.}
For $\gamma<\tau<\rho$, the selected directions are $S,A$. Hence
\[
K=T=\operatorname{diag}(0,0,1,1),\qquad
z_\perp=(0,0,.1F_1,\sqrt{.02}F_2).
\]
The factor $A$ is independent of the retained output, while both
$F_i$ are recovered exactly by undoing their nonzero scales:
\[
(R_A^2,R_{F_1}^2,R_{F_2}^2)=(0,1,1).
\]
The result matches the stated audit objective, although the retained
coordinates still have nonzero covariance with $P_1,P_2$.
SCCR therefore does not impose exact covariance erasure against
the whole expert view.

For completeness, because the selection rule uses $s_i\ge\tau$,
the same result also holds at $\tau=\rho$. For
$0<\tau\le\gamma$, all four directions are removed; for
$\rho<\tau<1$, only $S$ is removed and all three evaluated factors
remain recoverable. In the reported interval, SCCR is exactly
removal of the top two raw canonical directions. Thresholding
rather than specifying this rank does not distinguish their
operators on these fixed embeddings.

\subsection{Summary and scope of the comparison}
\label{app:gaussian_summary}

Table~\ref{tab:synthetic_operators} collects the transformations and
population outputs. The recovery values follow from
Equation~\eqref{eq:gaussian_recovery_covariance}. For brevity in the
table, write $D_A=d_\rho(A,Q)$ and $D_i=d_\gamma(F_i,P_i)$ as
in Equation~\eqref{eq:gaussian_dcca_score}.

\begin{table*}[t]
\centering
\caption{Residualization transformations and population outputs.
JIVE uses global optima with joint rank $q$ and individual ranks
$(4-q,4-q)$. AJIVE uses initial ranks $(4,4)$ and the zero-perturbation
boundary convention in Appendix~\ref{app:gaussian_ajive}.
SCCR uses zero ridge and $\gamma<\tau<\rho$.}
\label{tab:synthetic_operators}
\setlength{\tabcolsep}{3pt}
\begin{tabularx}{\textwidth}{@{}p{.20\textwidth}p{.31\textwidth}X@{}}
\toprule
Method / output & Transformation & Population output \\
\midrule
Raw embedding & $Z_{r,c}$
& $(\sqrt2S,A,.1F_1,\sqrt{.02}F_2)$ \\
Feature-space projection & $Z_{r,c}(I-Z_{e,c}^\dagger Z_{e,c})$
& $(0,0,0,0)$ \\
OLS residual & $Z_{r,c}-Z_{e,c}B_{\mathrm{OLS}}$
& $\begin{aligned}(0,\,&A-\rho Q,\\
&.1(F_1-\gamma P_1),\\
&\sqrt{.02}(F_2-\gamma P_2))\end{aligned}$ \\
Exact LEACE & $Z_{r,c}T_{\mathrm{LEACE}}$ & $(0,0,0,0)$ \\
D-CCA distinctive & $Z_{r,c}-C_r$
& $(0,D_A,.1D_1,\sqrt{.02}D_2)$ \\
JIVE, $q=1$, individual & $\mathcal T_3((I-H_J)Z_{r,c})$
& $(0,A,.1F_1,\sqrt{.02}F_2)$ \\
JIVE, $q=2$, individual & $\mathcal T_2((I-H_J)Z_{r,c})$
& $(0,A,0,\sqrt{.02}F_2)$ \\
AJIVE $(4,4)$, individual & $\mathcal T_3((I-H_J)Z_{r,c})$
& $(0,A,.1F_1,\sqrt{.02}F_2)$ \\
SCCR & $Z_\perp=Z_{r,c}T$ & $(0,0,.1F_1,\sqrt{.02}F_2)$ \\
\bottomrule
\end{tabularx}
\end{table*}

\paragraph{What the example establishes.}
Exact LEACE and feature-space projection remove the weak factors
in this full-rank construction. OLS and D-CCA retain them but also
retain information about $A$. The stated JIVE individual components
follow reconstruction ranks and variance, while the stated AJIVE
output follows its initial signal spaces and perturbation cutoff.
SCCR achieves the chosen suppression--retention pattern by removing
only the two strongly shared canonical directions. These are
comparisons under specified formulations, not claims that all
other rank choices, preprocessing choices, or outputs must fail.

The outputs also differ in their use of paired observations. OLS
and the displayed D-CCA map require both views for each new output.
LEACE and SCCR act on the raw view alone after fitting. JIVE and
AJIVE first decompose paired data in sample-score space; using a
fitted decomposition for new observations requires an additional
inference rule. In the population cases reported here, their raw
individual outputs happen to be coordinate selections of $z_r$,
so a raw-only linear extension can reproduce them exactly. This
does not make finite-sample paired and raw-only extensions identical.

The analytic example demonstrates the operator mechanism. It does
not establish that neural encoders learn these factors, that a
learned expert embedding exposes $A$, or that linear suppression
implies nonlinear independence. Those questions require separate
learning and recovery measurements. Likewise, choosing this audit
objective does not establish that retaining $F_i$ or removing $A$
is clinically useful in any particular application.

\section{Theory and proofs}
\label{app:stability}
\subsection{Stability of the Frozen SCCR Operator}
Here we prove the stability of the frozen SCCR operator used after calibration. 
\paragraph{Perturbation results used in the proof.}
We use two standard Weyl perturbation inequalities.

First, for Hermitian matrices \(A,E\in\mathbb{R}^{d\times d}\), with eigenvalues
ordered nonincreasingly,
\[
\lambda_1(A)\ge\cdots\ge\lambda_d(A),
\]
Weyl's eigenvalue perturbation inequality gives
\[
|\lambda_i(A+E)-\lambda_i(A)|
\le
\|E\|_2,
\qquad
i=1,\ldots,d.
\]
Equivalently,
\[
\lambda_{\min}(A+E)
\ge
\lambda_{\min}(A)-\|E\|_2.
\]
We use this form to show that empirical positive definiteness of
\(\widehat R_j\) follows from population positive definiteness of \(R_j\) when
\(\|\widehat R_j-R_j\|_2\) is small.

Second, for arbitrary rectangular matrices \(A,E\), with singular values ordered
nonincreasingly,
\[
\sigma_1(A)\ge\cdots\ge\sigma_m(A),
\qquad
m=\min(d_1,d_2),
\]
Weyl's singular-value perturbation inequality gives
\[
|\sigma_i(A+E)-\sigma_i(A)|
\le
\|E\|_2,
\qquad
i=1,\ldots,m.
\]
We use this form with \(A=M\) and \(E=\widehat M-M\) to show that the empirical
singular values cannot cross the threshold \(\tau\) when
\(\|\widehat M-M\|_2\) is smaller than the threshold gap. References can be found in \cite{hornjohnson2013matrix, bhatia1997matrix, stewart1990matrix}

\paragraph{The OAS-to-ridge map.}
We first make explicit the scalar map that converts an auto-covariance matrix into the
ridge level used by SCCR. For a symmetric positive semidefinite matrix
\(S\in\mathbb{R}^{d\times d}\) and effective sample size \(B\), define
\[
m(S)=\frac{\operatorname{tr}(S)}{d},
\]
\[
a(S)=
\left(1-\frac{2}{d}\right)\operatorname{tr}(S^2)
+
\operatorname{tr}(S)^2,
\]
and
\[
b(S)=
\left(B+1-\frac{2}{d}\right)
\left(
\operatorname{tr}(S^2)
-
\frac{\operatorname{tr}(S)^2}{d}
\right).
\]
The OAS shrinkage coefficient is the safeguarded and clipped version of
\[
\frac{a(S)}{b(S)}.
\]
Concretely, the implementation avoids division by zero and clips the coefficient below
one. We write the resulting coefficient as
\[
\rho_{\mathrm{OAS}}(S;B,d)\in[0,\rho_{\max}],
\qquad
\rho_{\max}<1.
\]
The equivalent ridge level is then
\[
\lambda_{\mathrm{OAS}}(S;B,d)
=
\frac{\rho_{\mathrm{OAS}}(S;B,d)m(S)}
{1-\rho_{\mathrm{OAS}}(S;B,d)}.
\]
Thus the OAS ridge rule is simply the deterministic map
\[
S\mapsto \lambda_{\mathrm{OAS}}(S;B,d).
\]
In the stability result below, we assume this map is locally Lipschitz at the population
auto-covariances. That is, for \(j\in\{r,e\}\), there are constants
\(\ell_j<\infty\) and radii \(\delta_j>0\) such that whenever
\(\|\widehat S_{jj}-S_{jj}\|_2\le\delta_j\),
\[
\left|
\lambda_{\mathrm{OAS}}(\widehat S_{jj};B,d_j)
-
\lambda_{\mathrm{OAS}}(S_{jj};B,d_j)
\right|
\le
\ell_j\|\widehat S_{jj}-S_{jj}\|_2.
\]
This assumption concerns only the scalar OAS ridge rule. It is separate from, and weaker
than, assuming empirical conditioning directly.%

\paragraph{Symmetric-square-root form.}
The methods uses Cholesky decomposition \(R_j=L_jL_j^\top\). For the proof, it is cleaner to
use symmetric square roots
\[
Q_j=R_j^{1/2}.
\]
This does not change the unwhitened SCCR operator. Indeed, if
\(R_j=L_jL_j^\top=Q_j^2\), then \(L_j=Q_jO_j\) for an orthogonal matrix \(O_j\). The
Cholesky-whitened cross-covariance is therefore an orthogonal rotation of the
symmetric-square-root-whitened cross-covariance. The selected left singular projector is
rotated by the same orthogonal matrix, and the unwhitened operator is unchanged:
\[
L_r^{-\top}(I-P_L)L_r^\top
=
Q_r^{-1}(I-P)Q_r.
\]
We therefore prove stability for the equivalent operator
\[
T=Q_r^{-1}(I-P)Q_r.
\]

\begin{theorem}[Detailed stability bound for OAS-calibrated SCCR]
\label{thm:frozen_appendix}
Let \(S_{rr},S_{ee},S_{re}\) be the population covariance blocks of the centered
post-training raw and expert embeddings. Define the population OAS-selected ridge levels
\[
\lambda_r=\lambda_{\mathrm{OAS}}(S_{rr};B,d_r),
\qquad
\lambda_e=\lambda_{\mathrm{OAS}}(S_{ee};B,d_e),
\]
and the population regularized auto-covariances
\[
R_r=S_{rr}+\lambda_r I,
\qquad
R_e=S_{ee}+\lambda_e I.
\]
In addition to the local OAS Lipschitz assumption and the threshold-gap assumption below,
assume only population, not empirical, conditioning:
\[
R_r\succ0,
\qquad
R_e\succ0.
\]
Let
\[
\gamma_r=\lambda_{\min}(R_r)>0,
\qquad
\gamma_e=\lambda_{\min}(R_e)>0.
\]
Define
\[
Q_r=R_r^{1/2},
\qquad
Q_e=R_e^{1/2},
\qquad
M=Q_r^{-1}S_{re}Q_e^{-1}.
\]
Let
\[
M=U\operatorname{diag}(s_1,\ldots,s_m)V^\top,
\qquad
s_1\ge\cdots\ge s_m\ge0,
\qquad
m=\min(d_r,d_e).
\]
Assume that \(q\) population singular values are above threshold $\tau$ and that there
is a strict threshold gap:
\[
s_q>\tau>s_{q+1},
\qquad
1\le q<m.
\]
Let
\[
P=U_{1:q}U_{1:q}^\top,
\qquad
T=Q_r^{-1}(I-P)Q_r.
\]

Let \(\widehat S_{rr},\widehat S_{ee},\widehat S_{re}\) be the corresponding calibration
covariance blocks. Define
\[
\widehat\lambda_r
=
\lambda_{\mathrm{OAS}}(\widehat S_{rr};B,d_r),
\qquad
\widehat\lambda_e
=
\lambda_{\mathrm{OAS}}(\widehat S_{ee};B,d_e),
\]
\[
\widehat R_r
=
\widehat S_{rr}+\widehat\lambda_r I,
\qquad
\widehat R_e
=
\widehat S_{ee}+\widehat\lambda_e I,
\]
\[
\widehat Q_r=\widehat R_r^{1/2},
\qquad
\widehat Q_e=\widehat R_e^{1/2},
\qquad
\widehat M=\widehat Q_r^{-1}\widehat S_{re}\widehat Q_e^{-1}.
\]
Let \(\widehat P\) be the empirical left singular projector selected by the same
threshold rule \(s_i\ge\tau\), and define
\[
\widehat T=\widehat Q_r^{-1}(I-\widehat P)\widehat Q_r.
\]

Then, for sufficiently small calibration error, the empirical thresholding step selects
the same rank \(q\), and there exists a constant \(C>0\) such that
\[
\|\widehat T-T\|_2
\le
C\left(
\|\widehat S_{rr}-S_{rr}\|_2
+
\|\widehat S_{ee}-S_{ee}\|_2
+
\|\widehat S_{re}-S_{re}\|_2
\right).
\]
Consequently, \(\widehat T\to T\) whenever the calibration covariance blocks converge.
\end{theorem}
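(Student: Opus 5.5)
The argument is a chain of perturbation bounds propagated through the pipeline $S\mapsto\lambda\mapsto R\mapsto Q\mapsto M\mapsto P\mapsto T$. Write $\epsilon=\|\widehat S_{rr}-S_{rr}\|_2+\|\widehat S_{ee}-S_{ee}\|_2+\|\widehat S_{re}-S_{re}\|_2$ and assume throughout that $\epsilon\le\epsilon_0$ for a radius $\epsilon_0$ to be shrunk finitely many times.

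\textbf{Step 1 (ridge and conditioning).} By the local OAS Lipschitz assumption, once $\epsilon\le\delta_j$,
\[
\|\widehat R_j-R_j\|_2\le(1+\ell_j)\|\widehat S_{jj}-S_{jj}\|_2 .
\]
Weyl's eigenvalue inequality then gives $\lambda_{\min}(\widehat R_j)\ge\gamma_j/2$ once this error is at most $\gamma_j/2$, so the empirical whitening is well defined.

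\textbf{Step 2 (square roots and their inverses).} I will use the standard bound for symmetric square roots of matrices with spectrum bounded below by $\gamma/2$:
\[
\|\widehat Q_j-Q_j\|_2\le\frac{\|\widehat R_j-R_j\|_2}{\lambda_{\min}(\widehat Q_j)+\lambda_{\min}(Q_j)}\le\frac{\|\widehat R_j-R_j\|_2}{2\sqrt{\gamma_j/2}} .
\]
This follows by writing $\widehat Q X+XQ=\widehat R-R$ for $X=\widehat Q-Q$ and using a Sylvester-equation estimate. For the inverses, the identity $\widehat Q^{-1}-Q^{-1}=\widehat Q^{-1}(Q-\widehat Q)Q^{-1}$ gives a bound with factor $\sqrt{2}/\gamma_j$. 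All norms $\|Q_j\|,\|\widehat Q_j\|,\|Q_j^{-1}\|,\|\widehat Q_j^{-1}\|$ are uniformly bounded on the neighborhood by $\sqrt{\|R_j\|+\text{err}}$ and $\sqrt{2/\gamma_j}$.

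\textbf{Step 3 (whitened cross-covariance).} Telescope
\[
\widehat M-M=(\widehat Q_r^{-1}-Q_r^{-1})\widehat S_{re}\widehat Q_e^{-1}+Q_r^{-1}(\widehat S_{re}-S_{re})\widehat Q_e^{-1}+Q_r^{-1}S_{re}(\widehat Q_e^{-1}-Q_e^{-1}).
\]
This yields $\|\widehat M-M\|_2\le C_M\epsilon$, using $\|\widehat S_{re}\|\le\|S_{re}\|+\epsilon$.

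\textbf{Step 4 (threshold stability and projector).} This is the main obstacle. Let $g=\min(s_q-\tau,\ \tau-s_{q+1})>0$. If $C_M\epsilon<g$, Weyl's singular-value inequality gives $\widehat s_q>\tau>\widehat s_{q+1}$, so the empirical rule $\widehat s_i\ge\tau$ selects exactly $q$ directions and $\widehat P$ is the projector onto the top-$q$ left singular subspace of $\widehat M$. For the projector itself, I apply Wedin's $\sin\Theta$ theorem: the top-$q$ left singular subspace is perturbed with
\[
\|\widehat P-P\|_2\le\frac{2\|\widehat M-M\|_2}{s_q-s_{q+1}} ,
\]
valid once $\|\widehat M-M\|\le(s_q-s_{q+1})/4$. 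The gap $s_q-s_{q+1}\ge 2g>0$ comes from the strict threshold gap. Alternatively, I can apply Davis--Kahan to $MM^\top$, whose eigenvalue gap is $s_q^2-s_{q+1}^2$, noting $\|\widehat M\widehat M^\top-MM^\top\|\le(2\|M\|+\|\widehat M-M\|)\|\widehat M-M\|$. This is the step where care is needed. If $U$ has repeated or zero singular values inside the blocks, this is harmless, because only the spectral projector across the gap matters. Note also that $q<m$ is used, and that when $d_r>m$ the extra null directions lie in the complement block, with gap still $s_q-0\ge s_q-s_{q+1}$.

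\textbf{Step 5 (assemble).} Telescope once more:
\[
\widehat T-T=(\widehat Q_r^{-1}-Q_r^{-1})(I-\widehat P)\widehat Q_r-Q_r^{-1}(\widehat P-P)\widehat Q_r+Q_r^{-1}(I-P)(\widehat Q_r-Q_r).
\]
Each term is bounded by a constant times $\epsilon$ using Steps 2--4 and $\|I-\widehat P\|\le1$. Collecting constants gives $C$, which depends on $\gamma_r,\gamma_e,\ell_r,\ell_e,\|S_{\cdot\cdot}\|$ and $g$, and the smallness condition is $\epsilon\le\epsilon_0$, the minimum of all the radii used. Convergence $\widehat T\to T$ is immediate from the bound.
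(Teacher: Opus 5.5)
Your proposal is correct and follows essentially the same route as the paper's proof: OAS-Lipschitz control of the ridge, Weyl for empirical conditioning, the Sylvester-equation bound for $\widehat Q_j-Q_j$, three-term telescoping for $\widehat M-M$, Weyl to lock the selected rank at $q$, Wedin's $\sin\Theta$ theorem for $\widehat P-P$, and a final three-term telescoping for $\widehat T-T$. The differences are cosmetic and do not change the argument: you order the last telescoping differently (but equivalently), give an explicit Wedin constant of $2$, and bound $\|\widehat Q_r\|_2\le(\|R_r\|_2+\|\widehat R_r-R_r\|_2)^{1/2}$ directly instead of using the paper's $\beta_r$.
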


\paragraph{Meaning of ``sufficiently small calibration error.''}
The phrase means the following. Define
\[
\varepsilon_{\mathrm{cal}}
=
\|\widehat S_{rr}-S_{rr}\|_2
+
\|\widehat S_{ee}-S_{ee}\|_2
+
\|\widehat S_{re}-S_{re}\|_2.
\]
There exists a number \(\varepsilon_0>0\), depending on the population conditioning
\(\gamma_r,\gamma_e\), the local Lipschitz constants of the OAS ridge maps, and the
threshold gap \(s_q>\tau>s_{q+1}\), such that all conclusions of the theorem hold
whenever
\[
\varepsilon_{\mathrm{cal}}\le\varepsilon_0.
\]
In the proof, the smallness requirement is used for three concrete purposes:
\begin{enumerate}[leftmargin=*]
    \item to ensure the OAS ridge maps are evaluated inside their local Lipschitz
    neighborhoods;
    \item to ensure empirical conditioning follows from population conditioning, e.g.
    \[
    \widehat R_r\succeq \frac{\gamma_r}{2}I,
    \qquad
    \widehat R_e\succeq \frac{\gamma_e}{2}I;
    \]
    \item to ensure the empirical singular values do not cross the threshold \(\tau\).
\end{enumerate}
No empirical conditioning assumption is imposed separately; it is derived locally
from \(R_r\succ0\) and \(R_e\succ0\).

\begin{proof}
All norms below are spectral norms. Define
\[
\varepsilon_{rr}
=
\|\widehat S_{rr}-S_{rr}\|_2,
\qquad
\varepsilon_{ee}
=
\|\widehat S_{ee}-S_{ee}\|_2,
\qquad
\varepsilon_{re}
=
\|\widehat S_{re}-S_{re}\|_2.
\]
Thus
\[
\varepsilon_{\mathrm{cal}}
=
\varepsilon_{rr}+\varepsilon_{ee}+\varepsilon_{re}.
\]

\paragraph{Step 1: perturbation of the OAS-selected ridge levels.}
By local Lipschitzness of the OAS ridge maps, for sufficiently small
\(\varepsilon_{rr}\) and \(\varepsilon_{ee}\),
\[
|\widehat\lambda_r-\lambda_r|
\le
\ell_r\varepsilon_{rr},
\qquad
|\widehat\lambda_e-\lambda_e|
\le
\ell_e\varepsilon_{ee}.
\]
Now
\[
\widehat R_r-R_r
=
(\widehat S_{rr}-S_{rr})
+
(\widehat\lambda_r-\lambda_r)I.
\]
Therefore by the triangle inequality:
\[
\|\widehat R_r-R_r\|_2
\le
\|\widehat S_{rr}-S_{rr}\|_2
+
|\widehat\lambda_r-\lambda_r|
\le
(1+\ell_r)\varepsilon_{rr}.
\]
Similarly for $e$,
\[
\|\widehat R_e-R_e\|_2
\le
(1+\ell_e)\varepsilon_{ee}.
\]
For shorthand, define
\[
\eta_r=(1+\ell_r)\varepsilon_{rr},
\qquad
\eta_e=(1+\ell_e)\varepsilon_{ee}.
\]
Then
\[
\|\widehat R_r-R_r\|_2\le \eta_r,
\qquad
\|\widehat R_e-R_e\|_2\le \eta_e.
\]

\paragraph{Step 2: empirical conditioning follows from population conditioning.}
By assumption,
\[
\lambda_{\min}(R_r)=\gamma_r>0,
\qquad
\lambda_{\min}(R_e)=\gamma_e>0.
\]
Suppose the calibration error is small enough that
\[
\eta_r\le \frac{\gamma_r}{2},
\qquad
\eta_e\le \frac{\gamma_e}{2},
\]
where
\[
\eta_r=(1+\ell_r)\varepsilon_{rr},
\qquad
\eta_e=(1+\ell_e)\varepsilon_{ee}.
\]
Equivalently, it is enough to require
\[
\varepsilon_{rr}\le \frac{\gamma_r}{2(1+\ell_r)},
\qquad
\varepsilon_{ee}\le \frac{\gamma_e}{2(1+\ell_e)}.
\]

We now apply Weyl's eigenvalue perturbation inequality. Since
\[
\widehat R_r=R_r+(\widehat R_r-R_r),
\]
we have
\[
\lambda_{\min}(\widehat R_r)
\ge
\lambda_{\min}(R_r)-\|\widehat R_r-R_r\|_2
\ge
\gamma_r-\eta_r
\ge
\frac{\gamma_r}{2}.
\]
Thus
\[
\widehat R_r\succeq \frac{\gamma_r}{2}I.
\]
Similarly,
\[
\widehat R_e\succeq \frac{\gamma_e}{2}I.
\]

Define
\[
\alpha_r=\sqrt{\frac{\gamma_r}{2}},
\qquad
\alpha_e=\sqrt{\frac{\gamma_e}{2}}.
\]
We next translate these lower bounds from \(R_j\) and \(\widehat R_j\) to their
symmetric square roots.

Recall that \(A\succeq B\) means \(A-B\) is positive semidefinite. Since
\(R_r\succ0\), the spectral theorem gives an eigendecomposition
\[
R_r=V_r\operatorname{diag}(\lambda_1,\ldots,\lambda_{d_r})V_r^\top,
\]
where each eigenvalue satisfies
\[
\lambda_k\ge \gamma_r.
\]
The symmetric square root is therefore
\[
Q_r=R_r^{1/2}
=
V_r\operatorname{diag}(\sqrt{\lambda_1},\ldots,\sqrt{\lambda_{d_r}})V_r^\top.
\]
Because \(\lambda_k\ge\gamma_r\), we have
\[
\sqrt{\lambda_k}\ge \sqrt{\gamma_r}
\qquad
\text{for every }k.
\]
Hence
\[
Q_r-\sqrt{\gamma_r}I
=
V_r\operatorname{diag}
\bigl(
\sqrt{\lambda_1}-\sqrt{\gamma_r},
\ldots,
\sqrt{\lambda_{d_r}}-\sqrt{\gamma_r}
\bigr)
V_r^\top
\succeq 0.
\]
Therefore
\[
Q_r\succeq \sqrt{\gamma_r}I.
\]
Since
\[
\sqrt{\gamma_r}\ge \sqrt{\frac{\gamma_r}{2}}=\alpha_r,
\]
we also have
\[
Q_r\succeq \alpha_r I.
\]

The same argument applies to the empirical matrix. From
\[
\widehat R_r\succeq \frac{\gamma_r}{2}I,
\]
all eigenvalues of \(\widehat R_r\) are at least \(\gamma_r/2\). Therefore all
eigenvalues of
\[
\widehat Q_r=\widehat R_r^{1/2}
\]
are at least \(\sqrt{\gamma_r/2}=\alpha_r\), and hence
\[
\widehat Q_r\succeq \alpha_r I.
\]
Repeating the same reasoning for the expert branch gives
\[
Q_e\succeq \sqrt{\gamma_e}I\succeq \alpha_e I,
\qquad
\widehat Q_e\succeq \alpha_e I.
\]

Thus the lower bounds
\[
Q_r\succeq \alpha_r I,
\qquad
\widehat Q_r\succeq \alpha_r I,
\qquad
Q_e\succeq \alpha_e I,
\qquad
\widehat Q_e\succeq \alpha_e I
\]
are not separate assumptions. They follow from population conditioning,
local OAS ridge stability, and sufficiently small calibration error.

\paragraph{Step 3: perturbation of the square roots.}
We next bound
\[
\|\widehat Q_j-Q_j\|_2,
\qquad j\in\{r,e\}.
\]
Let
\[
X_j=\widehat Q_j-Q_j,
\qquad
E_j=\widehat R_j-R_j.
\]
Since
\[
\widehat Q_j^2=\widehat R_j,
\qquad
Q_j^2=R_j,
\]
we have
\[
\widehat Q_j^2-Q_j^2=E_j.
\]
Moreover,
\[
\widehat Q_jX_j+X_jQ_j
=
\widehat Q_j(\widehat Q_j-Q_j)
+
(\widehat Q_j-Q_j)Q_j
=
\widehat Q_j^2-Q_j^2
=
E_j.
\]
Thus \(X_j\) solves the Sylvester equation
\[
\widehat Q_jX_j+X_jQ_j=E_j.
\]
Since \(\widehat Q_j\succeq \alpha_j I\) and \(Q_j\succeq \alpha_j I\), this Sylvester
equation has the integral representation \citep[Theorem VII.2.3]{bhatia1997matrix}
\[
X_j
=
\int_0^\infty
e^{-t\widehat Q_j}
E_j
e^{-tQ_j}
\,dt.
\]
Taking norms gives
\[
\|X_j\|_2
\le
\int_0^\infty
\|e^{-t\widehat Q_j}\|_2
\|E_j\|_2
\|e^{-tQ_j}\|_2
\,dt.
\]
Because the smallest eigenvalues of \(\widehat Q_j\) and \(Q_j\) are both at least
\(\alpha_j\),
\[
\|e^{-t\widehat Q_j}\|_2\le e^{-t\alpha_j},
\qquad
\|e^{-tQ_j}\|_2\le e^{-t\alpha_j}.
\]
Therefore
\[
\|\widehat Q_j-Q_j\|_2
=
\|X_j\|_2
\le
\int_0^\infty e^{-2t\alpha_j}\,dt\,\|E_j\|_2
=
\frac{\|E_j\|_2}{2\alpha_j}.
\]
Applying the bounds from Step 1,
\[
\|\widehat Q_r-Q_r\|_2
\le
\frac{\eta_r}{2\alpha_r},
\qquad
\|\widehat Q_e-Q_e\|_2
\le
\frac{\eta_e}{2\alpha_e}.
\]

\paragraph{Step 4: perturbation of the inverse square roots.}
We use the identity
\[
\widehat Q_j^{-1}-Q_j^{-1}
=
\widehat Q_j^{-1}(Q_j-\widehat Q_j)Q_j^{-1}.
\]
Taking norms,
\[
\|\widehat Q_j^{-1}-Q_j^{-1}\|_2
\le
\|\widehat Q_j^{-1}\|_2
\|\widehat Q_j-Q_j\|_2
\|Q_j^{-1}\|_2.
\]
From Step 2,
\[
\|\widehat Q_j^{-1}\|_2\le \frac{1}{\alpha_j},
\qquad
\|Q_j^{-1}\|_2\le \frac{1}{\alpha_j}.
\]
Combining these inequalities with the square-root bound from Step 3 gives
\[
\|\widehat Q_j^{-1}-Q_j^{-1}\|_2
\le
\frac{1}{\alpha_j^2}
\cdot
\frac{\|E_j\|_2}{2\alpha_j}
=
\frac{\|E_j\|_2}{2\alpha_j^3}.
\]
Hence
\[
\|\widehat Q_r^{-1}-Q_r^{-1}\|_2
\le
\frac{\eta_r}{2\alpha_r^3},
\qquad
\|\widehat Q_e^{-1}-Q_e^{-1}\|_2
\le
\frac{\eta_e}{2\alpha_e^3}.
\]

\paragraph{Step 5: perturbation of the whitened cross-covariance.}
Recall
\[
M=Q_r^{-1}S_{re}Q_e^{-1},
\qquad
\widehat M=\widehat Q_r^{-1}\widehat S_{re}\widehat Q_e^{-1}.
\]
Add and subtract intermediate terms:
\[
\begin{aligned}
\widehat M-M
&=
(\widehat Q_r^{-1}-Q_r^{-1})
\widehat S_{re}
\widehat Q_e^{-1}
\\
&\quad+
Q_r^{-1}
(\widehat S_{re}-S_{re})
\widehat Q_e^{-1}
\\
&\quad+
Q_r^{-1}
S_{re}
(\widehat Q_e^{-1}-Q_e^{-1}).
\end{aligned}
\]
Taking norms and using submultiplicativity,
\[
\begin{aligned}
\|\widehat M-M\|_2
&\le
\|\widehat Q_r^{-1}-Q_r^{-1}\|_2
\|\widehat S_{re}\|_2
\|\widehat Q_e^{-1}\|_2
\\
&\quad+
\|Q_r^{-1}\|_2
\|\widehat S_{re}-S_{re}\|_2
\|\widehat Q_e^{-1}\|_2
\\
&\quad+
\|Q_r^{-1}\|_2
\|S_{re}\|_2
\|\widehat Q_e^{-1}-Q_e^{-1}\|_2.
\end{aligned}
\]
Also,
\[
\|\widehat S_{re}\|_2
\le
\|S_{re}\|_2+\varepsilon_{re}.
\]
For sufficiently small calibration error, we may assume \(\varepsilon_{re}\le1\), giving
\[
\|\widehat S_{re}\|_2
\le
\|S_{re}\|_2+1.
\]
Using the inverse bounds from Steps 2 and 4, we obtain
\[
\begin{aligned}
\|\widehat M-M\|_2
&\le
\frac{\eta_r}{2\alpha_r^3}
\cdot
(\|S_{re}\|_2+1)
\cdot
\frac{1}{\alpha_e}
\\
&\quad+
\frac{1}{\alpha_r}
\cdot
\varepsilon_{re}
\cdot
\frac{1}{\alpha_e}
\\
&\quad+
\frac{1}{\alpha_r}
\cdot
\|S_{re}\|_2
\cdot
\frac{\eta_e}{2\alpha_e^3}.
\end{aligned}
\]
Equivalently,
\[
\|\widehat M-M\|_2
\le
C_{M,r}\varepsilon_{rr}
+
C_{M,re}\varepsilon_{re}
+
C_{M,e}\varepsilon_{ee},
\]
where
\[
C_{M,r}
=
\frac{(1+\ell_r)(\|S_{re}\|_2+1)}
{2\alpha_r^3\alpha_e},
\]
\[
C_{M,re}
=
\frac{1}{\alpha_r\alpha_e},
\]
and
\[
C_{M,e}
=
\frac{(1+\ell_e)\|S_{re}\|_2}
{2\alpha_r\alpha_e^3}.
\]
Thus, with
\[
C_M=C_{M,r}+C_{M,re}+C_{M,e},
\]
we have
\[
\|\widehat M-M\|_2
\le
C_M\varepsilon_{\mathrm{cal}}.
\]

\paragraph{Step 6: the threshold selects the same rank.}
Let
\[
\Delta_\tau
=
\min\{s_q-\tau,\tau-s_{q+1}\}>0.
\]
By Weyl's inequality for singular values,
\[
|\widehat s_i-s_i|
\le
\|\widehat M-M\|_2
\qquad
\text{for every } i.
\]
Therefore, if
\[
\|\widehat M-M\|_2<\Delta_\tau,
\]
then
\[
\widehat s_q
\ge
s_q-\|\widehat M-M\|_2
>
\tau,
\]
and
\[
\widehat s_{q+1}
\le
s_{q+1}+\|\widehat M-M\|_2
<
\tau.
\]
Thus the empirical SCCR thresholding step selects exactly \(q\) directions.
Using Step 5, it is enough to require, for example,
\[
C_M\varepsilon_{\mathrm{cal}}\le \frac{\Delta_\tau}{2}.
\]

\paragraph{Step 7: perturbation of the selected singular subspace.}
Let
\[
\delta=s_q-s_{q+1}>0.
\]

By further decreasing the allowed calibration error if necessary, assume

\[
\|\widehat M-M\|_2\le \frac{\delta}{4}.
\]

Equivalently, using Step 5, it is enough to require

\[
C_M\varepsilon_{\mathrm{cal}}\le \frac{\delta}{4}.
\]

Then Weyl's inequality gives
\[
\widehat s_q-\widehat s_{q+1}
\ge
(s_q-\|\widehat M-M\|_2)
-
(s_{q+1}+\|\widehat M-M\|_2)
\ge
\frac{\delta}{2}.
\]

Thus the empirical selected and unselected singular clusters remain separated.

By Wedin's \(\sin\Theta\) theorem for singular subspaces \cite[Theorem V.4.4]{stewart1990matrix}
(see also \cite{wedin1972perturbation}), and using
\(\|\widehat P-P\|_2=\|\sin\Theta(\widehat{\mathcal U},\mathcal U)\|_2\)
or orthogonal projectors onto subspaces of the same dimension, there is a universal constant \(C_W>0\) such that
\(C_W>0\) such that
\[
\|\widehat P-P\|_2
\le
C_W\frac{\|\widehat M-M\|_2}{\delta}.
\]
Combining this with the bound from Step 5 gives
\[
\|\widehat P-P\|_2
\le
\frac{C_WC_M}{\delta}
\varepsilon_{\mathrm{cal}}.
\]
Define
\[
C_P=\frac{C_WC_M}{\delta}.
\]
Then
\[
\|\widehat P-P\|_2
\le
C_P\varepsilon_{\mathrm{cal}}.
\]

\paragraph{Step 8: perturbation of the SCCR operator.}
Recall
\[
T=Q_r^{-1}(I-P)Q_r,
\qquad
\widehat T=\widehat Q_r^{-1}(I-\widehat P)\widehat Q_r.
\]
Add and subtract
\[
\widehat Q_r^{-1}(I-P)\widehat Q_r
\qquad
\text{and}
\qquad
\widehat Q_r^{-1}(I-P)Q_r.
\]
Then
\[
\begin{aligned}
\widehat T-T
&=
\widehat Q_r^{-1}(P-\widehat P)\widehat Q_r
\\
&\quad+
\widehat Q_r^{-1}(I-P)(\widehat Q_r-Q_r)
\\
&\quad+
(\widehat Q_r^{-1}-Q_r^{-1})(I-P)Q_r.
\end{aligned}
\]
Taking norms and using \(\|I-P\|_2=1\),
\[
\begin{aligned}
\|\widehat T-T\|_2
&\le
\|\widehat Q_r^{-1}\|_2
\|\widehat P-P\|_2
\|\widehat Q_r\|_2
\\
&\quad+
\|\widehat Q_r^{-1}\|_2
\|\widehat Q_r-Q_r\|_2
\\
&\quad+
\|\widehat Q_r^{-1}-Q_r^{-1}\|_2
\|Q_r\|_2.
\end{aligned}
\]
We already have
\[
\|\widehat Q_r^{-1}\|_2\le \frac{1}{\alpha_r},
\]
\[
\|\widehat Q_r-Q_r\|_2
\le
\frac{\eta_r}{2\alpha_r},
\]
and
\[
\|\widehat Q_r^{-1}-Q_r^{-1}\|_2
\le
\frac{\eta_r}{2\alpha_r^3}.
\]
It remains only to bound \(\|\widehat Q_r\|_2\). By the triangle inequality,
\[
\|\widehat Q_r\|_2
\le
\|Q_r\|_2+\|\widehat Q_r-Q_r\|_2.
\]
Using Step 3 and the small-error condition \(\eta_r\le\gamma_r/2\),
\[
\|\widehat Q_r-Q_r\|_2
\le
\frac{\eta_r}{2\alpha_r}
\le
\frac{\gamma_r}{4\alpha_r}.
\]
Define the finite population-dependent constant
\[
\beta_r
=
\|Q_r\|_2+\frac{\gamma_r}{4\alpha_r}.
\]
Then
\[
\|\widehat Q_r\|_2\le\beta_r.
\]
Substituting all bounds into the operator expansion yields
\[
\|\widehat T-T\|_2
\le
\frac{\beta_r}{\alpha_r}
\|\widehat P-P\|_2
+
\frac{1}{\alpha_r}
\frac{\eta_r}{2\alpha_r}
+
\frac{\eta_r}{2\alpha_r^3}
\|Q_r\|_2.
\]
Using
\[
\|\widehat P-P\|_2
\le
C_P\varepsilon_{\mathrm{cal}}
\]
and
\[
\eta_r=(1+\ell_r)\varepsilon_{rr}
\le
(1+\ell_r)\varepsilon_{\mathrm{cal}},
\]
we obtain
\[
\|\widehat T-T\|_2
\le
\left[
\frac{\beta_r}{\alpha_r}C_P
+
(1+\ell_r)
\left(
\frac{1}{2\alpha_r^2}
+
\frac{\|Q_r\|_2}{2\alpha_r^3}
\right)
\right]
\varepsilon_{\mathrm{cal}}.
\]
Therefore, defining
\[
C
=
\frac{\beta_r}{\alpha_r}C_P
+
(1+\ell_r)
\left(
\frac{1}{2\alpha_r^2}
+
\frac{\|Q_r\|_2}{2\alpha_r^3}
\right),
\]
we have
\[
\|\widehat T-T\|_2
\le
C\varepsilon_{\mathrm{cal}}.
\]
That is,
\[
\|\widehat T-T\|_2
\le
C\left(
\|\widehat S_{rr}-S_{rr}\|_2
+
\|\widehat S_{ee}-S_{ee}\|_2
+
\|\widehat S_{re}-S_{re}\|_2
\right).
\]
This proves the theorem. Finally, if the calibration covariance blocks converge to the population covariance
blocks, then \(\varepsilon_{\mathrm{cal}}\to0\), and the displayed bound implies
\[
\|\widehat T-T\|_2\to0.
\]
\end{proof}

\begin{remark}[A concrete sufficient small-error condition]
The proof constructs an explicit type of small-error condition. Let \(\delta_r,\delta_e\)
be radii on which the OAS ridge maps are locally Lipschitz. One sufficient condition is
\[
\varepsilon_{\mathrm{cal}}
\le
\varepsilon_0,
\]
where \(\varepsilon_0>0\) is any number satisfying
\[
\varepsilon_0
\le
\min\left\{
\delta_r,\delta_e,
1,
\frac{\gamma_r}{2(1+\ell_r)},
\frac{\gamma_e}{2(1+\ell_e)},
\frac{\Delta_\tau}{2C_M},
\frac{\delta}{4C_M}
\right\}.
\]
The first two terms keep the OAS map inside its local Lipschitz neighborhood. The next
term \(\varepsilon_0\le1\) is only used to simplify the bound
\(\|\widehat S_{re}\|_2\le\|S_{re}\|_2+1\). The two terms involving
\(\gamma_r,\gamma_e\) ensure empirical positive definiteness of
\(\widehat R_r,\widehat R_e\). The final term prevents singular values from crossing the
threshold \(\tau\). This is the precise meaning of ``for sufficiently small calibration
error'' in Theorem~\ref{thm:frozen_appendix}. 
\label{remark:_aap}
\end{remark}

\begin{remark}[Boundary cases]
The theorem is stated for \(1\le q<m\) to avoid extra notation. If \(q=0\), then no
canonical direction is removed and \(P=0\). Once the calibration error is small enough
that \(\widehat s_1<\tau\), we also have \(\widehat P=0\), so the Wedin subspace term
vanishes. If all available singular directions are removed, the same argument applies
with the relevant gap between the selected singular subspace and its complement.
\end{remark}

\begin{corollary}[Detailed stability bound for the frozen inference map]
\label{cor:frozen_appendix}
Let the assumptions of Theorem~\ref{thm:frozen_appendix} hold, and suppose the
calibration error is sufficiently small in the sense of the small-error condition in
Remark~\ref{remark:_aap}. Define
\[
g(z)=(z-\mu_r)T,
\qquad
\widehat g(z)=(z-\widehat\mu_r)\widehat T,
\]
where \(z\in\mathbb{R}^{d_r}\), \(\mu_r\) is the population raw-branch mean, and
\(\widehat\mu_r\) is the calibration raw-branch mean. Then, for every fixed
\(z\in\mathbb{R}^{d_r}\),
\[
\|\widehat g(z)-g(z)\|_2
\le
\|z-\mu_r\|_2\,\|\widehat T-T\|_2
+
\|\widehat\mu_r-\mu_r\|_2\,\|\widehat T\|_2.
\]
Moreover, with
\[
\alpha_r=\sqrt{\frac{\gamma_r}{2}}
\]
and
\[
\beta_r
=
\|Q_r\|_2+\frac{\gamma_r}{4\alpha_r},
\]
as in the proof of Theorem~\ref{thm:frozen_appendix}, we have
\[
\|\widehat T\|_2
\le
\frac{\beta_r}{\alpha_r}.
\]
Consequently, using the operator bound from Theorem~\ref{thm:frozen_appendix},
\[
\|\widehat g(z)-g(z)\|_2
\le
C\|z-\mu_r\|_2
\left(
\|\widehat S_{rr}-S_{rr}\|_2
+
\|\widehat S_{ee}-S_{ee}\|_2
+
\|\widehat S_{re}-S_{re}\|_2
\right)
+
\frac{\beta_r}{\alpha_r}
\|\widehat\mu_r-\mu_r\|_2.
\]
Hence, if
\[
\widehat\mu_r\to\mu_r
\]
and the calibration covariance blocks converge to their population counterparts, then
\[
\widehat g(z)\to g(z)
\qquad
\text{for every fixed } z\in\mathbb{R}^{d_r}.
\]
\end{corollary}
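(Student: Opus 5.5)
The plan is a direct add-and-subtract decomposition, followed by reuse of bounds already established in the proof of Theorem~\ref{thm:frozen_appendix}. For fixed $z$, I will write
\[
\widehat g(z)-g(z)=(z-\mu_r)(\widehat T-T)+(\mu_r-\widehat\mu_r)\widehat T .
\]
This follows by inserting $\pm(z-\mu_r)\widehat T$: we have $(z-\widehat\mu_r)\widehat T-(z-\mu_r)\widehat T=(\mu_r-\widehat\mu_r)\widehat T$, and the remaining piece is $(z-\mu_r)(\widehat T-T)$. Taking Euclidean norms of row vectors and using $\|vA\|_2\le\|v\|_2\|A\|_2$ gives the first displayed inequality immediately.

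Next I will bound $\|\widehat T\|_2$ uniformly. Since $\widehat T=\widehat Q_r^{-1}(I-\widehat P)\widehat Q_r$ and $\widehat P$ is an orthogonal projector, $\|I-\widehat P\|_2\le1$. Submultiplicativity then gives $\|\widehat T\|_2\le\|\widehat Q_r^{-1}\|_2\|\widehat Q_r\|_2$. Under the small-error condition of Remark~\ref{remark:_aap}, Step~2 of the theorem's proof gives $\widehat Q_r\succeq\alpha_r I$, so $\|\widehat Q_r^{-1}\|_2\le1/\alpha_r$. Step~8 gives $\|\widehat Q_r\|_2\le\beta_r$, which uses Step~3 together with $\eta_r\le\gamma_r/2$. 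Hence $\|\widehat T\|_2\le\beta_r/\alpha_r$.

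Finally, I will substitute $\|\widehat T-T\|_2\le C\varepsilon_{\mathrm{cal}}$ from Theorem~\ref{thm:frozen_appendix} into the first term. This yields the detailed bound, and the convergence claim follows because $z$ is fixed, $\|z-\mu_r\|_2$ is constant, and both $\varepsilon_{\mathrm{cal}}$ and $\|\widehat\mu_r-\mu_r\|_2$ tend to zero.

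I do not expect a real obstacle. The only care needed is to check that the uniform bound on $\|\widehat T\|_2$ actually holds in the regime considered. The small-error hypothesis must be in force so that Step~2 and the bound on $\|\widehat Q_r\|_2$ apply; in particular the convergence statement needs it to hold eventually, which is automatic once $\varepsilon_{\mathrm{cal}}\to0$. Otherwise the constant $\beta_r/\alpha_r$ is not justified.
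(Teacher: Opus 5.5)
Your proposal is correct and follows essentially the same route as the paper's proof: the add-and-subtract identity $\widehat g(z)-g(z)=(z-\mu_r)(\widehat T-T)+(\mu_r-\widehat\mu_r)\widehat T$, the bound $\|\widehat T\|_2\le\|\widehat Q_r^{-1}\|_2\,\|I-\widehat P\|_2\,\|\widehat Q_r\|_2\le\beta_r/\alpha_r$ taken from Steps 2 and 8 of the theorem's proof, and then substitution of the theorem's operator bound. Your remark that the small-error condition must hold eventually for the convergence claim is a worthwhile point of care, and it is automatically satisfied once $\varepsilon_{\mathrm{cal}}\to0$.
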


\begin{proof}
We have
\[
\widehat g(z)-g(z)
=
(z-\mu_r)(\widehat T-T)
+
(\mu_r-\widehat\mu_r)\widehat T.
\]
Taking norms yields
\[
\|\widehat g(z)-g(z)\|_2
\le
\|z-\mu_r\|_2\,\|\widehat T-T\|_2
+
\|\widehat\mu_r-\mu_r\|_2\,\|\widehat T\|_2.
\]
For the operator norm of \(\widehat T\),
\[
\|\widehat T\|_2
=
\|\widehat Q_r^{-1}(I-\widehat P)\widehat Q_r\|_2
\le
\|\widehat Q_r^{-1}\|_2\,\|I-\widehat P\|_2\,\|\widehat Q_r\|_2
\le
\frac{\beta_r}{\alpha_r}.
\]
Here \(I-\widehat P\) is an orthogonal projector, so
\(\|I-\widehat P\|_2\le1\), and the proof of
Theorem~\ref{thm:frozen_appendix} gives
\(\|\widehat Q_r^{-1}\|_2\le1/\alpha_r\) and
\(\|\widehat Q_r\|_2\le\beta_r\).
The final convergence claim follows from Theorem~\ref{thm:frozen_appendix}.
\end{proof}

\begin{proposition}[SCCR removes exactly the thresholded canonical correlations]
\label{prop:exact_main}
Let \(Z_{r,c}\in\mathbb{R}^{n\times d_r}\) and
\(Z_{e,c}\in\mathbb{R}^{n\times d_e}\) be centered reference embeddings, and let
\(B=\max(n-1,1)\). Define
\[
S_{re}=\frac{1}{B}Z_{r,c}^{\top}Z_{e,c}.
\]
Let
\[
R_r=L_rL_r^\top,
\qquad
R_e=L_eL_e^\top
\]
be the OAS-regularized raw and expert auto-covariances used by SCCR, and define the
regularized whitened cross-covariance
\[
M=L_r^{-1}S_{re}L_e^{-\top}.
\]
Let
\[
M=U\operatorname{diag}(s)V^\top
=
\sum_i s_i u_i v_i^\top
\]
be a thin SVD. For a threshold \(\tau\), define
\[
H=\{i:s_i\ge \tau\},
\qquad
K=I-U_HU_H^\top,
\qquad
T=L_r^{-\top}KL_r^\top.
\]
Let
\[
Z_\perp=Z_{r,c}T,
\qquad
S_{\perp e}=\frac{1}{B}Z_\perp^\top Z_{e,c}.
\]
Then
\[
L_r^{-1}S_{\perp e}L_e^{-\top}
=
(I-U_HU_H^\top)M
=
\sum_{i\notin H}s_i u_i v_i^\top.
\]
Consequently, in the fixed SCCR regularized CCA metric, the retained singular values
of the residual cross-covariance are exactly the sub-threshold values
\(\{s_i:i\notin H\}\). In particular, with the convention \(\max\emptyset=0\),
\[
\left\|L_r^{-1}S_{\perp e}L_e^{-\top}\right\|_2
=
\max_{i\notin H}s_i
<
\tau.
\]
\end{proposition}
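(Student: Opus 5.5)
The plan is a direct algebraic computation; the key observation is that SCCR acts by right multiplication on rows, so the residual cross-covariance picks up $T^\top$ on the left. First, since $Z_\perp=Z_{r,c}T$, we have
\[
S_{\perp e}=\frac{1}{B}T^\top Z_{r,c}^\top Z_{e,c}=T^\top S_{re}.
\]
Next, expand $T^\top=(L_r^{-\top}KL_r^\top)^\top=L_rK^\top L_r^{-1}=L_rKL_r^{-1}$, using that $K=I-U_HU_H^\top$ is symmetric. Then
\[
L_r^{-1}S_{\perp e}L_e^{-\top}=L_r^{-1}L_rKL_r^{-1}S_{re}L_e^{-\top}=KM=(I-U_HU_H^\top)M.
\]
This gives the first identity.

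Second, substitute the thin SVD $M=\sum_i s_iu_iv_i^\top$. Since the columns of $U$ are orthonormal, $U_HU_H^\top u_i=u_i$ for $i\in H$ and $=0$ for $i\notin H$. Hence
\[
(I-U_HU_H^\top)M=\sum_{i\notin H}s_iu_iv_i^\top.
\]
This expression is itself a thin SVD, with orthonormal families $\{u_i\}_{i\notin H}$ and $\{v_i\}_{i\notin H}$. By uniqueness of singular values, the nonzero singular values of the residual whitened cross-covariance are exactly $\{s_i:i\notin H\}$, up to zeros. The spectral norm is therefore $\max_{i\notin H}s_i$, with the value $0$ when the sum is empty. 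Finally, every $i\notin H$ satisfies $s_i<\tau$ by the definition of $H$, and the index set is finite, so the maximum is strictly below $\tau$. The empty case gives $0<\tau$ because $\tau\in(0,1)$.

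The only step needing care is the transpose bookkeeping. The operator acts on rows, which produces $T^\top$ on the left of $S_{re}$. The conjugation by $L_r$ must cancel exactly against the whitening factor $L_r^{-1}$, and this relies on the symmetry of $K$. I would also note that the statement holds in the fixed regularized metric: the whitening uses the same $L_r,L_e$ that built $T$, and no claim is made about recomputing OAS on $Z_\perp$.
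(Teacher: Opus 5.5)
Your proposal is correct and follows the paper's proof step for step: $S_{\perp e}=T^\top S_{re}$, then the symmetry of $K$ gives $T^\top=L_rKL_r^{-1}$ and hence $KM$, and finally $Ku_i\in\{0,u_i\}$ in the SVD expansion yields the retained singular values and the spectral-norm bound. Your remark that the empty case needs $\tau>0$, which comes from $\tau\in(0,1)$ in the main text, is a small point the paper leaves implicit in its $\max\emptyset=0$ convention.
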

\begin{proof}
By definition,
\[
Z_\perp=Z_{r,c}T.
\]
Therefore,
\[
S_{\perp e}
=
\frac{1}{B}Z_\perp^\top Z_{e,c}
=
\frac{1}{B}(Z_{r,c}T)^\top Z_{e,c}
=
T^\top
\left(
\frac{1}{B}Z_{r,c}^\top Z_{e,c}
\right)
=
T^\top S_{re}.
\]
Since
\[
T=L_r^{-\top}KL_r^\top
\]
and \(K=K^\top\), we have
\[
T^\top
=
(L_r^{-\top}KL_r^\top)^\top
=
L_rK L_r^{-1}.
\]
Thus
\[
\begin{aligned}
L_r^{-1}S_{\perp e}L_e^{-\top}
&=
L_r^{-1}T^\top S_{re}L_e^{-\top} \\
&=
L_r^{-1}
\left(
L_rK L_r^{-1}
\right)
S_{re}L_e^{-\top} \\
&=
K L_r^{-1}S_{re}L_e^{-\top} \\
&=
KM.
\end{aligned}
\]
Using the SVD
\[
M=\sum_i s_i u_i v_i^\top,
\]
we obtain
\[
KM
=
\sum_i s_i (Ku_i)v_i^\top.
\]
By construction,
\[
K=I-U_HU_H^\top
\]
is the orthogonal projector onto the complement of the span of the selected left
singular vectors \(\{u_i:i\in H\}\). Hence
\[
Ku_i=0
\quad\text{for } i\in H,
\qquad
Ku_i=u_i
\quad\text{for } i\notin H.
\]
Therefore
\[
KM
=
\sum_{i\notin H}s_i u_i v_i^\top.
\]
This proves the displayed decomposition.

Because the vectors \(\{u_i\}\) and \(\{v_i\}\) are orthonormal singular-vectors, the nonzero singular values of
\[
\sum_{i\notin H}s_i u_i v_i^\top
\]
are the retained nonzero values among \(\{s_i:i\notin H\}\). Since
\[
H=\{i:s_i\ge\tau\},
\]
every index \(i\notin H\) satisfies \(s_i<\tau\). Hence
\[
\left\|L_r^{-1}S_{\perp e}L_e^{-\top}\right\|_2
=
\max_{i\notin H}s_i
<
\tau,
\]
with the convention that the maximum over an empty set is \(0\). This completes the
proof.
\end{proof}
\newpage

\section{Algorithm}
\label{app:algorithm}
The algorithm below uses the OAS-selected branchwise ridge from
Section~\ref{sec:method}. We compute separate ridge levels from the
current raw and expert embeddings, and recompute them from global
training moments during calibration. We keep $S_{re}$ unchanged.
The clipping bound $\rho_{\max}<1$ and the numerical safeguards are
the same during training and calibration.

\begin{algorithm}[H]
\caption{Shared Canonical Component Removal with OAS-Selected Branchwise Ridge}
\label{alg:sccr}
\DontPrintSemicolon

\KwIn{Raw embedding batch \(Z_r\in\R^{n\times d_r}\), expert embedding batch \(Z_e\in\R^{n\times d_e}\)}
\KwIn{Threshold \(\tau\in(0,1)\), shrinkage bound \(\rho_{\max}\in(0,1)\)}
\KwOut{Residual embedding \(Z_{\perp}\), optional removed component \(Z_{\parallel}\), and operator \(T\)}

\BlankLine
\textbf{Training-time operator on a minibatch}\;

Compute batch means \(\mu_r \leftarrow \tfrac{1}{n}\mathbf{1}^\top Z_r\), \(\mu_e \leftarrow \tfrac{1}{n}\mathbf{1}^\top Z_e\)\;
Center:
\(Z_{r,c} \leftarrow Z_r-\mathbf{1}\mu_r\),
\(Z_{e,c} \leftarrow Z_e-\mathbf{1}\mu_e\)\;
\(B \leftarrow \max(n-1,1)\)\;
Empirical second moments:
\[
S_{rr}\leftarrow\tfrac{1}{B}Z_{r,c}^\top Z_{r,c},\quad
S_{ee}\leftarrow\tfrac{1}{B}Z_{e,c}^\top Z_{e,c},\quad
S_{re}\leftarrow\tfrac{1}{B}Z_{r,c}^\top Z_{e,c}
\]
\;
\ForEach{branch \(j\in\{r,e\}\)}{
Compute \(m_j\leftarrow\operatorname{tr}(S_{jj})/d_j\)\;
\eIf{the denominator in Equation~\eqref{eq:oas_rho} is numerically zero}{
Set \(\rho_j\leftarrow\rho_{\max}\)\;
}{
Compute \(\rho_j\) using Equation~\eqref{eq:oas_rho} with \(S_{jj},d_j,B\) and clip it to \([0,\rho_{\max}]\)\;
}
Set \(\lambda_j\leftarrow\rho_jm_j/(1-\rho_j)\) and \(R_{jj}\leftarrow S_{jj}+\lambda_j I_{d_j}\)\;
}
Cholesky, with diagonal jitter if needed:
\(L_rL_r^\top=R_{rr}\), \(L_eL_e^\top=R_{ee}\)\;
Whitened cross-covariance:
\(M\leftarrow L_r^{-1}S_{re}L_e^{-\top}\)\;
Thin SVD:
\(M=U\,\operatorname{diag}(s)\,V^\top\)\tcp*{\(s_i\) are regularized canonical correlations}
High-correlation set:
\(H\leftarrow\{i:s_i\ge\tau\}\),
\(U_H\leftarrow U[:,H]\)\;
Projector in whitened raw space:
\(K\leftarrow I_{d_r}-U_HU_H^\top\)\;
Shared canonical component removal operator:
\(T\leftarrow L_r^{-\top}KL_r^\top\)\;
Residual:
\(Z_{\perp}\leftarrow Z_{r,c}T\)\;
Optional removed component:
\(Z_{\parallel}\leftarrow Z_{r,c}-Z_{\perp}\)\;

\BlankLine
\textbf{Post-training calibration for deterministic inference}\;

\Indp
\emph{Input:} training-set embeddings \(\{(z_{r,i},z_{e,i})\}_{i=1}^N\) from the frozen encoders/projectors\;
Compute global means \(\mu_r^\star,\mu_e^\star\) and centered matrices \(Z_{r,c}^\star,Z_{e,c}^\star\)\;
Set \(B^\star\leftarrow\max(N-1,1)\) and compute global empirical moments
\(S_{rr}^\star,S_{ee}^\star,S_{re}^\star\) as above\;
Recompute OAS coefficients \(\rho_r^\star,\rho_e^\star\) and ridge levels \(\lambda_r^\star,\lambda_e^\star\) using the global moments and \(B^\star\)\;
Form \(R_{rr}^\star=S_{rr}^\star+\lambda_r^\star I_{d_r}\), \(R_{ee}^\star=S_{ee}^\star+\lambda_e^\star I_{d_e}\)\;
Compute frozen \(T^\star\) by repeating the Cholesky/SVD/projector steps with
\((R_{rr}^\star,R_{ee}^\star,S_{re}^\star)\) and the same threshold \(\tau\)\;
Store \((\mu_r^\star,\mu_e^\star,T^\star)\)\;
\Indm

\BlankLine
\textbf{Inference}\;

For a test example, compute embeddings \((z_r,z_e)\)\;
Residual:
\(z_{\perp}\leftarrow(z_r-\mu_r^\star)T^\star\)\;
Centered expert embedding:
\(z_e^{\mathrm{ctr}}\leftarrow z_e-\mu_e^\star\)\;
Predict:
\(\hat y\leftarrow h(z_{\perp},z_e^{\mathrm{ctr}})\)\;

\end{algorithm}
\section{Experimental Details: Synthetic}
\label{apptrain}

This appendix describes the image experiment in Section~\ref{sec:toy}.
We report seeds 907--911, separating operators fitted to common frozen
embeddings from models that learn their own embeddings. The comparisons
measure recovery of the generating variables and prediction of the task
label separately. The tables include the raw and expert learning controls,
linear and quadratic probes, native prediction heads, rank sensitivities,
and unsuccessful decomposition fits.

\subsection{Data generation and audit objective}
\label{app:freq-amp-data}

We use the variables defined in Section~\ref{sec:ped-example}, but give all
four image coefficients unit variance and set the expert correlation for
$A$ to $.95$. Specifically, let
$S,A,F_1,F_2,N_A,N_1,N_2$ be independent standard Gaussian variables and
write
\begin{align}
 Q&=.95A+\sqrt{.0975}N_A,&
 P_i&=.30F_i+\sqrt{.91}N_i,\quad i=1,2,\\
 c_r&=(S,A,F_1,F_2),&c_e&=(S,Q,P_1,P_2),\\
 y&=\mathbb I\{A+F_1F_2>0\}.&&
\end{align}
Here $c_r,c_e$ are rendering coefficients, not learned embeddings.
Their corresponding correlations before pixel noise are
$(1,.95,.30,.30)$, and every coefficient has variance one. In particular,
the raw coefficients are not $(S,.95A,.30F_1,.30F_2)$.
The unit amplitudes give the weak factors the same coefficient scale
as $A$; the unequal variances in Section~\ref{sec:ped-example} instead isolate
the effect of variance on the population decomposition.

The nuisance $S$ is shared but does not enter the label. The label depends
on both $A$ and the interaction $F_1F_2$, making both $A$ and the interaction relevant to task training. The audit objective is to reduce
linear recovery of $A$ from the residual while retaining $F_1,F_2$ and
their interaction. Recovery of $S$ is an additional diagnostic. The
expert image is informative about $A$, but the learned expert embedding
need not expose it. We therefore measure raw and expert accessibility
before interpreting the residual. None of the generating variables is
an additional training target.

\paragraph{Image rendering.}
Each view is a $64\times64$ grayscale image containing four disjoint
$16\times16$ tiles. Their zero-based row/column origins are
$(8,8),(8,40),(40,8),(40,40)$. For tile coordinates
$u,w\in\{0,\ldots,15\}$, the unnormalized patterns are
\[
\begin{aligned}
 t_1(u,w)&=\begin{cases}-1,&w<8,\\1,&w\ge8,\end{cases}
 &t_2(u,w)&=\sin(4\pi w/16),\\
 t_3(u,w)&=\sin(6\pi u/16),
 &t_{r,4}(u,w)&=2\bigl((\lfloor u/2\rfloor+\lfloor w/2\rfloor)\bmod2\bigr)-1,\\
 &&t_{e,4}(u,w)&=\sin(2\pi(u+w)/16).
\end{aligned}
\]
We center each pattern, normalize it to unit root-mean-square amplitude
within its tile, and set it to zero elsewhere. The first three templates
are the same across views; the fourth gives the corresponding raw and
expert factors different appearances. For view $v\in\{r,e\}$ and pixel
$p$, the rendered intensity is
\[
 I_v(p)=.5+.06\sum_{j=1}^4 c_{v,j}t_{v,j}(p)+\epsilon_v(p),
 \qquad \epsilon_v(p)\sim\mathcal N(0,.0192^2).
\]
Pixel noise is independent across pixels, images, and views, including
the background. It is distinct from the proxy noise $N_A,N_1,N_2$:
proxy noise controls sharing of the generating variables, whereas pixel
noise affects observation of both views. The chosen pixel scale gives
standard deviation $.02$ in a coefficient estimate obtained using the
known template, because $.0192=.02\times.06\times16$.
Disjoint tiles make the variables separately observable in the pixels;
they do not force a neural encoder to use separate coordinates.

Images remain unclipped floating-point values. We repeat the grayscale
channel three times and use ImageNet channel means
$(.485,.456,.406)$ and standard deviations $(.229,.224,.225)$.
Inputs remain $64\times64$; there is no resizing to 224 and no additional
image augmentation.

\subsection{Training, calibration, validation, and test data}
\label{app:training}

The data seed is 909 for all methods and neural initializations.
Each epoch uses 4,096 new paired images. The epoch-indexed image streams
are shared across methods and model seeds, while minibatch order is
shuffled using the model seed. Calibration uses 4,096 fixed independent
pairs, and validation uses 1,024 fixed independent pairs. These
calibration images come from the training distribution but are separate
from the images used for gradient updates. For the dual models, after each epoch we fit the
calibrated operator using all calibration embeddings with the network
in evaluation mode, and evaluate that frozen operator on validation.

All 35 neural runs in this five-seed panel completed 100 epochs. There
is no early stopping. We select the first strict minimum of the
validation objective over those epochs. The supervised arms use
classification cross-entropy; VCCA-private uses its reconstruction and
KL objective. VCCA validation uses a fixed Monte Carlo draw sequence
across epochs, so changes in its validation objective do not arise from
changing that sequence. Table~\ref{tab:synimg_checkpoints} lists the
selected epochs and native SCCR removal ranks.

After selecting each network, we freeze it and fit audit predictors on
the 4,096 epoch-zero training pairs, which were also seen during network
training. Validation selects the probe penalty independently for each
representation and target. Calibration, probe fitting, and validation
therefore use different observations. We evaluate on the same separate
4,096 test pairs for every method and initialization. Test images use
an independent random stream with seed 202609280909; no encoder, operator, scaler, or predictor
is refitted on them.

The reported seeds 907--911 are a retrospective subset of seven completed
initializations. The original seeds 909--913 were locked before the test
pairs were generated. Seeds 907 and 908 were added after those test
results had been inspected and reuse the same test cohort. The five-seed
tables therefore do not constitute a prospectively selected replication
on a new cohort. They retain seed 907's partial suppression and do not
establish success for every initialization in the larger study.

\begin{table}[htbp]
\caption{Selected checkpoint epochs after completing 100 epochs in every run. Epochs are numbered from 1. The final row gives the number of directions removed by each end-to-end SCCR model's calibrated operator at its selected checkpoint; these differ from the ranks fitted to common embeddings.}
\label{tab:synimg_checkpoints}
\centering
\setlength{\tabcolsep}{2pt}
\begin{tabular*}{\textwidth}{@{\extracolsep{\fill}}lccccc@{}}
\toprule
Model or diagnostic & 907 & 908 & 909 & 910 & 911\\
\midrule
Dual baseline & 86 & 99 & 98 & 91 & 100\\
Raw only & 83 & 83 & 82 & 75 & 83\\
SCCR & 100 & 100 & 100 & 99 & 100\\
OLS & 100 & 84 & 91 & 90 & 90\\
Regularized LEACE-style & 76 & 94 & 78 & 42 & 84\\
VCCA-private & 100 & 100 & 100 & 100 & 100\\
DSN-MMD & 94 & 81 & 66 & 81 & 36\\
\midrule
Native SCCR removed rank & 3 & 6 & 5 & 5 & 5\\
\bottomrule
\end{tabular*}
\end{table}

\subsection{Neural architecture and optimization}
\label{app:architecture}
\label{app:prediction-head}

All image encoders use EfficientNetV2-S~\citep{pmlr-v139-tan21a}
initialized with ImageNet-pretrained weights~\citep{5206848}. We remove
the original classifier and use its 1,280-dimensional pooled features.
All backbone weights are fine-tuned. In the raw-only and dual models,
we $\ell_2$-normalize these features and project them through
\[
 1280\ \longrightarrow\ 64\ \xrightarrow{\mathrm{SiLU},\ \mathrm{dropout}(.2)}\ 64.
\]
The arrows with dimensions denote linear layers, including biases.
Each dual model has separate raw and expert encoders and projectors.
Its two-logit classifier has branch-specific LayerNorm, a $64\to64$
linear layer, SiLU, and dropout $.2$. Writing the resulting hidden
vectors as $u_r,u_e$, its logits are
\[
 \ell=W_ru_r+W_eu_e+W_\times(u_r\odot u_e)+b,
 \qquad
 u_r=\phi_r(z_\perp),\quad u_e=\phi_e(z_e^{\mathrm{ctr}}).
\]
Thus the classifier can use expert evidence, residual evidence, and their
interaction. The raw-only classifier uses LayerNorm, a $64\to64$ layer,
SiLU, dropout $.2$, and a $64\to2$ output layer.

All neural arms use Adam with learning rate $10^{-4}$,
$(\beta_1,\beta_2)=(.9,.999)$, numerical constant $10^{-8}$, and zero
weight decay. Batch size 512 gives eight updates per epoch and 800
updates in total. Evaluation batch size is 128. We use no learning-rate
schedule or gradient clipping. The raw-only and dual arms minimize two-class softmax
cross-entropy. VCCA-private and DSN use the losses described below.
The synthetic experiments run on a MacBook Pro with an Apple M5 Max
chip and 128\,GB of RAM. Neural operations use MPS; differentiable covariance calculations use
CPU float64, retaining gradients through the device transfers.

\subsection{End-to-end residualization models}
\label{app:sccr}

The dual baseline uses centered raw embeddings without removal. SCCR,
OLS, and regularized LEACE-style replace this operation while retaining
the same encoders, projectors, supervised head, and task loss. For these
models, removal is part of training; the encoders are not frozen while
the residualizer is learned. Training uses minibatch means and moments.
After each epoch, the complete calibration set gives new means and a
frozen operator for validation. The selected checkpoint stores its own
calibrated state, which is reused for test prediction.

\paragraph{SCCR.}
For a batch of $n$ embeddings, we center $Z_r,Z_e$ and form the second
moments with denominator $n-1$. Branchwise OAS coefficients give the
ridge levels in Section~\ref{sec:method}. With
$R_{rr}=S_{rr}+\lambda_r I=L_rL_r^\top$ and
$R_{ee}=S_{ee}+\lambda_e I=L_eL_e^\top$, we compute
\[
 M=L_r^{-1}S_{re}L_e^{-\top}
   =U\operatorname{diag}(s)V^\top,
 \qquad H=\{i:s_i\ge .40\},
\]
\[
 T=L_r^{-\top}(I-U_HU_H^\top)L_r^\top,
 \qquad Z_\perp=Z_{r,c}T.
\]
OAS is recalculated on each minibatch and again on the complete
calibration set. The shrinkage coefficient is clipped to $[0,1-10^{-6}]$;
the implementation uses $10^{-12}$ in its denominator safeguard and
floors $1-\rho_j$ at $10^{-6}$ when forming the ridge.
Diagonal jitter is added only when needed for Cholesky factorization,
starting at $10^{-6}$ and increasing by a factor of ten if necessary. The cross-covariance is not separately shrunk. The map is
identity when no direction is selected and zero when all raw directions
are selected.

The implementation differentiates the selected subspace, avoiding
individual singular-vector derivatives within a selected or retained
subspace. More precisely, if $MM^\top=U\operatorname{diag}(\nu)U^\top$
and $b_i$ indicates removal, its projector derivative uses
$(b_i-b_j)/(\nu_i-\nu_j)$ only between the two groups, and zero within
each group. The rank decision itself is discrete. This does not remove
the need for a gap at the threshold and does not guarantee stable
end-to-end optimization. At inference, the saved means and $T^\star$
are fixed; the residual map requires only the raw embedding, while the
classifier also uses the expert embedding.

\paragraph{OLS.}
On the centered batch we fit
\[
 B_{\mathrm{OLS}}=Z_{e,c}^{\dagger}Z_{r,c},
 \qquad Z_\perp=Z_{r,c}-Z_{e,c}B_{\mathrm{OLS}}.
\]
The pseudoinverse uses a relative singular-value cutoff of $10^{-6}$,
without an additional ridge. The batch fit is differentiable; global
calibration replaces it for validation and inference. OLS requires both
views to form a new residual. Given its fixed map, $(z_e^{\mathrm{ctr}},
z_\perp)$ reconstructs $z_r^{\mathrm{ctr}}$, so residual-only suppression
need not imply loss of information from the combined representation.

\paragraph{Regularized LEACE-style removal.}
We whiten the raw branch using its OAS-regularized covariance and take
an SVD of $L_r^{-1}S_{re}$. Let $U_H$ contain the left singular vectors
whose singular values exceed $10^{-6}$ times the largest one. The
row-vector map is
\[
 T_{\mathrm{R\text{-}LEACE}}
 =L_r^{-\top}(I-U_HU_H^\top)L_r^\top.
\]
This arm uses the same batch fitting, calibration, and supervised
training protocol as SCCR. It differs from SCCR by targeting every
numerically supported cross-covariance direction, without expert
whitening or a correlation threshold. It also differs from exact LEACE:
the raw covariance is regularized and the support cutoff is $10^{-6}$.
We label it R-LEACE in the tables rather than treating it as an exact
implementation of the unregularized erasure guarantee.

\subsection{Shared/private neural models}
\label{app:baselines}

These baselines were not designed for the specific audit objective in
this paper. We evaluate their raw private code as the retained output,
and report shared and full raw codes as learning controls. They use the
same images, 100-epoch budget, optimizer, and trainable pretrained
backbones, but their objectives and numbers of encoders differ. The 1,280-dimensional backbone features are also $\ell_2$-normalized
before their latent-code layers. Equal epoch budgets therefore do not
imply matched capacity or computation.

\paragraph{VCCA-private.}
Our adaptation of VCCA-private~\citep{wang2017deepvariationalcanonicalcorrelation}
has three independent encoders defining diagonal Gaussian posteriors
$q(c\mid x)$, $q(p_r\mid x)$, and $q(p_e\mid\tilde x)$ for a shared
code and two private codes. Each uses a 64-unit hidden layer with SiLU
and dropout $.2$, followed by separate 64-dimensional mean and
log-variance outputs. Log variances are clipped to $[-12,8]$.
The shared posterior takes the raw image as input; there is no second
shared posterior inferred from the expert image. With
$\widetilde I_v=(I_v-.5)/.06$, we minimize
\[
\begin{aligned}
 \mathcal L_{\mathrm{VCCA}}={}&
 \mathbb E_q[-\log p(\widetilde I_r\mid c,p_r)
              -\log p(\widetilde I_e\mid c,p_e)]\\
 &+D_{\mathrm{KL}}(q(c\mid x)\|\mathcal N(0,I))
  +D_{\mathrm{KL}}(q(p_r\mid x)\|\mathcal N(0,I))
  \\
 &+D_{\mathrm{KL}}(q(p_e\mid\tilde x)\|\mathcal N(0,I)).
\end{aligned}
\]
Both Gaussian observation models have unit variance in these scaled
pixel units. Reconstruction losses sum over pixels and average over
images; KL terms sum over latent coordinates and average over images.
All five terms have weight one. We use one reparameterized draw per
posterior during training and validation. There is no supervised task
head or auxiliary factor loss. Validation negative ELBO selects the
checkpoint. Audit predictors use posterior means: shared and private
codes have 64 coordinates each, and their concatenation has 128.

Two convolutional decoders reconstruct raw and expert images from the
corresponding concatenated shared/private codes. Each maps its input
to $128\times4\times4$ features, then uses four bilinear upsampling
stages with $3\times3$ convolutions and SiLU, with channel widths
$64,32,16,16$. A final convolution gives an unconstrained one-channel
$64\times64$ image.

\paragraph{Domain Separation Networks.}
Our DSN-MMD adaptation~\citep{NIPS2016_45fbc6d3} has one shared encoder
used on both views and independent raw and expert private encoders,
all producing 64-dimensional codes. A single decoder with the above
architecture reconstructs each view from the \emph{sum} of its shared
and private codes. The native classifier predicts $y$ from the raw
shared code only; expert images act as an unlabeled target domain.
The loss is
\[
 \mathcal L_{\mathrm{DSN}}=
 \mathcal L_{\mathrm{CE}}+\mathcal L_{\mathrm{recon}}
 +.1\mathcal L_{\mathrm{diff}}+.1\mathcal L_{\mathrm{MMD}}.
\]
For reconstruction errors $e_{v,i,p}$ in scaled pixel units and $P=4096$
pixels, we use
\[
 \mathcal L_{\mathrm{recon}}=
 \sum_{v\in\{r,e\}}\frac1n\sum_{i=1}^n
 \left[\frac1P\sum_p e_{v,i,p}^2-
       \left(\frac1P\sum_p e_{v,i,p}\right)^2\right].
\]
After column centering and row $\ell_2$ normalization of shared and
private code matrices $H_{c,v},H_{p,v}$, the difference penalty is
\[
 \mathcal L_{\mathrm{diff}}=
 \sum_{v\in\{r,e\}}\left\|H_{c,v}^\top H_{p,v}/n\right\|_F^2.
\]
The similarity penalty is the biased squared MMD between raw and expert
shared codes, including diagonal terms:
\[
 \mathcal L_{\mathrm{MMD}}=\frac1{n^2}\sum_{i,j}
 \bigl[k(c_{r,i},c_{r,j})+k(c_{e,i},c_{e,j})-2k(c_{r,i},c_{e,j})\bigr],
\]
where $k(a,b)$ averages $\exp(-\|a-b\|^2/(2\sigma^2))$ over
$\sigma\in\{.125,.25,.5,1,2,4,8\}$. There is no paired-code equality
loss. Validation classification loss selects the checkpoint. The raw
private code is the audit output; the shared code and the 128-dimensional
shared/private concatenation are controls.

\subsection{Operators fitted to common frozen embeddings}
\label{app:synimg_common_methods}

For each seed, the selected dual baseline supplies the same frozen
64-dimensional raw and expert embeddings to all operators below.
Every decomposition is fitted to calibration embeddings without label
or factor supervision. Its map is then fixed for probe fitting,
validation, and test evaluation. These measurements isolate the
operator comparison from changes in encoder learning.

\paragraph{SCCR, matched top-$q$, and exact LEACE.}
Common-embedding SCCR uses OAS, $\tau=.40$, and the same residual map
as above. Its symmetric covariance-root implementation is equivalent
to the Cholesky convention. The selected ranks are $2,2,2,3,3$ for
seeds 907--911. Removing the top $q$ canonical directions with this
same rank and spectrum gives identical saved test scores. This is an
equivalence control, not a separate algorithmic advantage of thresholding.

Exact empirical LEACE~\citep{10.5555/3666122.3669006} uses the
unregularized raw covariance. With $W=S_{rr}^{\dagger/2}$ and
$P_{\mathrm{cross}}$ the orthogonal projector onto the column space of
$WS_{re}$, the row-vector eraser is
\[
 T_{\mathrm{LEACE}}=I-WP_{\mathrm{cross}}S_{rr}^{1/2}.
\]
Covariance support and cross-covariance support use relative cutoff
$10^{-12}$. The selected ranks are $63,64,64,64,64$.
Residual scores close to zero can be affected by finite precision and
probe fitting; we report measured values rather than replacing them by
analytic zeros. Raw and zero outputs are additional controls. Feature-space
projection has no separately measured image row in this suite.

\paragraph{D-CCA.}
We use the empirical signal-covariance version of decomposition-based
CCA~\citep{Shu02012020}, without S-POET denoising. This is not Deep CCA.
Let $W_r=S_{rr}^{\dagger/2}$ and $W_e=S_{ee}^{\dagger/2}$ on their
numerical supports, using covariance denominator $n$ in this implementation.
Let $r_r,r_e$ be their support ranks and retain $q=\min(r_r,r_e)$ pairs
from $W_rS_{re}W_e=U\operatorname{diag}(s)V^\top$. For the paired canonical
scores $C_r=Z_{r,c}W_rU_q$, $C_e=Z_{e,c}W_eV_q$, define
\[
 a(s)=\frac12\left(1-\sqrt{\frac{1-s}{1+s}}\right),
 \qquad C_{\mathrm{common}}=(C_r+C_e)\operatorname{diag}(a(s)).
\]
The distinctive raw output is
\[
 Z_\perp=Z_{r,c}-C_{\mathrm{common}}U_q^\top S_{rr}^{1/2}.
\]
The $10^{-12}$ relative cutoff applies to covariance eigenvalues,
not to the canonical correlations. All $q$ pairs are used; unpaired
raw modes are left unchanged. There is no additional statistical rank
selector or ridge. Unlike fitted
SCCR and LEACE, this output requires both views at inference.

\paragraph{JIVE.}
JIVE~\citep{10.1214/12-AOAS597} approximates each centered, block-scaled
matrix $\bar Z_v=Z_{v,c}/\|Z_{v,c}\|_F$ by a joint component $J_v$
and an individual component $I_v$. The fixed-rank objective is
\[
 \min_{J_r,J_e,I_r,I_e}
 \sum_{v\in\{r,e\}}\|\bar Z_v-J_v-I_v\|_F^2,
 \qquad
 \operatorname{rank}[J_r\ J_e]\le q,\quad
 \operatorname{rank}(I_v)\le k-q,
\]
with joint/individual sample-score orthogonality. The implementation
alternates truncated SVD updates and applies a final orthogonalization
that preserves the fitted sum. This procedure does not certify a global
optimum. We use $k\in\{4,8,16\}$ and
$q\in\{0,1,2\}$, with no expansion of the individual ranks. The primary
setting is $k=4,q=2$. This is a fixed-rank adaptation; JIVE's permutation
rank selector is not used. Block Frobenius scaling is not feature
whitening.

The original limit is 5,000 iterations. Convergence requires the change
in the concatenated fitted matrix, divided by the Frobenius norm of the
concatenated input, to be below $10^{-9}$. All 30 positive-$q$ fits
across these five seeds missed this criterion. Their individual and
remainder outputs, with both inference extensions below, are recorded
as nonconverged and excluded from scoring. They are not assigned zero
performance. The $q=0$ controls involve no joint removal. A later increase
to 50,000 iterations is reported separately in
Section~\ref{sec:synimg_common_jive_followup}.

\paragraph{AJIVE.}
AJIVE~\citep{10.1016/j.jmva.2018.03.008} first retains initial sample-score
bases from truncated SVDs of the two centered matrices,
\[
 Z_{v,c}\approx Q_vD_vV_v^\top,
 \qquad Q_v\in\mathbb R^{n\times r_v},\quad v\in\{r,e\}.
\]
The initial rank $r_v$ determines which directions enter the alignment
calculation, rather than directly specifying the joint rank. The leading
squared singular values of $[Q_r\ Q_e]$ are
$1+\sigma_i(Q_r^\top Q_e)$. AJIVE tests these alignments against Wedin
perturbation and random-direction bounds.

We use 1,000 resamples for each bound. The Wedin calculation samples
subspaces in the complements of the retained sample scores and feature
loadings. Dividing the larger projected residual operator norm ($\|\cdot\|_2$) by the last retained
singular value, and clipping to one, gives a bound $b_v$. Its joint
squared-singular-value samples are
$\max\{2-b_r^2-b_e^2,1\}$. The cutoff is the larger of their fifth
percentile and the 95th percentile of the leading squared singular value
from independent random score spaces of ranks $r_r,r_e$. The implementation
uses a $10^{-10}$ numerical boundary tolerance and the subsequent
per-view signal-strength check. If $Q_J$ contains the accepted joint
scores, $J_v=Q_JQ_J^\top Z_{v,c}$. The individual component truncates
$Z_{v,c}-J_v$ using the original per-view singular-value threshold,
which is the midpoint between the last initially retained and first
discarded singular values. There is no additional cap of $r_v-q$.

The primary initial-rank heuristic selects the largest adjacent
singular-value ratio over ranks 1--8 separately in each view. It selects
$(2,2)$ and joint rank zero in every included seed. We also report fixed
initial ranks $(4,4)$. Both settings use the paired individual output
defined below. No feature whitening or block scaling is applied to AJIVE.
We report the initial, joint, and final individual ranks for both settings.

\paragraph{Prediction on new images.}
JIVE and AJIVE decompose calibration sample scores. To evaluate new
images, we fit explicit least-squares extensions using only calibration
rows. Let $s_v=\|Z_{v,c}\|_F$ for JIVE and $s_v=1$ for AJIVE,
and write $\bar Z_v=Z_{v,c}/s_v$. AJIVE uses the paired matrix
$V=[\bar Z_r\ \bar Z_e]$ and reports the individual component.
For JIVE, we also evaluate the raw-only matrix $V=\bar Z_r$ and the
raw-minus-joint remainder. For calibration components $\bar I_r,\bar J_r$
in these same coordinates, set
\[
 B_I=V^\dagger\bar I_r,\qquad B_J=V^\dagger\bar J_r.
\]
For a new centered and scaled raw row $\bar z_r$ and its corresponding
raw-only or paired row $v$, the outputs in raw embedding units are
\[
 r_{\mathrm{individual}}=s_rvB_I,\qquad
 r_{\mathrm{remainder}}=s_r(\bar z_r-vB_J).
\]
The pseudoinverse cutoff is $10^{-12}$. The individual
output excludes unmodeled variation; the remainder retains it.
A paired extension uses both views even when its downstream predictor
uses the residual alone. A raw-only extension uses no expert observation
at inference. The AJIVE tables use only the paired individual output;
the additional extensions and remainder outputs are JIVE controls.
These are audit adaptations, not a fresh decomposition on test rows.
D-CCA also requires paired observations, while the fitted SCCR residual
is a map of the raw embedding alone.

\subsection{Probe fitting and evaluation}
\label{app:evaluation}

For a retained code $r$, linear probes use its standardized coordinates;
quadratic probes additionally use every product $r_ir_j$, $i\le j$.
All feature transformations are fitted on the probe training set.
We drop columns whose range is below $10^{-12}$ times the larger of one
and their maximum absolute value. The input standard-deviation floor
is $\max\{10^{-6},.01\max_j\operatorname{SD}(r_j)\}$.
After quadratic expansion, we center and scale again using an absolute
floor of one and the same relative floor of $.01$ times the largest
column standard deviation. These floors limit amplification of very
small numerical residuals.

Separate ridge regressions predict $S,A,F_1,F_2,F_1F_2$.
Logistic regressions predict $y$. Both use the penalty grid
$\eta\in\{10^{-6},10^{-4},.01,1,100\}$, with ridge parameter
$\alpha=n\eta$ and logistic parameter $C=1/(n\eta)$, where $n=4096$.
Models include an intercept. Ridge uses a Cholesky solver; logistic
regression uses L-BFGS with at most 10,000 iterations and tolerance
$10^{-6}$. Validation MSE selects each factor predictor independently;
validation cross-entropy selects each label predictor. The training-mean
and training-class-prior predictors are candidates and win exact ties,
followed by the larger penalty. Failed or nonconverged candidates cannot
be selected.

For test values $a_i$ and predictions $\widehat a_i$, we report
\[
 R_A^2=1-\frac{\sum_i(a_i-\widehat a_i)^2}
                  {\sum_i(a_i-\bar a_{\mathrm{test}})^2},
\]
with the same definition for each other target. Negative values are
retained. These are finite-sample, validation-selected probe scores,
not the analytic population recovery values in
Section~\ref{sec:ped-example}. AUROC uses continuous class probabilities;
accuracy uses threshold $.5$; log loss is mean binary cross-entropy.
The native-head metrics evaluate the trained classifier without fitting
a new readout. They are separate from linear and quadratic task probes.
VCCA-private has no native task head; DSN's native head uses its shared,
not its private, raw code.

For the dual arms we report their own expert code and own expert plus
residual, as well as combinations with the same-seed dual baseline's
expert code. The latter supplies a common expert reference. VCCA-private
and DSN expert-side codes were not exported for this audit; their saved
expert fields refer to that common reference. We therefore do not present
those fields as measurements of their own expert encoders. Full and
shared raw codes provide their recorded learning controls.

Unless a table states otherwise, every aggregate is the mean and sample
standard deviation over all five seeds, with SD denominator $5-1$.
The seeds share the generating streams and test observations, so these
SDs measure initialization variation conditional on the data, not
confidence intervals over independent datasets. A low probe score
establishes poor accessibility under that probe family; it does not
establish absence of information, nonlinear independence, or causal
relevance.

\subsection{What the image results establish}

Table~\ref{tab:synimg_native_factors_linear} shows the raw, expert, and retained
codes before comparing methods. In end-to-end SCCR, mean linear $A$
recovery falls from $.890$ in the raw code to $.051$ in the residual,
while $F_1,F_2$ recovery remains $.708$ and $.644$. Seed 907 has partial
suppression ($.8570\to.2149$), while residual linear $A$ recovery in the
other four seeds is $.0052$--$.0163$. Quadratic residual $A$ recovery
remains $.818\pm.057$. This supports reduced linear accessibility,
not complete removal of $A$. Native-head AUROC is $.9928\pm.0025$.

The end-to-end OLS comparison also changes what the expert encoder
learns: mean own-expert linear $A$ recovery is $.144\pm.308$, compared
with $.809\pm.128$ for SCCR. Its larger residual $A$ recovery therefore
cannot be attributed to the fixed operator alone. The AJIVE comparisons
concern the paired individual output under the two stated initial-rank
settings; they do not establish failure across other rank or output
choices.

For VCCA-private, weak-factor recovery is limited even in the full code,
so low private recovery combines learning and allocation limits. DSN's
shared code predicts the interaction, while its private code has little
interaction recovery. Its good native prediction therefore does not
imply a useful private audit representation. R-LEACE also has weak
raw-code interaction recovery. These controls distinguish failure to
learn an accessible variable from its loss in a retained output.

The learned image embeddings do not reproduce every premise of the
Gaussian calculation. In particular, the common expert code exposes
$A$ but little of the individual weak factors under either probe family.
The unequal-variance image follow-up using the coefficients of
Section~\ref{sec:ped-example} did not reliably learn those weak factors,
so it did not establish the same suppression-and-retention result after
image encoding. In a separate unit-amplitude $\rho=.70$, seed-42 pilot,
the weak factors were accessible, but SCCR's learned expert exposed
almost no $A$ under either probe family. Linear raw-to-residual $A$
recovery then remained $.877\to.876$. These separate exploratory
experiments show why factor learning and expert quality must be checked;
they are not included in the five-seed aggregates below.

Finally, higher residual task AUROC is not itself the audit objective:
the label depends on $A$ as well as $F_1F_2$, and retaining $A$ can improve
prediction. The relevant result is the joint pattern of suppression,
retention, expert accessibility, and combined prediction under the
stated representations and inference rules.

\subsection{Complete results for learned representations}
\label{app:synimg_learned_results}

\paragraph{Learned-representation results.}
The following tables concern separately trained neural arms.
``Raw'' is the embedding before residualization and ``own expert'' is the
expert embedding learned by that same arm. For VCCA-private and DSN-MMD,
``full'' concatenates the raw-view shared and private codes.
Their own expert-side codes were not exported for this audit; the saved
expert-reference fields are therefore not reported as their own expert
representations. R-LEACE denotes the regularized trainable LEACE-style arm.
The dual baseline's residual is centered raw, and the raw-only residual
is the raw code; these duplicate representations are shown once.
Factor tables retain negative test-set $R^2$. Aggregate entries are
mean $\pm$ sample standard deviation over seeds 907--911; they describe
initialization variation on the shared assessment cohort.

\begingroup
\setlength{\tabcolsep}{2pt}
\begin{longtable}{@{\extracolsep{\fill}}lllccccc@{}}
\caption{Factor recovery from separately trained representations. Test-set $R^2$ is mean $\pm$ sample SD over seeds 907--911; each code and target has its own development-selected readout. L/Q denote linear/quadratic probes. Full/shared/private are raw-view codes; own expert refers to the same supervised arm.}\label{tab:synimg_native_factors_linear}
\label{tab:synimg_native_factors_quadratic} \\
\toprule
Method & Code & Probe & $R_S^2$ & $R_A^2$ & $R_{F_1}^2$ & $R_{F_2}^2$ & $R_{F_1F_2}^2$ \\
\midrule
\endfirsthead
\caption[]{Factor recovery from separately trained representations (continued).} \\
\toprule
Method & Code & Probe & $R_S^2$ & $R_A^2$ & $R_{F_1}^2$ & $R_{F_2}^2$ & $R_{F_1F_2}^2$ \\
\midrule
\endhead
\midrule
\multicolumn{8}{r}{Continued on next page} \\
\endfoot
\bottomrule
\endlastfoot
Dual baseline & Raw & L & $.053\!\pm\!.023$ & $.826\!\pm\!.021$ & $.693\!\pm\!.028$ & $.634\!\pm\!.032$ & $.726\!\pm\!.022$ \\
Dual baseline & Own expert & L & $.166\!\pm\!.059$ & $.798\!\pm\!.025$ & $-.001\!\pm\!.001$ & $-.000\!\pm\!.002$ & $-.001\!\pm\!.001$ \\
Raw only & Raw & L & $.049\!\pm\!.023$ & $.802\!\pm\!.012$ & $.717\!\pm\!.020$ & $.638\!\pm\!.017$ & $.721\!\pm\!.015$ \\
SCCR & Raw & L & $.122\!\pm\!.041$ & $.890\!\pm\!.023$ & $.710\!\pm\!.043$ & $.647\!\pm\!.084$ & $.702\!\pm\!.031$ \\
SCCR & Own expert & L & $.399\!\pm\!.051$ & $.809\!\pm\!.128$ & $-.000\!\pm\!.002$ & $.002\!\pm\!.012$ & $-.000\!\pm\!.000$ \\
SCCR & Residual & L & $.115\!\pm\!.044$ & $.051\!\pm\!.092$ & $.708\!\pm\!.044$ & $.644\!\pm\!.082$ & $.699\!\pm\!.033$ \\
OLS & Raw & L & $.043\!\pm\!.017$ & $.824\!\pm\!.025$ & $.662\!\pm\!.025$ & $.609\!\pm\!.054$ & $.743\!\pm\!.022$ \\
OLS & Own expert & L & $.491\!\pm\!.069$ & $.144\!\pm\!.308$ & $.018\!\pm\!.011$ & $-.002\!\pm\!.008$ & $-.000\!\pm\!.000$ \\
OLS & Residual & L & $.007\!\pm\!.005$ & $.669\!\pm\!.272$ & $.628\!\pm\!.022$ & $.593\!\pm\!.052$ & $.734\!\pm\!.021$ \\
R-LEACE & Raw & L & $.459\!\pm\!.135$ & $.134\!\pm\!.077$ & $.202\!\pm\!.116$ & $.074\!\pm\!.065$ & $-.003\!\pm\!.005$ \\
R-LEACE & Own expert & L & $.160\!\pm\!.046$ & $.869\!\pm\!.009$ & $.001\!\pm\!.003$ & $-.003\!\pm\!.001$ & $-.002\!\pm\!.003$ \\
R-LEACE & Residual & L & $-.000\!\pm\!.002$ & $.000\!\pm\!.001$ & $.001\!\pm\!.003$ & $.003\!\pm\!.008$ & $-.000\!\pm\!.000$ \\
VCCA-private & Full & L & $.998\!\pm\!.001$ & $.868\!\pm\!.242$ & $.157\!\pm\!.103$ & $.076\!\pm\!.041$ & $-.002\!\pm\!.003$ \\
VCCA-private & Shared & L & $.990\!\pm\!.007$ & $.526\!\pm\!.445$ & $.050\!\pm\!.029$ & $.020\!\pm\!.023$ & $-.002\!\pm\!.003$ \\
VCCA-private & Private & L & $.879\!\pm\!.170$ & $.841\!\pm\!.267$ & $.110\!\pm\!.085$ & $.049\!\pm\!.041$ & $-.001\!\pm\!.001$ \\
DSN-MMD & Full & L & $.415\!\pm\!.231$ & $.915\!\pm\!.054$ & $.683\!\pm\!.058$ & $.629\!\pm\!.044$ & $.733\!\pm\!.051$ \\
DSN-MMD & Shared & L & $.015\!\pm\!.012$ & $.915\!\pm\!.054$ & $.625\!\pm\!.018$ & $.591\!\pm\!.036$ & $.733\!\pm\!.047$ \\
DSN-MMD & Private & L & $.402\!\pm\!.235$ & $.068\!\pm\!.091$ & $.315\!\pm\!.215$ & $.157\!\pm\!.169$ & $.001\!\pm\!.002$ \\
Dual baseline & Raw & Q & $.111\!\pm\!.057$ & $.919\!\pm\!.012$ & $.887\!\pm\!.022$ & $.812\!\pm\!.040$ & $.898\!\pm\!.013$ \\
Dual baseline & Own expert & Q & $.217\!\pm\!.059$ & $.817\!\pm\!.023$ & $-.002\!\pm\!.004$ & $-.001\!\pm\!.001$ & $-.001\!\pm\!.001$ \\
Raw only & Raw & Q & $.153\!\pm\!.058$ & $.913\!\pm\!.011$ & $.892\!\pm\!.009$ & $.831\!\pm\!.025$ & $.891\!\pm\!.010$ \\
SCCR & Raw & Q & $.255\!\pm\!.070$ & $.960\!\pm\!.014$ & $.878\!\pm\!.030$ & $.823\!\pm\!.071$ & $.876\!\pm\!.015$ \\
SCCR & Own expert & Q & $.540\!\pm\!.076$ & $.822\!\pm\!.128$ & $-.001\!\pm\!.004$ & $-.002\!\pm\!.001$ & $-.000\!\pm\!.000$ \\
SCCR & Residual & Q & $.162\!\pm\!.077$ & $.818\!\pm\!.057$ & $.856\!\pm\!.030$ & $.795\!\pm\!.067$ & $.874\!\pm\!.018$ \\
OLS & Raw & Q & $.131\!\pm\!.028$ & $.917\!\pm\!.012$ & $.866\!\pm\!.012$ & $.785\!\pm\!.035$ & $.887\!\pm\!.017$ \\
OLS & Own expert & Q & $.551\!\pm\!.110$ & $.147\!\pm\!.326$ & $.007\!\pm\!.007$ & $.000\!\pm\!.001$ & $-.001\!\pm\!.001$ \\
OLS & Residual & Q & $.085\!\pm\!.032$ & $.873\!\pm\!.007$ & $.837\!\pm\!.014$ & $.750\!\pm\!.044$ & $.871\!\pm\!.019$ \\
R-LEACE & Raw & Q & $.516\!\pm\!.182$ & $.154\!\pm\!.068$ & $.234\!\pm\!.194$ & $.063\!\pm\!.086$ & $-.000\!\pm\!.000$ \\
R-LEACE & Own expert & Q & $.262\!\pm\!.044$ & $.879\!\pm\!.009$ & $-.001\!\pm\!.000$ & $-.002\!\pm\!.000$ & $-.000\!\pm\!.000$ \\
R-LEACE & Residual & Q & $.001\!\pm\!.003$ & $-.000\!\pm\!.000$ & $.001\!\pm\!.003$ & $.004\!\pm\!.009$ & $-.000\!\pm\!.000$ \\
VCCA-private & Full & Q & $.998\!\pm\!.000$ & $.923\!\pm\!.152$ & $.253\!\pm\!.093$ & $.159\!\pm\!.073$ & $-.003\!\pm\!.006$ \\
VCCA-private & Shared & Q & $.997\!\pm\!.002$ & $.632\!\pm\!.433$ & $.100\!\pm\!.051$ & $.050\!\pm\!.050$ & $-.007\!\pm\!.009$ \\
VCCA-private & Private & Q & $.934\!\pm\!.096$ & $.876\!\pm\!.217$ & $.164\!\pm\!.106$ & $.057\!\pm\!.127$ & $-.001\!\pm\!.001$ \\
DSN-MMD & Full & Q & $.500\!\pm\!.246$ & $.905\!\pm\!.091$ & $.668\!\pm\!.111$ & $.507\!\pm\!.210$ & $.709\!\pm\!.096$ \\
DSN-MMD & Shared & Q & $.073\!\pm\!.024$ & $.944\!\pm\!.039$ & $.745\!\pm\!.027$ & $.697\!\pm\!.038$ & $.822\!\pm\!.037$ \\
DSN-MMD & Private & Q & $.497\!\pm\!.249$ & $.096\!\pm\!.127$ & $.388\!\pm\!.231$ & $.205\!\pm\!.220$ & $.009\!\pm\!.014$ \\
\end{longtable}
\endgroup

\begingroup
\small
\setlength{\tabcolsep}{2pt}
\begin{longtable}{@{\extracolsep{\fill}}llcccccccccc@{}}
\caption{Per-seed factor recovery: supervised dual and raw-only models. Entries are test-set $R^2$; linear and quadratic readouts for each target were selected separately. Raw is before removal; own expert is learned by the same arm. Full concatenates raw-view shared and private codes. Duplicate raw/residual controls are shown once; shared/private own-expert codes were not exported. Dual, Raw, VCCA, and DSN denote Dual baseline, Raw only, VCCA-private, and DSN-MMD.}\label{tab:synimg_native_identity_seeds}
\label{tab:synimg_native_raw_only_seeds}
\label{tab:synimg_native_sccr_seeds}
\label{tab:synimg_native_ols_seeds} \\
\toprule
\multicolumn{2}{l}{} & \multicolumn{5}{c}{Linear probes} & \multicolumn{5}{c}{Quadratic probes} \\
\cmidrule(lr){3-7}\cmidrule(l){8-12}
Method/seed & Code & $R_S^2$ & $R_A^2$ & $R_{F_1}^2$ & $R_{F_2}^2$ & $R_{F_1F_2}^2$ & $R_S^2$ & $R_A^2$ & $R_{F_1}^2$ & $R_{F_2}^2$ & $R_{F_1F_2}^2$ \\
\midrule
\endfirsthead
\caption[]{Per-seed factor recovery: supervised dual and raw-only models (continued).} \\
\toprule
\multicolumn{2}{l}{} & \multicolumn{5}{c}{Linear probes} & \multicolumn{5}{c}{Quadratic probes} \\
\cmidrule(lr){3-7}\cmidrule(l){8-12}
Method/seed & Code & $R_S^2$ & $R_A^2$ & $R_{F_1}^2$ & $R_{F_2}^2$ & $R_{F_1F_2}^2$ & $R_S^2$ & $R_A^2$ & $R_{F_1}^2$ & $R_{F_2}^2$ & $R_{F_1F_2}^2$ \\
\midrule
\endhead
\midrule
\multicolumn{12}{r}{Continued on next page} \\
\endfoot
\bottomrule
\endlastfoot
Dual 907 & Raw & $.0528$ & $.8247$ & $.7020$ & $.6302$ & $.7442$ & $.0810$ & $.9108$ & $.8635$ & $.7901$ & $.8996$ \\
Dual 907 & Own expert & $.2349$ & $.8014$ & $.0013$ & $-.0014$ & $-.0002$ & $.2767$ & $.8195$ & $.0006$ & $-.0014$ & $-.0007$ \\
Dual 908 & Raw & $.0244$ & $.8471$ & $.6451$ & $.6573$ & $.7385$ & $.0495$ & $.9291$ & $.8850$ & $.8528$ & $.9096$ \\
Dual 908 & Own expert & $.1100$ & $.7642$ & $-.0026$ & $.0028$ & $-.0032$ & $.1186$ & $.7885$ & $-.0006$ & $-.0001$ & $-.0021$ \\
Dual 909 & Raw & $.0568$ & $.8387$ & $.7168$ & $.6581$ & $.7266$ & $.0952$ & $.9271$ & $.9178$ & $.8559$ & $.9074$ \\
Dual 909 & Own expert & $.1382$ & $.7812$ & $-.0009$ & $-.0000$ & $-.0003$ & $.2234$ & $.8001$ & $-.0006$ & $-.0009$ & $-.0003$ \\
Dual 910 & Raw & $.0869$ & $.8277$ & $.6978$ & $.5807$ & $.6888$ & $.1999$ & $.9268$ & $.8698$ & $.7675$ & $.8761$ \\
Dual 910 & Own expert & $.2246$ & $.8152$ & $-.0006$ & $-.0010$ & $-.0012$ & $.2327$ & $.8300$ & $-.0006$ & $-.0014$ & $-.0003$ \\
Dual 911 & Raw & $.0435$ & $.7924$ & $.7026$ & $.6439$ & $.7295$ & $.1279$ & $.9036$ & $.8964$ & $.7947$ & $.8966$ \\
Dual 911 & Own expert & $.1206$ & $.8266$ & $-.0006$ & $-.0010$ & $-.0003$ & $.2346$ & $.8475$ & $-.0097$ & $-.0010$ & $-.0003$ \\
Raw 907 & Raw & $.0438$ & $.8081$ & $.7305$ & $.6442$ & $.7010$ & $.1575$ & $.9186$ & $.9026$ & $.8489$ & $.8850$ \\
Raw 908 & Raw & $.0605$ & $.7897$ & $.7424$ & $.6621$ & $.7256$ & $.1827$ & $.9031$ & $.8906$ & $.8443$ & $.9011$ \\
Raw 909 & Raw & $.0746$ & $.8018$ & $.6919$ & $.6344$ & $.7325$ & $.2319$ & $.9243$ & $.8806$ & $.8292$ & $.9024$ \\
Raw 910 & Raw & $.0523$ & $.7903$ & $.7040$ & $.6361$ & $.7124$ & $.0847$ & $.8988$ & $.8976$ & $.8461$ & $.8858$ \\
Raw 911 & Raw & $.0140$ & $.8179$ & $.7161$ & $.6151$ & $.7359$ & $.1106$ & $.9177$ & $.8866$ & $.7883$ & $.8821$ \\
SCCR 907 & Raw & $.1274$ & $.8570$ & $.7336$ & $.5273$ & $.7403$ & $.2952$ & $.9360$ & $.8692$ & $.7030$ & $.8683$ \\
SCCR 907 & Own expert & $.4026$ & $.5803$ & $-.0006$ & $-.0010$ & $-.0003$ & $.4316$ & $.5932$ & $-.0072$ & $-.0008$ & $-.0003$ \\
SCCR 907 & Residual & $.1258$ & $.2149$ & $.7329$ & $.5271$ & $.7390$ & $.2468$ & $.9070$ & $.8505$ & $.6826$ & $.8622$ \\
SCCR 908 & Raw & $.1303$ & $.8964$ & $.6751$ & $.5940$ & $.6910$ & $.2863$ & $.9691$ & $.8849$ & $.8146$ & $.8937$ \\
SCCR 908 & Own expert & $.3168$ & $.8746$ & $-.0030$ & $-.0012$ & $-.0002$ & $.4999$ & $.8836$ & $-.0002$ & $-.0018$ & $-.0003$ \\
SCCR 908 & Residual & $.1306$ & $.0088$ & $.6713$ & $.5903$ & $.6844$ & $.1573$ & $.7717$ & $.8715$ & $.7880$ & $.8959$ \\
SCCR 909 & Raw & $.1763$ & $.8784$ & $.6532$ & $.7186$ & $.6594$ & $.3036$ & $.9612$ & $.8335$ & $.8596$ & $.8550$ \\
SCCR 909 & Own expert & $.4143$ & $.8674$ & $-.0007$ & $.0228$ & $-.0003$ & $.5597$ & $.8791$ & $-.0006$ & $-.0011$ & $-.0003$ \\
SCCR 909 & Residual & $.1702$ & $.0105$ & $.6505$ & $.7125$ & $.6542$ & $.2166$ & $.7978$ & $.8057$ & $.8283$ & $.8519$ \\
SCCR 910 & Raw & $.0638$ & $.9008$ & $.7490$ & $.6825$ & $.7208$ & $.1343$ & $.9654$ & $.9119$ & $.8671$ & $.8818$ \\
SCCR 910 & Own expert & $.4563$ & $.8708$ & $-.0006$ & $-.0066$ & $-.0002$ & $.6276$ & $.8815$ & $-.0005$ & $-.0026$ & $-.0003$ \\
SCCR 910 & Residual & $.0511$ & $.0052$ & $.7457$ & $.6805$ & $.7206$ & $.0476$ & $.7738$ & $.8799$ & $.8296$ & $.8847$ \\
SCCR 911 & Raw & $.1100$ & $.9156$ & $.7392$ & $.7144$ & $.6992$ & $.2571$ & $.9685$ & $.8929$ & $.8711$ & $.8831$ \\
SCCR 911 & Own expert & $.4035$ & $.8535$ & $.0032$ & $-.0025$ & $-.0003$ & $.5833$ & $.8705$ & $.0021$ & $-.0023$ & $-.0003$ \\
SCCR 911 & Residual & $.0953$ & $.0163$ & $.7393$ & $.7111$ & $.6965$ & $.1402$ & $.8409$ & $.8742$ & $.8486$ & $.8775$ \\
OLS 907 & Raw & $.0547$ & $.7876$ & $.6767$ & $.5884$ & $.7644$ & $.1626$ & $.9058$ & $.8629$ & $.7874$ & $.9095$ \\
OLS 907 & Own expert & $.5685$ & $-.0014$ & $.0309$ & $-.0162$ & $-.0004$ & $.6585$ & $.0010$ & $.0183$ & $.0016$ & $-.0021$ \\
OLS 907 & Residual & $.0134$ & $.7666$ & $.6274$ & $.5731$ & $.7566$ & $.1246$ & $.8704$ & $.8427$ & $.7551$ & $.8941$ \\
OLS 908 & Raw & $.0536$ & $.8459$ & $.6491$ & $.6742$ & $.7515$ & $.1528$ & $.9122$ & $.8575$ & $.8240$ & $.8891$ \\
OLS 908 & Own expert & $.5534$ & $.0003$ & $.0103$ & $.0007$ & $-.0002$ & $.6165$ & $.0006$ & $.0051$ & $-.0000$ & $-.0003$ \\
OLS 908 & Residual & $.0088$ & $.8192$ & $.6254$ & $.6533$ & $.7445$ & $.1033$ & $.8771$ & $.8299$ & $.7964$ & $.8760$ \\
OLS 909 & Raw & $.0231$ & $.8095$ & $.6383$ & $.6493$ & $.7593$ & $.1339$ & $.9106$ & $.8652$ & $.8138$ & $.8951$ \\
OLS 909 & Own expert & $.4679$ & $.0042$ & $.0196$ & $-.0000$ & $-.0003$ & $.5517$ & $.0030$ & $.0066$ & $-.0000$ & $-.0003$ \\
OLS 909 & Residual & $-.0017$ & $.7824$ & $.6006$ & $.6386$ & $.7464$ & $.0880$ & $.8789$ & $.8380$ & $.7878$ & $.8794$ \\
OLS 910 & Raw & $.0257$ & $.8322$ & $.6996$ & $.5954$ & $.7143$ & $.1058$ & $.9228$ & $.8874$ & $.7504$ & $.8707$ \\
OLS 910 & Own expert & $.4019$ & $.0205$ & $.0244$ & $.0054$ & $-.0003$ & $.3705$ & $.0008$ & $.0053$ & $-.0000$ & $-.0003$ \\
OLS 910 & Residual & $.0061$ & $.7917$ & $.6628$ & $.5733$ & $.7051$ & $.0687$ & $.8759$ & $.8555$ & $.7125$ & $.8543$ \\
OLS 911 & Raw & $.0571$ & $.8447$ & $.6481$ & $.5386$ & $.7241$ & $.0994$ & $.9354$ & $.8576$ & $.7490$ & $.8701$ \\
OLS 911 & Own expert & $.4624$ & $.6947$ & $.0039$ & $.0024$ & $-.0003$ & $.5594$ & $.7312$ & $.0005$ & $.0002$ & $-.0003$ \\
OLS 911 & Residual & $.0071$ & $.1841$ & $.6247$ & $.5270$ & $.7187$ & $.0400$ & $.8616$ & $.8192$ & $.6989$ & $.8494$ \\
\end{longtable}
\endgroup

\begingroup
\small
\setlength{\tabcolsep}{2pt}
\begin{longtable}{@{\extracolsep{\fill}}llcccccccccc@{}}
\caption{Per-seed factor recovery: regularized erasure and shared/private models. Entries are test-set $R^2$; linear and quadratic readouts for each target were selected separately. Raw is before removal; own expert is learned by the same arm. Full concatenates raw-view shared and private codes. Duplicate raw/residual controls are shown once; shared/private own-expert codes were not exported. Dual, Raw, VCCA, and DSN denote Dual baseline, Raw only, VCCA-private, and DSN-MMD.}\label{tab:synimg_native_leace_style_seeds}
\label{tab:synimg_native_vcca_private_seeds}
\label{tab:synimg_native_dsn_mmd_seeds} \\
\toprule
\multicolumn{2}{l}{} & \multicolumn{5}{c}{Linear probes} & \multicolumn{5}{c}{Quadratic probes} \\
\cmidrule(lr){3-7}\cmidrule(l){8-12}
Method/seed & Code & $R_S^2$ & $R_A^2$ & $R_{F_1}^2$ & $R_{F_2}^2$ & $R_{F_1F_2}^2$ & $R_S^2$ & $R_A^2$ & $R_{F_1}^2$ & $R_{F_2}^2$ & $R_{F_1F_2}^2$ \\
\midrule
\endfirsthead
\caption[]{Per-seed factor recovery: regularized erasure and shared/private models (continued).} \\
\toprule
\multicolumn{2}{l}{} & \multicolumn{5}{c}{Linear probes} & \multicolumn{5}{c}{Quadratic probes} \\
\cmidrule(lr){3-7}\cmidrule(l){8-12}
Method/seed & Code & $R_S^2$ & $R_A^2$ & $R_{F_1}^2$ & $R_{F_2}^2$ & $R_{F_1F_2}^2$ & $R_S^2$ & $R_A^2$ & $R_{F_1}^2$ & $R_{F_2}^2$ & $R_{F_1F_2}^2$ \\
\midrule
\endhead
\midrule
\multicolumn{12}{r}{Continued on next page} \\
\endfoot
\bottomrule
\endlastfoot
R-LEACE 907 & Raw & $.4642$ & $.0792$ & $.1990$ & $.0546$ & $-.0002$ & $.5119$ & $.1248$ & $.1876$ & $.0368$ & $-.0003$ \\
R-LEACE 907 & Own expert & $.1234$ & $.8686$ & $-.0004$ & $-.0020$ & $-.0003$ & $.2229$ & $.8816$ & $-.0006$ & $-.0014$ & $-.0003$ \\
R-LEACE 907 & Residual & $-.0013$ & $-.0002$ & $-.0006$ & $-.0000$ & $-.0003$ & $-.0013$ & $-.0002$ & $-.0006$ & $-.0000$ & $-.0003$ \\
R-LEACE 908 & Raw & $.5817$ & $.2656$ & $.3546$ & $.0887$ & $-.0002$ & $.6860$ & $.2609$ & $.5373$ & $.0596$ & $-.0003$ \\
R-LEACE 908 & Own expert & $.1935$ & $.8716$ & $-.0006$ & $-.0026$ & $-.0069$ & $.3155$ & $.8827$ & $-.0006$ & $-.0017$ & $-.0003$ \\
R-LEACE 908 & Residual & $-.0006$ & $-.0002$ & $.0067$ & $.0002$ & $-.0001$ & $.0033$ & $-.0002$ & $.0062$ & $-.0007$ & $-.0005$ \\
R-LEACE 909 & Raw & $.5657$ & $.0769$ & $.0945$ & $.0009$ & $-.0029$ & $.6372$ & $.1052$ & $.0919$ & $.0044$ & $-.0004$ \\
R-LEACE 909 & Own expert & $.1073$ & $.8764$ & $.0058$ & $-.0040$ & $-.0003$ & $.2109$ & $.8819$ & $-.0004$ & $-.0017$ & $-.0003$ \\
R-LEACE 909 & Residual & $-.0013$ & $-.0002$ & $-.0006$ & $-.0000$ & $-.0003$ & $-.0013$ & $-.0002$ & $-.0006$ & $-.0000$ & $-.0003$ \\
R-LEACE 910 & Raw & $.2436$ & $.1201$ & $.0851$ & $.0492$ & $-.0003$ & $.2170$ & $.0997$ & $.0541$ & $.0036$ & $-.0001$ \\
R-LEACE 910 & Own expert & $.2167$ & $.8541$ & $-.0006$ & $-.0029$ & $-.0003$ & $.2887$ & $.8634$ & $-.0006$ & $-.0020$ & $-.0003$ \\
R-LEACE 910 & Residual & $-.0013$ & $-.0002$ & $-.0006$ & $-.0000$ & $-.0003$ & $-.0013$ & $-.0002$ & $-.0006$ & $-.0000$ & $-.0003$ \\
R-LEACE 911 & Raw & $.4420$ & $.1282$ & $.2744$ & $.1756$ & $-.0110$ & $.5292$ & $.1797$ & $.2979$ & $.2116$ & $-.0003$ \\
R-LEACE 911 & Own expert & $.1612$ & $.8764$ & $-.0016$ & $-.0037$ & $-.0003$ & $.2717$ & $.8836$ & $-.0006$ & $-.0016$ & $-.0003$ \\
R-LEACE 911 & Residual & $.0024$ & $.0009$ & $-.0006$ & $.0170$ & $-.0003$ & $.0036$ & $.0000$ & $-.0006$ & $.0195$ & $-.0003$ \\
VCCA 907 & Full & $.9973$ & $.9142$ & $.2613$ & $.0897$ & $-.0060$ & $.9983$ & $.9702$ & $.3716$ & $.1457$ & $-.0003$ \\
VCCA 907 & Shared & $.9834$ & $.6489$ & $.0822$ & $.0008$ & $-.0006$ & $.9950$ & $.8701$ & $.1713$ & $.0113$ & $-.0101$ \\
VCCA 907 & Private & $.9791$ & $.8596$ & $.1834$ & $.0772$ & $-.0003$ & $.9874$ & $.8998$ & $.2540$ & $.1211$ & $-.0003$ \\
VCCA 908 & Full & $.9979$ & $.9943$ & $.0724$ & $.1305$ & $-.0008$ & $.9986$ & $.9959$ & $.1675$ & $.2874$ & $-.0003$ \\
VCCA 908 & Shared & $.9976$ & $.0621$ & $.0365$ & $.0161$ & $-.0018$ & $.9984$ & $.1866$ & $.1069$ & $.0474$ & $-.0018$ \\
VCCA 908 & Private & $.5910$ & $.9926$ & $.0339$ & $.1069$ & $-.0005$ & $.7687$ & $.9948$ & $.0567$ & $.2097$ & $-.0014$ \\
VCCA 909 & Full & $.9966$ & $.9957$ & $.0990$ & $.0312$ & $-.0004$ & $.9982$ & $.9969$ & $.1755$ & $.1132$ & $-.0003$ \\
VCCA 909 & Shared & $.9850$ & $.9694$ & $.0228$ & $.0003$ & $-.0003$ & $.9940$ & $.9855$ & $.0422$ & $-.0005$ & $-.0003$ \\
VCCA 909 & Private & $.9762$ & $.9856$ & $.0702$ & $.0271$ & $-.0034$ & $.9897$ & $.9929$ & $.1151$ & $.0713$ & $-.0003$ \\
VCCA 910 & Full & $.9975$ & $.9960$ & $.0759$ & $.0399$ & $-.0008$ & $.9985$ & $.9984$ & $.2202$ & $.1215$ & $-.0003$ \\
VCCA 910 & Shared & $.9870$ & $.8994$ & $.0286$ & $.0244$ & $-.0002$ & $.9970$ & $.9828$ & $.0589$ & $.0683$ & $-.0003$ \\
VCCA 910 & Private & $.8542$ & $.9935$ & $.0438$ & $.0095$ & $-.0007$ & $.9273$ & $.9959$ & $.0971$ & $.0124$ & $-.0004$ \\
VCCA 911 & Full & $.9986$ & $.4401$ & $.2767$ & $.0863$ & $-.0042$ & $.9985$ & $.6518$ & $.3317$ & $.1270$ & $-.0128$ \\
VCCA 911 & Shared & $.9975$ & $.0498$ & $.0815$ & $.0565$ & $-.0066$ & $.9982$ & $.1362$ & $.1204$ & $.1245$ & $-.0221$ \\
VCCA 911 & Private & $.9937$ & $.3753$ & $.2182$ & $.0234$ & $-.0022$ & $.9953$ & $.4945$ & $.2995$ & $-.1311$ & $-.0037$ \\
DSN 907 & Full & $.8061$ & $.9388$ & $.7051$ & $.6670$ & $.6622$ & $.8805$ & $.9478$ & $.7682$ & $.7167$ & $.7280$ \\
DSN 907 & Shared & $.0108$ & $.9390$ & $.6125$ & $.5809$ & $.6680$ & $.0507$ & $.9572$ & $.7535$ & $.7004$ & $.7908$ \\
DSN 907 & Private & $.8011$ & $.0636$ & $.4200$ & $.4155$ & $.0033$ & $.8761$ & $.0920$ & $.5060$ & $.5274$ & $.0339$ \\
DSN 908 & Full & $.3449$ & $.9406$ & $.6789$ & $.6571$ & $.7731$ & $.3932$ & $.9319$ & $.6013$ & $.4816$ & $.6635$ \\
DSN 908 & Shared & $.0354$ & $.9411$ & $.6345$ & $.6548$ & $.7735$ & $.1129$ & $.9594$ & $.7380$ & $.7578$ & $.8115$ \\
DSN 908 & Private & $.3076$ & $.0187$ & $.3369$ & $.0565$ & $.0031$ & $.3878$ & $.0402$ & $.4187$ & $.0382$ & $.0070$ \\
DSN 909 & Full & $.3592$ & $.9321$ & $.6508$ & $.6504$ & $.7698$ & $.4329$ & $.9640$ & $.6868$ & $.6384$ & $.8484$ \\
DSN 909 & Shared & $.0082$ & $.9313$ & $.6386$ & $.5725$ & $.7690$ & $.0584$ & $.9618$ & $.7533$ & $.6824$ & $.8752$ \\
DSN 909 & Private & $.3533$ & $.2255$ & $.1106$ & $.2366$ & $.0000$ & $.4391$ & $.3160$ & $.1709$ & $.3287$ & $.0029$ \\
DSN 910 & Full & $.3742$ & $.9456$ & $.7656$ & $.6098$ & $.7632$ & $.5685$ & $.9387$ & $.7687$ & $.5286$ & $.7174$ \\
DSN 910 & Shared & $.0038$ & $.9462$ & $.6406$ & $.5820$ & $.7571$ & $.0788$ & $.9664$ & $.7761$ & $.6883$ & $.8448$ \\
DSN 910 & Private & $.3673$ & $.0139$ & $.6077$ & $.0782$ & $.0000$ & $.5727$ & $.0145$ & $.6950$ & $.1307$ & $-.0003$ \\
DSN 911 & Full & $.1918$ & $.8190$ & $.6131$ & $.5608$ & $.6950$ & $.2233$ & $.7437$ & $.5130$ & $.1683$ & $.5880$ \\
DSN 911 & Shared & $.0157$ & $.8196$ & $.5989$ & $.5656$ & $.6975$ & $.0660$ & $.8744$ & $.7031$ & $.6540$ & $.7895$ \\
DSN 911 & Private & $.1804$ & $.0166$ & $.0998$ & $-.0000$ & $-.0003$ & $.2085$ & $.0147$ & $.1490$ & $.0011$ & $-.0003$ \\
\end{longtable}
\endgroup

\paragraph{Native task heads.}
These metrics evaluate the supervised classifier trained with each network.
They differ from the post-hoc task readouts below. The dual baseline,
SCCR, OLS, and R-LEACE heads combine their own expert and retained codes;
the raw-only head uses raw, and DSN-MMD uses its raw-view shared code.
VCCA-private has no native supervised task head, so it has no entries in this table.

\begin{table}[H]
\centering
\caption{Native task-head prediction for each seed and its five-seed mean $\pm$ sample SD. Accuracy uses probability threshold $.5$; AUROC uses continuous scores; log loss is cross-entropy. All scores use the same fixed assessment cohort. These are the trained heads, not post-hoc readouts.}
\label{tab:synimg_native_head_summary}
\label{tab:synimg_native_head_seeds_1}
\label{tab:synimg_native_head_seeds_2}
\setlength{\tabcolsep}{2pt}
\begin{tabular*}{\textwidth}{@{\extracolsep{\fill}}llcccccc@{}}
\toprule
Method & Metric & 907 & 908 & 909 & 910 & 911 & Mean $\pm$ SD \\
\midrule
Dual baseline & Accuracy & $.9839$ & $.9822$ & $.9839$ & $.9846$ & $.9841$ & $.984\!\pm\!.001$ \\
 & AUROC & $.9992$ & $.9992$ & $.9991$ & $.9992$ & $.9992$ & $.9992\!\pm\!.0000$ \\
 & Log loss & $.0365$ & $.0380$ & $.0391$ & $.0361$ & $.0375$ & $.037\!\pm\!.001$ \\
Raw only & Accuracy & $.9873$ & $.9854$ & $.9846$ & $.9832$ & $.9841$ & $.985\!\pm\!.002$ \\
 & AUROC & $.9994$ & $.9994$ & $.9993$ & $.9989$ & $.9993$ & $.9993\!\pm\!.0002$ \\
 & Log loss & $.0325$ & $.0327$ & $.0335$ & $.0424$ & $.0357$ & $.035\!\pm\!.004$ \\
SCCR & Accuracy & $.9543$ & $.9719$ & $.9487$ & $.9617$ & $.9580$ & $.959\!\pm\!.009$ \\
 & AUROC & $.9920$ & $.9968$ & $.9899$ & $.9927$ & $.9927$ & $.9928\!\pm\!.0025$ \\
 & Log loss & $.1173$ & $.0739$ & $.1298$ & $.1041$ & $.1100$ & $.107\!\pm\!.021$ \\
OLS & Accuracy & $.9805$ & $.9805$ & $.9839$ & $.9829$ & $.9800$ & $.982\!\pm\!.002$ \\
 & AUROC & $.9988$ & $.9987$ & $.9991$ & $.9993$ & $.9988$ & $.9989\!\pm\!.0003$ \\
 & Log loss & $.0451$ & $.0490$ & $.0400$ & $.0369$ & $.0459$ & $.043\!\pm\!.005$ \\
R-LEACE & Accuracy & $.7708$ & $.7744$ & $.7747$ & $.7703$ & $.7756$ & $.773\!\pm\!.002$ \\
 & AUROC & $.8572$ & $.8578$ & $.8582$ & $.8558$ & $.8588$ & $.8575\!\pm\!.0011$ \\
 & Log loss & $.4722$ & $.4718$ & $.4706$ & $.4740$ & $.4708$ & $.472\!\pm\!.001$ \\
DSN-MMD & Accuracy & $.9675$ & $.9465$ & $.9480$ & $.9473$ & $.9587$ & $.954\!\pm\!.009$ \\
 & AUROC & $.9971$ & $.9920$ & $.9939$ & $.9938$ & $.9959$ & $.9946\!\pm\!.0020$ \\
 & Log loss & $.0795$ & $.1425$ & $.1359$ & $.1374$ & $.0969$ & $.118\!\pm\!.028$ \\
\bottomrule
\end{tabular*}
\end{table}

\paragraph{Post-hoc task readouts.}
The following tables fit linear or complete quadratic classifiers to frozen
learned codes and evaluate the binary task. Here $E$ denotes the arm's own
expert code, $E_0$ the same-seed dual baseline's expert code, and $R$ the
arm's retained output (private for VCCA-private/DSN-MMD; raw for the raw-only
arm). Thus $(E,R)$ and $(E_0,R)$ are distinct comparisons.
For the dual baseline, $E=E_0$, so that combined input is shown once.
The VCCA-private and DSN-MMD combined readouts use $E_0$; no own-expert
readout is available for those models. These classifiers and their
regularization were selected on development data, separately from the
native heads. Higher task AUROC is not by itself the removal objective,
because the label also depends on $A$.

\begingroup
\small
\setlength{\tabcolsep}{2pt}
\begin{longtable}{@{\extracolsep{\fill}}llcccccc@{}}
\caption{Post-hoc task readouts on learned codes: mean $\pm$ sample SD over seeds 907--911. $E$ is the own expert and $E_0$ the same-seed dual baseline expert, so $(E,R)$ and $(E_0,R)$ remain distinct. Accuracy uses threshold $.5$. Each representation and readout family is selected separately on development data; these are not native heads.}\label{tab:synimg_task_readouts_linear}
\label{tab:synimg_task_readouts_quadratic} \\
\toprule
\multicolumn{2}{l}{} & \multicolumn{3}{c}{Linear readouts} & \multicolumn{3}{c}{Quadratic readouts} \\
\cmidrule(lr){3-5}\cmidrule(l){6-8}
Method & Code & Accuracy & AUROC & Log loss & Accuracy & AUROC & Log loss \\
\midrule
\endfirsthead
\caption[]{Post-hoc task readouts on learned codes: mean $\pm$ sample SD over seeds 907--911 (continued).} \\
\toprule
\multicolumn{2}{l}{} & \multicolumn{3}{c}{Linear readouts} & \multicolumn{3}{c}{Quadratic readouts} \\
\cmidrule(lr){3-5}\cmidrule(l){6-8}
Method & Code & Accuracy & AUROC & Log loss & Accuracy & AUROC & Log loss \\
\midrule
\endhead
\midrule
\multicolumn{8}{r}{Continued on next page} \\
\endfoot
\bottomrule
\endlastfoot
Dual baseline & Raw & $.984\!\pm\!.001$ & $.9991\!\pm\!.0002$ & $.041\!\pm\!.003$ & $.983\!\pm\!.002$ & $.9991\!\pm\!.0001$ & $.040\!\pm\!.003$ \\
Dual baseline & $E$ & $.747\!\pm\!.013$ & $.8382\!\pm\!.0087$ & $.494\!\pm\!.010$ & $.744\!\pm\!.008$ & $.8366\!\pm\!.0084$ & $.499\!\pm\!.012$ \\
Dual baseline & $(E,R)=(E_0,R)$ & $.984\!\pm\!.001$ & $.9991\!\pm\!.0001$ & $.040\!\pm\!.002$ & $.981\!\pm\!.001$ & $.9985\!\pm\!.0002$ & $.051\!\pm\!.001$ \\
Raw only & Raw & $.986\!\pm\!.001$ & $.9994\!\pm\!.0001$ & $.032\!\pm\!.003$ & $.986\!\pm\!.001$ & $.9994\!\pm\!.0001$ & $.033\!\pm\!.002$ \\
Raw only & $(E_0,R)$ & $.985\!\pm\!.003$ & $.9993\!\pm\!.0002$ & $.036\!\pm\!.005$ & $.983\!\pm\!.002$ & $.9989\!\pm\!.0002$ & $.043\!\pm\!.004$ \\
SCCR & Raw & $.966\!\pm\!.007$ & $.9953\!\pm\!.0023$ & $.087\!\pm\!.018$ & $.967\!\pm\!.005$ & $.9960\!\pm\!.0014$ & $.082\!\pm\!.013$ \\
SCCR & Residual $R$ & $.903\!\pm\!.030$ & $.9484\!\pm\!.0172$ & $.317\!\pm\!.067$ & $.964\!\pm\!.005$ & $.9907\!\pm\!.0025$ & $.113\!\pm\!.017$ \\
SCCR & $E$ & $.762\!\pm\!.026$ & $.8374\!\pm\!.0388$ & $.492\!\pm\!.037$ & $.759\!\pm\!.030$ & $.8369\!\pm\!.0385$ & $.495\!\pm\!.038$ \\
SCCR & $(E,R)$ & $.961\!\pm\!.007$ & $.9901\!\pm\!.0018$ & $.117\!\pm\!.013$ & $.964\!\pm\!.007$ & $.9922\!\pm\!.0026$ & $.114\!\pm\!.019$ \\
SCCR & $(E_0,R)$ & $.959\!\pm\!.005$ & $.9882\!\pm\!.0021$ & $.129\!\pm\!.014$ & $.961\!\pm\!.007$ & $.9913\!\pm\!.0017$ & $.125\!\pm\!.014$ \\
OLS & Raw & $.984\!\pm\!.003$ & $.9991\!\pm\!.0002$ & $.039\!\pm\!.004$ & $.985\!\pm\!.003$ & $.9990\!\pm\!.0002$ & $.040\!\pm\!.004$ \\
OLS & Residual $R$ & $.978\!\pm\!.011$ & $.9954\!\pm\!.0077$ & $.072\!\pm\!.059$ & $.984\!\pm\!.004$ & $.9988\!\pm\!.0007$ & $.043\!\pm\!.009$ \\
OLS & $E$ & $.568\!\pm\!.108$ & $.5837\!\pm\!.1384$ & $.656\!\pm\!.081$ & $.562\!\pm\!.111$ & $.5737\!\pm\!.1429$ & $.657\!\pm\!.081$ \\
OLS & $(E,R)$ & $.982\!\pm\!.002$ & $.9986\!\pm\!.0003$ & $.048\!\pm\!.005$ & $.980\!\pm\!.002$ & $.9961\!\pm\!.0011$ & $.068\!\pm\!.008$ \\
OLS & $(E_0,R)$ & $.982\!\pm\!.002$ & $.9984\!\pm\!.0006$ & $.049\!\pm\!.007$ & $.981\!\pm\!.003$ & $.9973\!\pm\!.0006$ & $.062\!\pm\!.007$ \\
R-LEACE & Raw & $.587\!\pm\!.029$ & $.6222\!\pm\!.0451$ & $.667\!\pm\!.021$ & $.575\!\pm\!.027$ & $.6109\!\pm\!.0370$ & $.670\!\pm\!.015$ \\
R-LEACE & Residual $R$ & $.510\!\pm\!.007$ & $.5011\!\pm\!.0026$ & $.693\!\pm\!.000$ & $.510\!\pm\!.005$ & $.5004\!\pm\!.0040$ & $.693\!\pm\!.000$ \\
R-LEACE & $E$ & $.774\!\pm\!.002$ & $.8559\!\pm\!.0019$ & $.473\!\pm\!.003$ & $.772\!\pm\!.001$ & $.8543\!\pm\!.0013$ & $.477\!\pm\!.002$ \\
R-LEACE & $(E,R)$ & $.774\!\pm\!.002$ & $.8559\!\pm\!.0018$ & $.473\!\pm\!.003$ & $.772\!\pm\!.001$ & $.8547\!\pm\!.0016$ & $.476\!\pm\!.002$ \\
R-LEACE & $(E_0,R)$ & $.747\!\pm\!.013$ & $.8382\!\pm\!.0086$ & $.494\!\pm\!.010$ & $.742\!\pm\!.011$ & $.8351\!\pm\!.0103$ & $.499\!\pm\!.012$ \\
VCCA-private & Full & $.771\!\pm\!.069$ & $.8425\!\pm\!.0764$ & $.481\!\pm\!.082$ & $.767\!\pm\!.071$ & $.8443\!\pm\!.0755$ & $.480\!\pm\!.081$ \\
VCCA-private & Shared & $.678\!\pm\!.123$ & $.7341\!\pm\!.1512$ & $.578\!\pm\!.112$ & $.680\!\pm\!.121$ & $.7319\!\pm\!.1552$ & $.587\!\pm\!.101$ \\
VCCA-private & Private $R$ & $.764\!\pm\!.073$ & $.8347\!\pm\!.0845$ & $.490\!\pm\!.087$ & $.755\!\pm\!.085$ & $.8309\!\pm\!.1029$ & $.492\!\pm\!.116$ \\
VCCA-private & $(E_0,R)$ & $.785\!\pm\!.018$ & $.8649\!\pm\!.0136$ & $.461\!\pm\!.019$ & $.778\!\pm\!.023$ & $.8589\!\pm\!.0159$ & $.471\!\pm\!.022$ \\
DSN-MMD & Full & $.973\!\pm\!.005$ & $.9973\!\pm\!.0008$ & $.066\!\pm\!.010$ & $.954\!\pm\!.009$ & $.9935\!\pm\!.0028$ & $.106\!\pm\!.023$ \\
DSN-MMD & Shared & $.978\!\pm\!.005$ & $.9980\!\pm\!.0006$ & $.059\!\pm\!.009$ & $.981\!\pm\!.003$ & $.9985\!\pm\!.0004$ & $.050\!\pm\!.006$ \\
DSN-MMD & Private $R$ & $.529\!\pm\!.026$ & $.5404\!\pm\!.0433$ & $.687\!\pm\!.011$ & $.532\!\pm\!.018$ & $.5523\!\pm\!.0341$ & $.684\!\pm\!.013$ \\
DSN-MMD & $(E_0,R)$ & $.744\!\pm\!.011$ & $.8367\!\pm\!.0084$ & $.495\!\pm\!.010$ & $.738\!\pm\!.012$ & $.8394\!\pm\!.0108$ & $.482\!\pm\!.009$ \\
\end{longtable}
\endgroup

\subsection{Common-embedding results}
\label{sec:synimg_common_results}
All operators below use the frozen raw and expert embeddings of the same Dual baseline within each seed. Five-seed estimates use exactly seeds 907--911 and the same 4,096 assessment pairs; they are a retrospective subset. Seeds 907 and 908 reused a previously inspected assessment cohort. Means and sample standard deviations therefore summarize initialization variation conditional on shared data, not uncertainty over independently sampled datasets. The later JIVE budget follow-up is reported separately on its two converged seeds.
AJIVE reports the paired individual output for the gap-selected and $(4,4)$ initial ranks. For JIVE controls, P denotes paired inference, R raw-only inference, I the denoised individual reconstruction, and Rem the centered raw-minus-joint remainder. A paired extension already uses both views before its residual-only readout. D-CCA also requires both views. Combined readouts concatenate the common Dual baseline expert code with the listed output. Factor scores below are retained-output $R^2$; negative values are retained and only rounded for display.

Identical controls are printed once: common-embedding SCCR and matched top-$q$ have identical results; for JIVE with $q=0$, paired and raw-only individual outputs coincide at each $k$, and every remainder equals Raw. These equalities apply to both probe families and all reported prediction metrics. Zero and Exact LEACE remain separate because their factor scores differ.

\begin{table}[H]
\centering
\caption{Factor recovery on common dual-baseline embeddings: mean $\pm$ sample SD over seeds 907--911. AJIVE uses paired individual outputs; Gap is the primary singular-value-ratio initial-rank rule. JIVE $q=0$ tests no joint removal. D-CCA is the empirical covariance plug-in without S-POET.}
\label{tab:synimg_common_operators_linear_factors}
\label{tab:synimg_common_ajive_linear_factors}
\label{tab:synimg_common_jive_q0_linear_factors}
\label{tab:synimg_common_operators_quadratic_factors}
\label{tab:synimg_common_ajive_quadratic_factors}
\label{tab:synimg_common_jive_q0_quadratic_factors}
\setlength{\tabcolsep}{2pt}
\begin{tabular*}{\textwidth}{@{\extracolsep{\fill}}lccccc@{}}
\toprule
Output & $R_S^2$ & $R_A^2$ & $R_{F_1}^2$ & $R_{F_2}^2$ & $R_{F_1F_2}^2$ \\
\midrule
\midrule
\multicolumn{6}{l}{\textit{Linear probes}} \\*
Raw & $.053\!\pm\!.023$ & $.826\!\pm\!.021$ & $.693\!\pm\!.028$ & $.634\!\pm\!.032$ & $.726\!\pm\!.022$ \\
Zero & $-.001\!\pm\!.000$ & $-.000\!\pm\!.000$ & $-.001\!\pm\!.000$ & $-.000\!\pm\!.000$ & $-.000\!\pm\!.000$ \\
SCCR & $.055\!\pm\!.020$ & $.010\!\pm\!.004$ & $.691\!\pm\!.027$ & $.633\!\pm\!.033$ & $.724\!\pm\!.022$ \\
Exact LEACE & $-.001\!\pm\!.000$ & $-.000\!\pm\!.000$ & $.001\!\pm\!.003$ & $.000\!\pm\!.000$ & $.001\!\pm\!.002$ \\
D-CCA & $.043\!\pm\!.019$ & $.451\!\pm\!.039$ & $.690\!\pm\!.029$ & $.632\!\pm\!.030$ & $.723\!\pm\!.023$ \\
AJIVE Gap individual & $-.002\!\pm\!.000$ & $.445\!\pm\!.014$ & $-.000\!\pm\!.001$ & $.002\!\pm\!.004$ & $.161\!\pm\!.006$ \\
AJIVE $(4,4)$ individual & $-.002\!\pm\!.000$ & $.013\!\pm\!.014$ & $.010\!\pm\!.013$ & $.010\!\pm\!.011$ & $.369\!\pm\!.023$ \\
JIVE $k=4,q=0$ individual & $-.002\!\pm\!.000$ & $.579\!\pm\!.025$ & $.009\!\pm\!.011$ & $.009\!\pm\!.009$ & $.381\!\pm\!.041$ \\
JIVE $k=8,q=0$ individual & $-.001\!\pm\!.000$ & $.697\!\pm\!.025$ & $.333\!\pm\!.151$ & $.245\!\pm\!.086$ & $.488\!\pm\!.046$ \\
JIVE $k=16,q=0$ individual & $.004\!\pm\!.004$ & $.741\!\pm\!.035$ & $.500\!\pm\!.044$ & $.431\!\pm\!.068$ & $.598\!\pm\!.029$ \\
\midrule
\multicolumn{6}{l}{\textit{Quadratic probes}} \\*
Raw & $.111\!\pm\!.057$ & $.919\!\pm\!.012$ & $.887\!\pm\!.022$ & $.812\!\pm\!.040$ & $.898\!\pm\!.013$ \\
Zero & $-.001\!\pm\!.000$ & $-.000\!\pm\!.000$ & $-.001\!\pm\!.000$ & $-.000\!\pm\!.000$ & $-.000\!\pm\!.000$ \\
SCCR & $.056\!\pm\!.028$ & $.662\!\pm\!.067$ & $.836\!\pm\!.034$ & $.757\!\pm\!.040$ & $.882\!\pm\!.015$ \\
Exact LEACE & $-.001\!\pm\!.000$ & $-.000\!\pm\!.000$ & $.001\!\pm\!.003$ & $.000\!\pm\!.000$ & $.001\!\pm\!.003$ \\
D-CCA & $.073\!\pm\!.030$ & $.798\!\pm\!.021$ & $.860\!\pm\!.027$ & $.780\!\pm\!.043$ & $.887\!\pm\!.013$ \\
AJIVE Gap individual & $-.002\!\pm\!.000$ & $.462\!\pm\!.024$ & $.018\!\pm\!.030$ & $.017\!\pm\!.008$ & $.217\!\pm\!.043$ \\
AJIVE $(4,4)$ individual & $-.002\!\pm\!.001$ & $.174\!\pm\!.115$ & $.088\!\pm\!.065$ & $.048\!\pm\!.034$ & $.469\!\pm\!.038$ \\
JIVE $k=4,q=0$ individual & $-.001\!\pm\!.002$ & $.687\!\pm\!.037$ & $.149\!\pm\!.076$ & $.089\!\pm\!.041$ & $.593\!\pm\!.030$ \\
JIVE $k=8,q=0$ individual & $.005\!\pm\!.002$ & $.784\!\pm\!.013$ & $.562\!\pm\!.064$ & $.498\!\pm\!.043$ & $.687\!\pm\!.033$ \\
JIVE $k=16,q=0$ individual & $.033\!\pm\!.027$ & $.858\!\pm\!.010$ & $.744\!\pm\!.031$ & $.659\!\pm\!.055$ & $.810\!\pm\!.019$ \\
\bottomrule
\end{tabular*}
\end{table}

\begin{table}[H]
\centering
\caption{Task AUROC on common embeddings: mean $\pm$ sample SD over seeds 907--911. Res uses the listed output alone; Comb adds the common expert code. Paired AJIVE outputs and D-CCA already use both views before readout. These are frozen readouts, not native heads.}
\label{tab:synimg_common_operators_auroc}
\label{tab:synimg_common_ajive_auroc}
\label{tab:synimg_common_jive_q0_auroc}
\setlength{\tabcolsep}{2pt}
\begin{tabular*}{\textwidth}{@{\extracolsep{\fill}}lcccc@{}}
\toprule
Output & Linear Res & Linear Comb & Quadratic Res & Quadratic Comb \\
\midrule
Raw & $.9991\!\pm\!.0002$ & $.9991\!\pm\!.0001$ & $.9991\!\pm\!.0001$ & $.9985\!\pm\!.0002$ \\
Zero & $.5000\!\pm\!.0000$ & $.8382\!\pm\!.0087$ & $.5000\!\pm\!.0000$ & $.8366\!\pm\!.0084$ \\
SCCR & $.9132\!\pm\!.0142$ & $.9916\!\pm\!.0018$ & $.9814\!\pm\!.0049$ & $.9952\!\pm\!.0011$ \\
Exact LEACE & $.5000\!\pm\!.0000$ & $.8382\!\pm\!.0087$ & $.5000\!\pm\!.0000$ & $.8366\!\pm\!.0084$ \\
D-CCA & $.9948\!\pm\!.0012$ & $.9988\!\pm\!.0002$ & $.9979\!\pm\!.0004$ & $.9981\!\pm\!.0002$ \\
AJIVE Gap individual & $.9991\!\pm\!.0001$ & $.9990\!\pm\!.0001$ & $.9992\!\pm\!.0001$ & $.9986\!\pm\!.0002$ \\
AJIVE $(4,4)$ individual & $.8597\!\pm\!.0211$ & $.9958\!\pm\!.0030$ & $.9325\!\pm\!.0254$ & $.9971\!\pm\!.0011$ \\
JIVE $k=4,q=0$ individual & $.9991\!\pm\!.0001$ & $.9991\!\pm\!.0001$ & $.9992\!\pm\!.0001$ & $.9986\!\pm\!.0001$ \\
JIVE $k=8,q=0$ individual & $.9991\!\pm\!.0001$ & $.9991\!\pm\!.0001$ & $.9992\!\pm\!.0001$ & $.9986\!\pm\!.0001$ \\
JIVE $k=16,q=0$ individual & $.9991\!\pm\!.0002$ & $.9991\!\pm\!.0001$ & $.9992\!\pm\!.0001$ & $.9985\!\pm\!.0001$ \\
\bottomrule
\end{tabular*}
\end{table}

\begin{table}[H]
\centering
\caption{Accuracy and log loss on common embeddings: mean $\pm$ sample SD over seeds 907--911. Res/Comb denote output-only/expert-plus-output. Accuracy uses threshold $.5$; validation log loss selects each frozen readout.}
\label{tab:synimg_common_operators_linear_prediction}
\label{tab:synimg_common_operators_quadratic_prediction}
\setlength{\tabcolsep}{2pt}
\begin{tabular*}{\textwidth}{@{\extracolsep{\fill}}lcccc@{}}
\toprule
Output & Accuracy Res & Accuracy Comb & Log loss Res & Log loss Comb \\
\midrule
\midrule
\multicolumn{5}{l}{\textit{Linear readouts}} \\*
Raw & $.984\!\pm\!.001$ & $.984\!\pm\!.001$ & $.041\!\pm\!.003$ & $.040\!\pm\!.002$ \\
Zero & $.513\!\pm\!.000$ & $.747\!\pm\!.013$ & $.693\!\pm\!.000$ & $.494\!\pm\!.010$ \\
SCCR & $.824\!\pm\!.023$ & $.968\!\pm\!.004$ & $.380\!\pm\!.022$ & $.105\!\pm\!.012$ \\
Exact LEACE & $.513\!\pm\!.000$ & $.747\!\pm\!.013$ & $.693\!\pm\!.000$ & $.494\!\pm\!.010$ \\
D-CCA & $.982\!\pm\!.003$ & $.983\!\pm\!.001$ & $.072\!\pm\!.009$ & $.045\!\pm\!.002$ \\
\midrule
\multicolumn{5}{l}{\textit{Quadratic readouts}} \\*
Raw & $.983\!\pm\!.002$ & $.981\!\pm\!.001$ & $.040\!\pm\!.003$ & $.051\!\pm\!.001$ \\
Zero & $.513\!\pm\!.000$ & $.744\!\pm\!.008$ & $.693\!\pm\!.000$ & $.499\!\pm\!.012$ \\
SCCR & $.955\!\pm\!.015$ & $.972\!\pm\!.002$ & $.184\!\pm\!.034$ & $.086\!\pm\!.007$ \\
Exact LEACE & $.513\!\pm\!.000$ & $.744\!\pm\!.008$ & $.693\!\pm\!.000$ & $.499\!\pm\!.012$ \\
D-CCA & $.981\!\pm\!.002$ & $.981\!\pm\!.000$ & $.054\!\pm\!.005$ & $.057\!\pm\!.001$ \\
\bottomrule
\end{tabular*}
\end{table}

\begin{table}[H]
\centering
\caption{Common-embedding rank diagnostics. SCCR and matched top-$q$ remove the same $q$; D-CCA uses all supported pairs (relative covariance cutoff $10^{-12}$). JIVE $q=0$ converges in two iterations at each $k=4,8,16$, with individual ranks $(k,k)$; all inference/output variants are eligible. AJIVE uses 1,000 Wedin/random-direction resamples and identifiability checks; final individual ranks use the original singular thresholds without a cap.}
\label{tab:synimg_common_rank_summary}
\label{tab:synimg_common_ajive_ranks_1}
\setlength{\tabcolsep}{2pt}

\par\noindent\textit{(a) Operator ranks}
\begin{tabular*}{\textwidth}{@{\extracolsep{\fill}}lrrrrr@{}}
\toprule
Seed & SCCR $q$ & LEACE removed & D-CCA supports & D-CCA pairs & JIVE $q=0$ iterations \\
\midrule
907 & 2 & 63 & (63,64) & 63 & 2 / 2 / 2 \\
908 & 2 & 64 & (64,64) & 64 & 2 / 2 / 2 \\
909 & 2 & 64 & (64,64) & 64 & 2 / 2 / 2 \\
910 & 3 & 64 & (64,64) & 64 & 2 / 2 / 2 \\
911 & 3 & 64 & (64,64) & 64 & 2 / 2 / 2 \\
\bottomrule
\end{tabular*}

\par\noindent\textit{(b) AJIVE initial and final ranks}
\begin{tabular*}{\textwidth}{@{\extracolsep{\fill}}llrrr@{}}
\toprule
Seed & Setting & Initial (raw, expert) & Joint & Individual (raw, expert) \\
\midrule
907 & gap & (2,2) & 0 & (2,2) \\
907 & (4,4) & (4,4) & 1 & (4,4) \\
908 & gap & (2,2) & 0 & (2,2) \\
908 & (4,4) & (4,4) & 1 & (4,4) \\
909 & gap & (2,2) & 0 & (2,2) \\
909 & (4,4) & (4,4) & 1 & (3,4) \\
910 & gap & (2,2) & 0 & (2,2) \\
910 & (4,4) & (4,4) & 1 & (3,4) \\
911 & gap & (2,2) & 0 & (2,2) \\
911 & (4,4) & (4,4) & 1 & (3,4) \\
\bottomrule
\end{tabular*}

\end{table}

\begin{table}[H]
\caption{Original positive-joint-rank JIVE outcomes: all entries are NC and unscored. Numbers give final relative fitted-matrix change at the 5,000-iteration cap; eligibility requires a value below $10^{-9}$. Each $(k,q)$ has four variants (P/I, P/Rem, R/I, R/Rem): 24 ineligible variants per seed, 120 across five seeds, arising from 30 distinct rank fits. NC is never assigned chance or zero recovery. Individual rank budgets are $(k-q,k-q)$, with block Frobenius scaling and no rank expansion.}
\label{tab:synimg_common_jive_positive_q_nc}
\centering
\setlength{\tabcolsep}{2pt}
\begin{tabular*}{\textwidth}{@{\extracolsep{\fill}}lrrrrr@{}}
\toprule
Rank setting & 907 & 908 & 909 & 910 & 911 \\
\midrule
$k=4,q=1$ & $1.999\times10^{-6}$ & $2.366\times10^{-6}$ & $1.325\times10^{-6}$ & $1.932\times10^{-6}$ & $1.071\times10^{-6}$ \\
$k=4,q=2$ & $4.410\times10^{-5}$ & $1.485\times10^{-6}$ & $5.008\times10^{-8}$ & $1.788\times10^{-5}$ & $4.893\times10^{-8}$ \\
$k=8,q=1$ & $3.054\times10^{-7}$ & $3.164\times10^{-7}$ & $1.142\times10^{-6}$ & $7.696\times10^{-7}$ & $3.311\times10^{-7}$ \\
$k=8,q=2$ & $9.055\times10^{-6}$ & $3.289\times10^{-5}$ & $4.887\times10^{-7}$ & $1.032\times10^{-6}$ & $8.283\times10^{-7}$ \\
$k=16,q=1$ & $4.030\times10^{-7}$ & $1.725\times10^{-6}$ & $6.336\times10^{-7}$ & $9.460\times10^{-7}$ & $2.266\times10^{-6}$ \\
$k=16,q=2$ & $1.584\times10^{-6}$ & $6.083\times10^{-6}$ & $9.851\times10^{-8}$ & $2.062\times10^{-7}$ & $1.007\times10^{-7}$ \\
\bottomrule
\end{tabular*}
\end{table}
\subsection{Exploratory JIVE iteration-budget follow-up}
\label{sec:synimg_common_jive_followup}
This separate follow-up reran only $k=4,q=2$, individual ranks $(2,2)$, on the same calibration embeddings. It preserved initialization, block scaling, tolerance $10^{-9}$ and numerical cutoff $10^{-12}$, without whitening. A 20,000-step failure triggered a restart from the original initialization at a 50,000-step cap; recovery and prediction never selected the budget. All five development outcomes and eligible readouts were locked before this runner loaded assessment arrays. The cohort had already been inspected, so the follow-up remains exploratory. No neural network was trained, and no raw-minus-joint remainder was added in this follow-up.

\begin{table}[H]
\centering
\caption{Exploratory JIVE iteration-budget follow-up. Every 20,000-step attempt was NC; only seeds 908 and 911 passed the unchanged rule at the 50,000 cap. Factor $R^2$ and frozen task AUROC are mean $\pm$ sample SD on those two converged seeds only ($n=2$), not five-seed estimates; seeds 907, 909 and 910 remain unscored. P/I and R/I are two extensions of the same decomposition. Relative fitted-matrix change and sampled component stability do not establish a global optimum. Near-zero weak-factor recovery can coexist with strong combined prediction, so these results do not establish general predictive failure.}
\label{tab:synimg_common_jive_followup_convergence}
\label{tab:synimg_common_jive_followup_factors}
\label{tab:synimg_common_jive_followup_auroc}
\setlength{\tabcolsep}{2pt}

\par\noindent\textit{(a) Convergence by seed}
\begin{tabular*}{\textwidth}{@{\extracolsep{\fill}}lrrrr@{}}
\toprule
Seed & 20k final change & 50k iterations & 50k final change & Eligible \\
\midrule
907 & $1.316\times10^{-7}$ & 50000 & $1.442\times10^{-8}$ & NC; unscored \\
908 & $5.627\times10^{-8}$ & 48350 & $9.999\times10^{-10}$ & Yes \\
909 & $7.541\times10^{-8}$ & 50000 & $3.507\times10^{-6}$ & NC; unscored \\
910 & $6.373\times10^{-8}$ & 50000 & $1.725\times10^{-9}$ & NC; unscored \\
911 & $1.373\times10^{-7}$ & 43623 & $9.988\times10^{-10}$ & Yes \\
\bottomrule
\end{tabular*}

\par\noindent\textit{(b) Individual-output factor recovery on converged seeds}
\begin{tabular*}{\textwidth}{@{\extracolsep{\fill}}lrrrrr@{}}
\toprule
Output & $R_S^2$ & $R_A^2$ & $R_{F_1}^2$ & $R_{F_2}^2$ & $R_{F_1F_2}^2$ \\
\midrule
P/I, linear & $-.0015\!\pm\!.0002$ & $.0308\!\pm\!.0066$ & $.0000\!\pm\!.0014$ & $.0057\!\pm\!.0074$ & $.2678\!\pm\!.0176$ \\
P/I, quadratic & $-.0021\!\pm\!.0008$ & $.0599\!\pm\!.0239$ & $.0172\!\pm\!.0114$ & $.0119\!\pm\!.0115$ & $.3035\!\pm\!.0505$ \\
R/I, linear & $-.0015\!\pm\!.0000$ & $.0876\!\pm\!.0091$ & $.0004\!\pm\!.0015$ & $.0055\!\pm\!.0074$ & $.3220\!\pm\!.0271$ \\
R/I, quadratic & $-.0024\!\pm\!.0006$ & $.1273\!\pm\!.0251$ & $.0167\!\pm\!.0123$ & $.0143\!\pm\!.0108$ & $.3787\!\pm\!.0622$ \\
\bottomrule
\end{tabular*}

\par\noindent\textit{(c) Task AUROC on converged seeds; Res/Comb: individual only/expert plus individual}
\begin{tabular*}{\textwidth}{@{\extracolsep{\fill}}lrrrr@{}}
\toprule
Output & Linear Res & Linear Comb & Quadratic Res & Quadratic Comb \\
\midrule
P/I & $.9700\!\pm\!.0083$ & $.9990\!\pm\!.0001$ & $.9823\!\pm\!.0116$ & $.9983\!\pm\!.0001$ \\
R/I & $.9827\!\pm\!.0030$ & $.9954\!\pm\!.0009$ & $.9958\!\pm\!.0010$ & $.9976\!\pm\!.0000$ \\
\bottomrule
\end{tabular*}

\end{table}
\clearpage

\section{Experimental Details: Skin-lesion Classification}
\label{app:skincon}
\label{app:skincon_protocol}
This study tests three questions: whether the residual improves
prediction beyond the observed concepts, whether Crust explains part
of the gain from a fixed residual, and whether restoring Crust changes
its linear accessibility after retraining. The latter two comparisons
are separate experiments. We report both seeds, the in-distribution
Fitzpatrick17k test set, and the external DDI cohort. Models trained
end to end and operators fitted to common embeddings are kept separate.
The following protocol and tables use the completed studies; no new
model or probe was fitted for this appendix.

\subsection{Data, exclusions, and diagnosis label}
\label{app:skincon_data}
We use the 48 dermatologist annotations provided by SkinCon~\citep{skincon}
on Fitzpatrick17k~\citep{fitz} and DDI~\citep{ddi} images. Image identifiers join the annotations,
images, and diagnosis metadata. Every retained image has all 48 binary
annotations; we do not impute missing concept values. The source
annotation file contains 3,690 Fitzpatrick17k records, of which 460
are flagged by SkinCon for exclusion. We also exclude three records
marked wrongly labelled in the source metadata and both records of
one exact-image duplicate pair whose Erosion and skin-tone annotations
conflict. These exclusions leave 3,225 images. The duplicate pair is
separate from the three wrongly labelled records. The DDI annotation
file contains 656 records; its 20 SkinCon-flagged records are excluded,
leaving 636 images. All-zero concept vectors in flagged records are
not treated as observed concept absence.

The three additional Fitzpatrick17k exclusions are
\texttt{02fd9e1b56e0c3adae18233f430f2fea.jpg},
\texttt{c5c075da4848a22e7b904957deb7c609.jpg}, and
\texttt{c93610357aa4f74a99c6e7f1b009f9cb.jpg}.
The two excluded copies are
\texttt{22f1d783dd6821defafcc915a8146c41.jpg} and
\texttt{5f567374f688120109c2bab349ff21cf.jpg}.
We preserve the original images and annotations separately from the
analysis eligibility fields, without resolving the disagreement by
choosing one copy.

The binary source label is malignant versus benign or non-neoplastic.
DDI uses its supplied malignant indicator, except that
\texttt{000264.png}, \texttt{000502.png}, \texttt{000538.png}, and
\texttt{000556.png}, labelled actinic keratosis, are assigned to the
positive class under our study convention. These are study-specific
label overrides, not corrections to the source pathology labels.
They increase the positive count from 167 to 171. Other differences
in diagnosis labels between the datasets were not exhaustively
adjudicated. We treat skin-tone value $-1$ as unknown and retain the
84 eligible Fitzpatrick17k images with unknown primary skin tone in
the overall analysis. This value is not used for missing concepts.
DDI reports skin-tone groups 1--2, 3--4, and 5--6; we do not infer an
individual Fitzpatrick value from those groups.

\paragraph{Image groups and splits.}
We group exact oriented-RGB matches, identical source URLs, and
aligned perceptual duplicate candidates before splitting. Perceptual
candidates have pHash distance at most 6 or dHash distance at most 4,
absolute log-aspect-ratio difference at most $.08$, and normalized
grayscale correlation at least $.97$. Eleven candidate pairs were
identified and kept within their image groups. The scan found no
cross-dataset candidates and quarantined no additional source images.
It does not exhaustively detect crops, rotations, altered borders, or
multiple photographs of the same patient.

A fixed split seed of 909 allocates source image groups to training,
validation, and test sets in proportions $.6/.2/.2$, balancing image
and diagnosis counts with larger groups assigned first. All model
seeds share these assignments. Table~\ref{tab:skincon_cohort_counts}
reports the resulting counts. DDI is the external evaluation cohort
and does not enter this split assignment. Patient identifiers are
unavailable, so these are image-level splits; image matching does not
establish independence between patients.

We further split the 1,935 source training images by image group into
965 calibration images and 970 audit-readout images, using a fixed
50\% group split with seed 909. The calibration subset fits the
residualization operators; the readout subset fits and selects the
post-hoc predictors. Both subsets train the image encoder. This
separates operator and readout fitting, but does not make either
subset independent of representation training. Validation selects
network checkpoints, while readout selection uses folds within the
970 readout images.

\begin{table}[t]
\centering
\caption{SkinCon cohort counts. Positive denotes the study's binary diagnosis label; concept columns count positive annotations. Calibration and readout rows partition the training row, rather than adding further images.}
\label{tab:skincon_cohort_counts}
\setlength{\tabcolsep}{3pt}
\begin{tabular*}{\textwidth}{@{\extracolsep{\fill}}lrrrrrr@{}}
\toprule
Cohort / partition & Images & Diagnosis $+$ & Diagnosis $-$ & Crust & Erosion & Scale \\
\midrule
Fitzpatrick17k training & 1,935 & 286 & 1,649 & 294 & 116 & 393 \\
\quad Operator calibration & 965 & 129 & 836 & 128 & 67 & 199 \\
\quad Audit-readout fitting & 970 & 157 & 813 & 166 & 49 & 194 \\
Fitzpatrick17k validation & 645 & 96 & 549 & 101 & 43 & 143 \\
Fitzpatrick17k test & 645 & 95 & 550 & 102 & 40 & 148 \\
DDI external evaluation & 636 & 171 & 465 & 53 & 14 & 103 \\
\bottomrule
\end{tabular*}
\end{table}

\subsection{Expert views and image preprocessing}
\label{app:skincon_views}
Let $\tilde{x}_{45}$ denote the 45 annotations remaining after omitting
Crust, Erosion, and Scale. Crust is the primary omitted concept,
denoted by $C$. The omitted network input is
$(\tilde{x}_{45},0,0,0)$, the Crust-restored input is
$(\tilde{x}_{45},C,0,0)$, and its shuffled control is
$(\tilde{x}_{45},\widetilde C,0,0)$. The expert network therefore has
48 input slots in every condition; Erosion and Scale remain zero in
the Crust-only comparison. These zeros implement deliberate omission,
not a claim that the omitted concept is absent from the image.
Earlier models that restored all three concepts are a separate
experiment and are not the Crust-only intervention.

For shuffled restoration, we randomly reassign the true binary Crust
annotations among ID-sorted images within each partition. Each model
seed has one fixed, prevalence-preserving permutation for calibration,
readout fitting, validation, source test, and DDI. The training
assignment is the union of the calibration and readout assignments,
so an image receives the same expert input during training and later
readout fitting. There is no diagnosis stratification, and an image
may retain its original binary value by chance. The concept labels
used to evaluate recovery always remain the true annotations.

Images undergo EXIF orientation correction, RGB conversion,
aspect-preserving bicubic resizing to fit within $224\times224$, and
centered padding with the ImageNet channel means. We apply the fixed
ImageNet mean and standard-deviation normalization. Training adds
independent horizontal and vertical flips with probability $.5$,
rotation $\theta\in[-10^\circ,10^\circ]$, and a zoom-out factor
$u\in[.9,1]$. To keep the full image visible after rotation, the total
geometric scale is $u/(|\cos\theta|+|\sin\theta|)$; there is no random
crop. Brightness and contrast factors lie in $[.95,1.05]$, with no hue
or saturation changes. Transformations are fixed by image ID, model
seed, and epoch. Calibration, validation, and evaluation use only the
deterministic preprocessing.

\subsection{Matched diagnosis networks and training}
\label{app:skincon_native_training}
The SkinCon experiments run on a MacBook Pro with an Apple M5 Max
chip and 128\,GB of RAM.
The dual baseline, SCCR, OLS, and regularized LEACE-style models use
the same architecture. An ImageNet-initialized EfficientNetV2-S~\citep{pmlr-v139-tan21a,5206848} image
encoder produces a pooled 1,280-dimensional vector. We normalize this
vector to unit $\ell_2$ norm and project it through
$1280\to32\to32$, with SiLU and dropout $.2$ between the two linear
layers. The expert input passes through a $48\to64$ linear layer and
SiLU, unit $\ell_2$ normalization, and a $64\to32\to32$ projection of
the same form. Thus $z_r,z_e\in\mathbb R^{32}$ are the projected codes
that enter the operator, rather than the pooled image features.

The prediction head separately applies layer normalization,
a $32\to32$ linear layer, SiLU, and dropout $.2$ to the residual and
centered expert codes. Writing the resulting vectors as $h_r,h_e$,
the single diagnosis logit is
\[
\ell=w_r^\top h_r+w_e^\top h_e+w_{\rm int}^\top(h_r\odot h_e)+b.
\]
We train with unweighted binary cross-entropy on the diagnosis label.
There is no auxiliary concept-prediction loss or factor supervision.
The matched networks each have 20,228,145 trainable parameters. All
encoder, projection, and head parameters update from the first step.

AdamW uses learning rate $10^{-4}$, weight decay $10^{-3}$, and
gradient-norm clipping at 5, without a learning-rate schedule.
Each training image appears once per epoch. The shuffled sampler
makes 15 equally sized batches of 129, avoiding a small covariance
batch at the end; it does not balance the diagnosis classes.
Evaluation batches contain at most 64 images. Training has a maximum
of 100 epochs and stops after five epochs without lower validation
log loss. Strict ties keep the earlier checkpoint. The two model
seeds are 909 and 910, with matched initialization, image order, and
augmentation across the matched conditions within each seed.

During training, SCCR estimates its OAS-regularized CCA operator from
the current minibatch and keeps the operator in the gradient graph.
We use $\tau=.40$, fixed after earlier development. Before each
validation pass, the current encoders are applied to the 965
unaugmented calibration images. We recompute means, covariances,
branchwise OAS ridges, and the frozen operator $T^\star$ from those
embeddings. The selected checkpoint stores this fitted operator with
its network weights. Validation and final evaluation use that fixed
map without estimating moments from the evaluation images. These
calibration images belong to the training split but are distinct
from the audit-readout subset. OLS and regularized LEACE-style follow
the same batch-fitting and epoch-level calibration schedule. The
dual baseline centers the raw code without removing directions.

\paragraph{Numerical implementation.}
With $n$ rows, the covariance denominator is $B=\max(n-1,1)$, as
in Section~\ref{sec:method}. The OAS calculation uses this same
effective count in Equation~\eqref{eq:oas_rho}. In code, a denominator
at most $10^{-12}$ gives the maximum shrinkage; otherwise $10^{-12}$
is added to the denominator. Shrinkage is clipped to
$[0,1-10^{-6}]$, and the ridge is
$\lambda_j=\rho_j m_j/\max(1-\rho_j,10^{-6})$.
The native network centers float32 embeddings and computes the
covariances, OAS ridges, Cholesky factors, and spectral decomposition
in CPU float64. It casts the final map back to network precision.
The device transfers preserve gradients. The selected-subspace
projector uses divided differences of squared singular values only
across removed/retained groups; threshold membership itself remains discrete.
Empty or full selections return exact identity or zero maps.

Cholesky first tries zero jitter, then diagonal additions
$10^{-6},10^{-5},10^{-4},10^{-3},10^{-2}$ if needed. The selected
calibration maps in the reported native runs required zero jitter;
this does not establish that every training batch did. Common-embedding
SCCR instead uses symmetric covariance roots in float64, with relative
support cutoff $10^{-12}$ and the same OAS ridge calculation. It
has no training gradient. Its full-support map is equivalent to the
Cholesky convention up to numerical precision. Native inference uses
the saved float32 means and map.

The Crust-only intervention trains eight new networks: dual baseline
or SCCR, true or shuffled Crust, and two seeds. Each starts from its
matched fresh initialization, rather than continuing an omitted-view
checkpoint. Four completed omitted-view models are reused. The four
other omitted-view baseline methods contribute eight further fits.
Table~\ref{tab:skincon_training_completion} reports the actual stopping
and selected epochs. VCCA-private and DSN-MMD use the separate losses
and architectures described below; VCCA-private selects validation
negative ELBO, while DSN-MMD selects shared-code diagnosis log loss.
All models in the table reached their declared early-stopping rule.
The 100-epoch setting is a maximum, not the number completed by every
model.

\begin{table}[!htbp]
\centering
\setlength{\tabcolsep}{2pt}
\caption{Completed SkinCon training. Epoch cells give completed / selected checkpoint, both one-based; all runs reached patience-five early stopping. True and shuffled conditions restore Crust only.}
\label{tab:skincon_training_completion}
\label{tab:skincon_tau_training}
\begin{tabular*}{\textwidth}{@{\extracolsep{\fill}}llcccc@{}}
\toprule
Model & Expert condition & Seed 909 & Seed 910 & $q_{909}$ & $q_{910}$ \\
\midrule
Dual baseline & Omitted & 23 / 18 & 19 / 14 & --- & --- \\
SCCR & Omitted & 29 / 24 & 21 / 16 & --- & --- \\
OLS & Omitted & 22 / 17 & 23 / 18 & --- & --- \\
Regularized LEACE-style & Omitted & 65 / 60 & 54 / 49 & --- & --- \\
VCCA-private & Omitted & 70 / 65 & 44 / 39 & --- & --- \\
DSN-MMD & Omitted & 16 / 11 & 21 / 16 & --- & --- \\
Dual baseline & True Crust & 16 / 11 & 13 / 8 & --- & --- \\
Dual baseline & Shuffled Crust & 20 / 15 & 16 / 11 & --- & --- \\
SCCR & True Crust & 31 / 26 & 23 / 18 & --- & --- \\
SCCR & Shuffled Crust & 41 / 36 & 21 / 16 & --- & --- \\
\midrule
\multicolumn{6}{@{}l}{\textit{SCCR sensitivity runs ($\tau=.20$)}} \\
SCCR ($\tau=.20$) & Omitted & 33 / 28 & 42 / 37 & 9 & 10 \\
SCCR ($\tau=.20$) & True & 36 / 31 & 41 / 36 & 8 & 9 \\
SCCR ($\tau=.20$) & Shuffled & 35 / 30 & 32 / 27 & 10 & 11 \\
\bottomrule
\end{tabular*}
\par\smallskip
Diagnosis models select validation log loss; VCCA-private selects validation negative ELBO. Removed rank $q$ is shown for the additional SCCR $.20$ fits at their selected checkpoints; dashes mean not tabulated here.
\end{table}

\subsection{Baseline objectives and adaptations}
\label{app:skincon_baselines}
We distinguish models that learn their own representations from
operators fitted to the same frozen representations. The native OLS
and regularized LEACE-style networks use the same image encoder,
expert encoder, projected dimensions, diagnosis head, and training
recipe as SCCR. VCCA-private and DSN-MMD use their own shared/private
architectures and objectives. The common-embedding comparison uses the
selected dual baseline with the three concepts omitted, separately for
each seed. Its projected codes $z_r,z_e\in\mathbb R^{32}$ are fixed;
only the 965 calibration images fit an operator. We write $R$ for the
retained raw representation and use the same observed annotations
$\tilde{x}_{45}$ as the diagnosis comparison input for all methods.
This annotation input differs from each network's learned expert code.
The common-embedding operators reported here are SCCR, matched top-$q$,
exact empirical LEACE, D-CCA, and the primary paired individual
outputs of JIVE and AJIVE. OLS is reported only as an end-to-end model.

\paragraph{OLS and regularized LEACE-style training.}
For a centered training batch, OLS fits
\[
 B_{\rm OLS}=Z_{e,c}^{\dagger}Z_{r,c},\qquad
 R=Z_{r,c}-Z_{e,c}B_{\rm OLS},
\]
using a relative pseudoinverse cutoff of $10^{-6}$. The fit remains in
the training graph. Before validation, the current network is
recalibrated on the 965 calibration images; its selected checkpoint
stores the means and coefficient matrix for later inference. Both
views are needed to compute an OLS residual. For this fixed map,
$Z_{r,c}=R+Z_{e,c}B_{\rm OLS}$, so the pair $(Z_e,R)$ retains the raw
code. Lower performance with a particular audit predictor therefore
does not establish loss of all predictive information.

The trained LEACE-style map follows the same batch-fit and global
calibration schedule. Let $R_{rr}=S_{rr}+\lambda_r I=L_rL_r^\top$,
with the raw branch's OAS ridge, and let $U_H$ span the left singular
vectors of $L_r^{-1}S_{re}$ whose singular values exceed $10^{-6}$
times the largest. Its row-vector map is
\[
 T_{\rm RLEACE}=L_r^{-\top}(I-U_HU_H^\top)L_r^\top,
 \qquad R=Z_{r,c}T_{\rm RLEACE}.
\]
This is a regularized adaptation of LEACE~\citep{10.5555/3666122.3669006},
not exact empirical covariance erasure. Both selected native fits
remove all 32 directions and produce a literal zero residual.
This observed full-rank outcome is distinct from a failed fit.

\paragraph{SCCR and exact empirical LEACE on common embeddings.}
Common-embedding SCCR uses branchwise OAS ridge and $\tau=.4$, with
the same row-vector operator as in Section~\ref{sec:method}. It removes
one direction in seed 909 and two in seed 910. Matched top-$q$ reuses
that exact fitted spectrum and those values of $q$; the saved matrices
are identical. This comparison checks operator equivalence and does
not supply an advantage for thresholding.

For exact empirical LEACE, define $Q_r=S_{rr}^{1/2}$ and
$W_r=S_{rr}^{+1/2}$ on the empirical covariance support, and let $P$
project onto the column space of $W_rS_{re}$. The retained row code is
\[
 R=Z_{r,c}(I-W_r P Q_r).
\]
The covariance support and cross-covariance use a relative numerical
cutoff of $10^{-12}$, without OAS ridge. Both fits have full raw
support and erasure rank 32. We store the resulting map as exact zero,
so numerical roundoff cannot become a predictive feature after
standardization. Frozen SCCR and empirical LEACE require only the
raw embedding to compute $R$; their combined audit predictors still
receive the observed expert annotations.

\paragraph{JIVE and AJIVE.}
JIVE~\citep{10.1214/12-AOAS597} fits joint and individual components by
minimizing the sum of squared reconstruction errors of the two centered
blocks, each divided by its calibration Frobenius norm:
\[
 \min_{\widetilde J_r,\widetilde J_e,\widetilde I_r,\widetilde I_e}
 \sum_{v\in\{r,e\}}\|Z_{v,c}/a_v-\widetilde J_v-\widetilde I_v\|_F^2.
\]
Here the components in this objective are in scaled coordinates. The joint
components share a sample-score space; individual components are
orthogonal to that space. The supplied joint and individual ranks are
upper bounds. We use alternating truncated SVD, at most 5,000
iterations, and a relative fitted-matrix change tolerance of $10^{-9}$,
without expanding the individual ranks.

The declared initial-rank rule selects the largest adjacent
singular-value ratio over ranks $1,\ldots,16$, with the smaller rank
breaking ties. It uses no diagnosis or concept outcomes. It returns
$(1,1)$ in both seeds. JIVE then uses joint rank one and individual
ranks $(0,0)$. The resulting individual outputs are zero by this rank
choice; this is not evidence that JIVE always discards useful signal,
and the native JIVE permutation rank selector was not used.

AJIVE~\citep{10.1016/j.jmva.2018.03.008} uses the same initial ranks,
then applies its Wedin and random-direction bounds to the alignment
of the retained sample-score spaces. If $\mathcal Q_r,\mathcal Q_e$
are their orthonormal bases, the squared singular values of
$[\mathcal Q_r\ \mathcal Q_e]$ satisfy
$d_i^2=1+\sigma_i(\mathcal Q_r^\top\mathcal Q_e)$ for the leading paired directions.
We use 1,000 samples for each
bound, the 5th and 95th percentiles respectively, and the native
identifiability check and individual reconstruction thresholds.
There is no extra cap on the final individual ranks. Both primary
fits select joint rank zero and individual ranks $(1,1)$. Thus their
individual outputs retain the fitted rank-one signal rather than
subtracting a nonzero fitted joint component.

Both methods originally decompose the calibration sample matrix.
We therefore state their extension to a new image explicitly. Let
$I_r$ be the fitted raw individual component in raw feature units,
and let $a_r,a_e$ be the fitted block scales (Frobenius norms for
JIVE, one for AJIVE). Using calibration rows only, fit
\[
 \begin{aligned}
 B_I&=[Z_{r,c}/a_r,\,Z_{e,c}/a_e]^{\dagger}(I_r/a_r),\\
 R_{\rm new}&=a_r[Z_{{r,\rm new},c}/a_r,\,
                    Z_{{e,\rm new},c}/a_e]B_I.
 \end{aligned}
\]
The pseudoinverse cutoff is $10^{-12}$. This paired individual
extension requires both embeddings and can therefore inherit expert
information. Assessment rows are never added to a new decomposition.
The primary gap-rank fits converge. Separate fixed-rank checks
included six JIVE fits that did not meet the convergence criterion;
these remain unavailable, rather than receiving chance scores, and
are distinct from the primary rank-conditioned zero outputs.

\paragraph{Decomposition-based CCA.}
D-CCA~\citep{Shu02012020} uses unregularized empirical signal
covariances. For each unit-variance canonical score pair $(u_i,v_i)$
with correlation $s_i$, define
\[
 c_i=\frac{s_i}{1+s_i+\sqrt{1-s_i^2}}(u_i+v_i),
 \qquad d_{r,i}=u_i-c_i.
\]
The distinctive scores $d_{r,i}$ are reconstructed in raw feature
units. In matrix form, let
$W_rS_{re}W_e=U\operatorname{diag}(s)V^\top$ with $W_e=S_{ee}^{+1/2}$, using unregularized
covariance inverse square roots. Writing
$U_r=Z_{r,c}W_rU$, $U_e=Z_{e,c}W_eV$, and
$C_{\rm shared}=(U_r+U_e)\operatorname{diag}(s_i/(1+s_i+\sqrt{1-s_i^2}))$,
the retained code is $R=(U_r-C_{\rm shared})U^\top Q_r$.
Both seeds have 32 supported directions in each view, and all
32 canonical pairs are used, with numerical support cutoff
$10^{-12}$. This is an empirical signal-covariance adaptation without
S-POET denoising or a significance-based common-rank selector. It
requires both views at inference. D-CCA denotes decomposition-based
CCA here; Deep CCA and sJIVE were not fitted in this study.

\paragraph{VCCA-private.}
VCCA-private~\citep{wang2017deepvariationalcanonicalcorrelation} learns
32-dimensional shared, raw-private, and expert-private Gaussian codes,
written $s,p_r,p_e$. Its factorization and variational distribution are
\[
 \begin{aligned}
 p(x,\tilde{x}_{45},s,p_r,p_e)
   &=p(s)p(p_r)p(p_e)p(x\mid s,p_r)
                     p(\tilde{x}_{45}\mid s,p_e),\\
 q(s,p_r,p_e\mid x,\tilde{x}_{45})
   &=q(s\mid x)q(p_r\mid x)q(p_e\mid\tilde{x}_{45}).
 \end{aligned}
\]
All priors are standard normal and posteriors are diagonal Gaussian.
The objective is
\[
 \begin{aligned}
 \mathcal L_{\rm VCCA}={}&
 \mathbb E_q[-\log p(x\mid s,p_r)
             -\log p(\tilde{x}_{45}\mid s,p_e)]\\
 &+\operatorname{KL}(q_s\Vert p_s)
  +\operatorname{KL}(q_{p_r}\Vert p_{p_r})
  +\operatorname{KL}(q_{p_e}\Vert p_{p_e}).
 \end{aligned}
\]
The image likelihood is Gaussian with variance one on the actual
$224\times224$ RGB values, obtained by reversing ImageNet
normalization. The annotation likelihood is Bernoulli on the 45
observed concepts. Each likelihood sums observation coordinates
within an image and then averages the batch; each KL sums 32 latent
coordinates. All coefficients are one. The omitted concepts and
diagnosis labels do not enter this training loss.

Two independent ImageNet-initialized EfficientNetV2-S encoders infer
the raw shared and raw-private posteriors. Each normalized
1,280-dimensional image feature vector passes through a
$1280\to64\to64$ Gaussian head with SiLU, yielding 32 means and 32
log variances. The expert input uses a $45\to64$ SiLU encoder,
normalization, and a $64\to64\to64$ Gaussian head. Log variances are
clipped to $[-12,8]$. Decoders receive concatenated shared and private
samples. The image decoder maps 64 coordinates to $128\times7\times7$
features, then uses five bilinear upsampling stages with $3\times3$
convolutions and SiLU, with channels $64,32,16,16,16$, followed by a
three-channel convolution. The expert decoder is $64\to64\to45$ with
SiLU and Bernoulli logits. Neither decoder has encoder skip connections.

Training uses one reparameterized sample per image. Checkpoint
selection minimizes validation negative ELBO estimated with four
fixed posterior draws per image, role, and draw index; these draws
do not change with epoch or batch order. Audits use posterior means:
$R=\mathbb E_q[p_r\mid x]$, shared code $\mathbb E_q[s\mid x]$, and their
64-dimensional concatenation as the full-code control. The exported
expert code is $\mathbb E_q[p_e\mid\tilde{x}_{45}]$, an expert-private
code rather than a shared code inferred from annotations. VCCA has
no native diagnosis head.

\paragraph{Domain Separation Networks.}
DSN-MMD~\citep{NIPS2016_45fbc6d3} is adapted to heterogeneous paired
images and annotation vectors. Two independent image encoders supply
raw shared and raw-private features. Their pooled features are
normalized to unit $\ell_2$ norm before the latent-code layers. The raw shared stem is
$1280\to64$ with SiLU and the expert shared stem is $45\to64$ with
SiLU. Both feed the same $64\to64\to32$ shared bridge. Independent
private projectors are $1280\to64\to32$ for the raw image and
$45\to64\to32$ for annotations. All hidden activations are SiLU.
Write the shared and private codes as $s_r,s_e,p_r,p_e$. Each decoder
receives the sum of shared and private codes from its own view;
the image decoder has the VCCA upsampling architecture with a
32-dimensional input, and the expert decoder is $32\to64\to45$.
The diagnosis classifier reads only $s_r$, using LayerNorm,
$32\to64$, SiLU, dropout .2, and a scalar output.

The training objective is
\[
 \begin{aligned}
 \mathcal L_{\rm DSN}={}&\mathcal L_{\rm BCE}(y,g(s_r))
 +\mathcal L_{\rm image}(x,d_r(s_r+p_r))\\
 &+\mathcal L_{\rm BCE}(\tilde{x}_{45},d_e(s_e+p_e))
 +.1\mathcal L_{\rm diff}+.1\mathcal L_{\rm MMD}.
 \end{aligned}
\]
For an image with $P=3\cdot224^2$ entries and reconstruction error
$e$, the image term is $P^{-1}\sum_{j=1}^P(e_j-\bar e)^2$, averaged
over images. It removes the mean reconstruction error; it does not
imply invariance to arbitrary multiplicative scaling. Annotation BCE
averages the 45 coordinates and images. For each view, center the
shared and private latent columns and normalize each row to unit
Euclidean norm, giving $H_{s,v},H_{p,v}$. Then
\[
 \mathcal L_{\rm diff}
 =\sum_{v\in\{r,e\}}\left\|H_{s,v}^\top H_{p,v}/n\right\|_F^2.
\]
The MMD term is the biased squared statistic, including diagonal
terms,
\[
 \mathcal L_{\rm MMD}=\frac{1}{n^2}\sum_{i,j=1}^n
 [k(s_{r,i},s_{r,j})+k(s_{e,i},s_{e,j})
                         -2k(s_{r,i},s_{e,j})],
\]
with equal-weight kernel
\[
 k(a,b)=\frac17\sum_{\sigma\in\{.125,.25,.5,1,2,4,8\}}
             \exp[-\|a-b\|^2/(2\sigma^2)].
\]
It aligns marginal shared-code distributions, rather than forcing
paired codes to be identical or proving independence. Selection uses
the native raw-shared validation diagnosis log loss. The audit output
is $R=p_r$; $s_r$ and $[s_r,p_r]$ are separate learning controls. The
exported expert code is $s_e$.

\paragraph{Learning controls and limits of the comparison.}
The two shared/private models each have two image encoders and a
64-dimensional full raw code, compared with one image encoder and a
32-dimensional raw code in the matched residual networks. We report
shared, private, and full-code accessibility separately. Poor private
prediction cannot identify a decomposition failure when the full
representation also has poor accessibility under the fitted probes.
Likewise, diagnosis supervision differs: DSN trains a shared-code
classifier, while VCCA optimizes reconstruction and KL terms only.
These adaptations do not establish that the original methods cannot
work on this task.

Table~\ref{tab:skincon_reconstruction_controls} reports validation
reconstruction diagnostics at the selected checkpoints, together
with a fixed training-mean-image and annotation-prevalence reference.
VCCA improves RGB reconstruction but does not improve annotation BCE
relative to the reference. DSN learns diagnosis, but its reconstruction
criteria do not improve on their references. VCCA's table uses
posterior-mean reconstructions, so these values differ from its
stochastic negative-ELBO checkpoint criterion. These are learning
diagnostics, not bootstrap comparisons or common loss functions
across the two models.

\begin{table}[t]
\caption{SkinCon reconstruction controls on the 645 validation images.
Lower is better. The reference uses the training mean RGB image and
per-concept training prevalence. VCCA uses posterior means. Centered
image error is the DSN reconstruction criterion; ordinary RGB MSE is
also shown.}
\label{tab:skincon_reconstruction_controls}
\centering
\setlength{\tabcolsep}{3pt}
\begin{tabular*}{\textwidth}{@{\extracolsep{\fill}}llccc@{}}
\toprule
Model & Seed & RGB MSE & Centered image error & Annotation BCE\\
\midrule
VCCA-private & 909 & .025619 & .024970 & .149454\\
VCCA-private & 910 & .026722 & .025869 & .139893\\
DSN-MMD & 909 & .235905 & .026299 & .616325\\
DSN-MMD & 910 & .274671 & .026180 & .507646\\
Training-mean reference & --- & .032361 & .025816 & .132480\\
\bottomrule
\end{tabular*}
\end{table}

\subsection{Audit predictors and their selection}
\label{app:skincon_readouts}
After checkpoint selection and operator calibration, we freeze the
encoders and all decomposition maps. A post-hoc predictor, also called
a readout or probe, is then fitted to a specified code or annotation
input. These predictors are separate from the trained neural diagnosis
head. Crust recovery uses the correct binary annotation as its target;
diagnosis prediction uses the study's binary diagnosis label. We report
both linear logistic regression and nonlinear gradient-boosted trees.
AUROC uses their continuous scores, without a classification threshold.
AUROCs below $.5$ are not reversed after inspecting evaluation labels.

All predictor fitting and selection use the 970 audit-readout images.
We use three-fold stratified group cross-validation, with detected
image groups kept together, stratification by the target, and split
seed 909 for both neural-network seeds. Thus methods use the same
folds for a given target. Crust and diagnosis need not use the same
fold assignments. Both the calibration and readout images previously
trained the CNN: this procedure validates the predictors conditional
on the learned representation, not the complete learning pipeline.
The 645-image validation set selects network checkpoints and does not
select these final readout settings. Neither Fitzpatrick17k test nor
DDI fits a predictor, probability calibrator, or removal map.

\paragraph{Candidate predictors.}
Within each fitting fold, we remove columns with standard deviation
at most $10^{-10}$ and standardize the remaining columns using that
fold's means and standard deviations. Linear candidates are
$\ell_2$-regularized logistic regression with inverse regularization
$C_{\rm LR}\in\{10^{-5},10^{-4},10^{-3},.01,.1,1,10\}$,
L-BFGS, at most 2,000 iterations, and tolerance $10^{-6}$.
Nonlinear candidates use histogram gradient boosting with at most
7 or 15 leaves, depth 3, 100 boosting iterations, learning rate $.1$,
minimum leaf size 20, and $\ell_2$ penalty 10. There is no internal
early stopping; its random seed is 909. Each family also includes a
constant predictor equal to its fitting-fold prevalence.
Failed or nonconverged learned candidates are excluded from selection
and are not silently replaced by successful learned models.

Concept probes maximize the unweighted mean of the three held-out
AUROCs. Their primary scores are uncalibrated. Diagnosis predictors
minimize sample-weighted held-out log loss, considering both calibrated
and uncalibrated versions. Exact ties favor the constant prior and
then the uncalibrated version. Once selected, the base predictor is
refitted to all 970 images. Selection of a constant prior is marked
P in the tables; it means that this predictor family did not select
a better learned model, rather than proving absence of information.

\paragraph{Diagnosis probability calibration.}
For a candidate logit score $s$, probability calibration fits
\[
 p_{\rm cal}(s)=\sigma\!\left(
 \alpha\frac{s-\mu_s}{\sigma_s}+\beta\right),
 \qquad .001\leq\alpha\leq100,\quad -30\leq\beta\leq30.
\]
Here $\mu_s,\sigma_s$ come from out-of-fold fitting scores, and
$\sigma$ is the logistic function. Inside each outer fitting fold,
two inner stratified group folds produce scores for fitting this
calibrator. The outer held-out labels fit neither the base predictor
nor its calibrator. For the final predictor, the selected candidate's
three-fold out-of-fold scores on the 970 images fit the calibrator,
and its base model is refitted to all 970. If the out-of-fold score
standard deviation is at most $10^{-12}$, we use unit scale and fix
the raw slope to one, fitting only the prevalence intercept. The positive slope preserves
the ordering of finite logits, although saturated probabilities can
create ties. This fitted correction does not guarantee calibration
on new data or DDI.

\paragraph{Expert-only fallback and learning controls.}
For each family, we select a predictor on the observed annotations
$\tilde{x}_{45}$ and a predictor on $(\tilde{x}_{45},R)$. The primary
combined predictor is used only when its cross-validated log loss is
strictly lower than that of the expert-only predictor; otherwise the
expert-only predictor is used for both terms. This fallback, marked F,
gives zero added gain by construction. It does not establish that $R$
is uninformative. We also report the selected combined predictor
without this fallback, so an apparent zero cannot hide a positive
or negative assessment result. The choice is made on source-training
folds and is never changed after observing test or DDI performance.

Raw, residual, and learned expert codes are probed separately. For
VCCA-private and DSN-MMD we additionally probe the full, shared, and
private raw codes. These checks distinguish poor accessibility in the
whole learned representation from allocation of an accessible signal
to a particular component. Crust is the primary concept in the tables;
the saved analysis also contains secondary probes for Erosion, Scale,
Plaque, Papule, and Erythema. Those secondary probes do not select the
reported model, threshold, or Crust contrast.

\subsection{Three audit questions}
\label{app:skincon_estimands}
\paragraph{Does the residual improve diagnosis prediction?}
Let $\widehat p_W$ be the frozen source-fitted diagnosis predictor
whose input is $W$. On an evaluation cohort of $n$ images, define
\[
 \mathcal L(W)=-\frac1n\sum_{i=1}^n
 \left[y_i\log\widehat p_W(W_i)
 +(1-y_i)\log\{1-\widehat p_W(W_i)\}\right],
 \qquad
 G=\mathcal L(\tilde{x}_{45})-
   \mathcal L(\tilde{x}_{45},R).
\]
Positive $G$ means that the residual improves fitted diagnosis
prediction relative to the 45 observed annotations. These annotations
are common across methods; they are not each model's learned expert
code. We report $G$ separately from residual-only AUROC and native
neural-head performance. Its value depends on the fitted predictors
and their selection; it is not a direct estimate of conditional mutual
information or proof that the annotation set contains no equivalent
nonlinear information. Paired differences $G_{\rm SCCR}-G_m$ compare
SCCR with baseline $m$ under the same evaluation images and probe family.

\paragraph{Does Crust explain part of a fixed residual's gain?}
We take the SCCR network trained with Crust, Erosion, and Scale
omitted and hold its residual $R$ fixed. We add either each image's
correct Crust annotation $C$ or a fixed random reassignment
$\widetilde C$ only to the diagnosis readouts. For
$H\in\{\emptyset,C,\widetilde C\}$, define
\[
 \begin{aligned}
 G(H)&=\mathcal L(\tilde{x}_{45},H)
       -\mathcal L(\tilde{x}_{45},H,R),\\
 A_H&=G(\emptyset)-G(H),\\
 A_C-A_{\widetilde C}&=G(\widetilde C)-G(C).
 \end{aligned}
\]
We refit and select these readouts using the same source-only
procedure; we do not change the encoder, expert input to the network,
or SCCR operator. A positive last contrast means that correct Crust
reduces the measured gain more than randomly reassigned Crust.
This tests predictive overlap with an omitted annotation, not causal
mediation or removal of Crust from $R$.

We also separate the two contributions to true-Crust attenuation:
\[
 \begin{aligned}
 A_C={}&\underbrace{\mathcal L(\tilde{x}_{45})
             -\mathcal L(\tilde{x}_{45},C)}_{
             \text{improvement in expert-only prediction}}\\
       &-\underbrace{\left[\mathcal L(\tilde{x}_{45},R)
             -\mathcal L(\tilde{x}_{45},C,R)\right]}_{
             \text{improvement in combined prediction}}.
 \end{aligned}
\]
Thus attenuation can arise because the annotations predict better
after Crust is supplied, even when the combined predictor changes
little. The residual itself remains exactly the same.

\paragraph{Does restoring Crust change its accessibility after retraining?}
This is a separate intervention. Write $v=O,T,S$ for the omitted,
true-Crust, and shuffled-Crust network conditions. Each has its own
learned codes and calibrated map. For either probe family, define
\[
 D_v=\mathrm{AUROC}(C\mid Z_r^{(v)})-
     \mathrm{AUROC}(C\mid R^{(v)}).
\]
Here AUROC means the performance of a separately source-fitted probe
for the true Crust annotation. The primary comparison is $D_T-D_S$:
does correct Crust increase the raw-to-residual AUROC drop more than
randomly reassigned Crust? We also report $D_T-D_O$, the direct
comparison with leaving Crust omitted, for both probe families.
All three raw, expert, and residual AUROCs remain visible. An expert
code that exposes true Crust is a necessary learning control for this
test; lower residual accessibility alone can also reflect poorer raw
encoding after retraining.

Retraining changes the raw encoder, expert encoder, operator, and
prediction head. These contrasts therefore do not isolate the
projection while holding the representation fixed. The dual baseline
undergoes the same restoration as an encoder-learning control. Its
raw and residual codes differ only by centering, so its zero
raw-to-residual gap is structural. Other baselines were not retrained
for this Crust-only intervention; it does not test superiority of
SCCR's restoration response over theirs.

For diagnosis after retraining, we report gains relative to the same
$\tilde{x}_{45}$ in every condition and relative to the annotations
actually supplied in that condition: $\tilde{x}_{45}$, $(\tilde{x}_{45},C)$,
or $(\tilde{x}_{45},\widetilde C)$. A reduced gain against a stronger
annotation input need not imply more removal from the residual.
Erosion and Scale remain omitted in all these Crust-only comparisons.

\subsection{Uncertainty, external evaluation, and reporting limits}
\label{app:skincon_uncertainty}
We report seeds 909 and 910 separately on the same evaluation images.
Intervals are pointwise 95\% percentile intervals from 1,000 ordinary
image-bootstrap samples, using the same sampled indices for every
term in a contrast. Sampling is with replacement and is neither
class-stratified nor clustered by patient or detected image group.
Single-class draws cannot estimate AUROC and are excluded from that
metric's interval. Bounds use the 2.5th and 97.5th percentiles. The
bootstrap holds encoders, operators, predictor selection, and fitted
parameters fixed; it omits training, model-selection, and unknown
patient-dependence uncertainty, and does not adjust for multiple
comparisons. Two seeds on one cohort are not independent clinical
replications.

The final readout pipelines are locked before their assessment replay,
and DDI is never used to refit them. However, source-test and DDI
results had already been inspected during earlier development.
These later locks do not make either cohort an untouched confirmatory
sample. The comparison, Crust-only intervention, and threshold
follow-up are exploratory. We retain negative gains and intervals
that include zero. A failed decomposition is unscored, rather than
assigned chance performance; this differs from a successfully fitted
zero code or a selected constant predictor.

DDI tests transfer of the complete source-trained pipeline under
different images, diagnosis composition, and annotation relationships.
The observed loss of benefit does not isolate a single cause of
distribution shift. Crust is an existing clinical annotation, and
predictive overlap with it is not evidence of a new clinical mechanism.
These experiments do not establish nonlinear independence, causal
relevance, complete concept removal, or reliable external clinical
benefit. They test added fitted prediction and the response of measured
linear accessibility to changing the expert view.

\subsection{Interpretation of the completed comparisons}
\label{app:skincon_findings}
On Fitzpatrick17k test, the nonlinear residual diagnosis gains of
end-to-end SCCR are $.073$ and $.051$, with both intervals above zero.
Its paired advantages over end-to-end OLS are $.105$ and $.050$,
again with both intervals above zero
(Table~\ref{tab:table-paired-gains-native-source-test}). This finding concerns added
prediction from the residual. SCCR's native diagnosis AUROCs,
$.845$ and $.805$, are below those of the dual baseline and DSN-MMD.
On common dual-baseline embeddings, SCCR's nonlinear gains are
approximately $.000$ and $.018$ with intervals including zero;
the common-embedding comparison does not establish a general gain
advantage over the other decompositions.

With the SCCR omission residual fixed, true Crust attenuates nonlinear
gain more than shuffled Crust by $.020$ and $.009$. Only seed 909's
interval excludes zero (Table~\ref{tab:skincon_fixed_crust}). After Crust-only retraining, the linear
true-minus-shuffled contrasts are $.016$ and $.045$, with both
intervals above zero (Table~\ref{tab:skincon_refit}). The corresponding nonlinear contrasts and
both linear true-minus-omitted contrasts include zero. True-restored
residuals still predict Crust nonlinearly, with AUROCs $.581$ and $.578$.
The supported finding is a modest linear response to correct rather
than shuffled restoration, not complete removal or a consistent
improvement over omission.

VCCA-private's low full-code diagnosis and Crust accessibility limits
conclusions about its private decomposition. DSN learns diagnosis
mainly through its shared code, and its direct linear private gain
in source seed 910 is positive ($.021$ with interval $[.012,.032]$).
We retain this result even though the primary source-selected fallback
does not use that combined predictor. The primary JIVE zero output
follows from the supplied individual rank zero; its result is not a
general failure of JIVE. On DDI, SCCR's primary nonlinear gains are
negative in both seeds ($-.187$ and $-.229$), and correct restoration
does not establish a consistent external improvement. All results
below retain these limits alongside the in-distribution findings.

\subsection{Complete primary Crust and diagnosis results}
\label{app:skincon_complete_results}
Tables in this section show both seeds separately. Unless indicated otherwise, entries are estimates with pointwise 95\% paired image-bootstrap intervals from 1,000 draws, conditional on the fitted pipelines. The same assessment images are used for both seeds. These intervals exclude training and selection uncertainty, do not account for unknown patient dependence, and are not adjusted for multiple comparisons. Both assessment cohorts had been inspected during development; the results are exploratory.

The superscript P denotes a constant-prior predictor selected on source-training cross-validation. F denotes an expert-only fallback, which gives zero added diagnosis gain by construction. Neither selection establishes absence of information in a code. Positive $G=\mathcal L(\tilde{x}_{45})-\mathcal L(\tilde{x}_{45},R)$ means that the residual improves fitted diagnosis prediction relative to the observed 45 annotations. Native predictions, fitted audit predictions, and common-embedding operators are reported separately. Entries are shown as estimate [lower, upper]; F17k denotes the Fitzpatrick17k test set. Repeated values are printed once only where the captions identify the equivalent condition or table. All original precision is retained, including six decimals for the fixed-residual decomposition.

\paragraph{Full, shared, and private codes.} These controls distinguish poor private-code recovery from poor learning by the complete model. Full codes concatenate the raw shared and private codes (64 dimensions); shared and private codes each have 32 dimensions. Both probe families use the same source-training selection procedure.

\begin{table}[H]
\centering
\small
\setlength{\tabcolsep}{2pt}
\caption{Native diagnosis prediction from separately trained models on Fitzpatrick17k test (F17k) and DDI.}
\label{tab:table-native-prediction-source-test}
\label{tab:table-native-prediction-ddi}
\begin{tabular*}{\textwidth}{@{\extracolsep{\fill}}llcccc@{}}
\toprule
Model & Seed & \shortstack{F17k\\AUROC} & \shortstack{F17k\\log loss} & \shortstack{DDI\\AUROC} & \shortstack{DDI\\log loss} \\
\midrule
Dual baseline & 909 & .873 [.824, .915] & .263 [.217, .316] & .559 [.510, .607] & .900 [.821, .975] \\
Dual baseline & 910 & .875 [.823, .918] & .265 [.214, .325] & .592 [.541, .641] & .791 [.720, .863] \\
SCCR ($\tau=.40$) & 909 & .845 [.796, .892] & .318 [.281, .360] & .541 [.489, .588] & .648 [.592, .707] \\
SCCR ($\tau=.40$) & 910 & .805 [.756, .853] & .349 [.315, .387] & .533 [.481, .581] & .646 [.596, .703] \\
OLS & 909 & .858 [.811, .902] & .314 [.282, .350] & .547 [.500, .594] & .689 [.646, .733] \\
OLS & 910 & .823 [.768, .869] & .321 [.275, .379] & .562 [.508, .611] & .714 [.645, .792] \\
Reg. LEACE-style & 909 & .730 [.673, .788] & .359 [.310, .413] & .460 [.407, .513] & .665 [.601, .732] \\
Reg. LEACE-style & 910 & .722 [.665, .782] & .367 [.318, .423] & .473 [.420, .524] & .665 [.603, .732] \\
DSN-MMD & 909 & .897 [.860, .929] & .312 [.272, .354] & .604 [.555, .650] & .700 [.639, .757] \\
DSN-MMD & 910 & .833 [.782, .883] & .314 [.261, .373] & .582 [.534, .630] & .790 [.709, .874] \\
\bottomrule
\end{tabular*}
\par\smallskip
VCCA-private is unsupervised and has no native diagnosis head. Native prediction uses the full trained architecture, not a residual-only probe.
\end{table}

\begin{table}[H]
\centering
\small
\setlength{\tabcolsep}{2pt}
\caption{Residual audits of separately trained models.}
\label{tab:table-native-audit-source-test}
\label{tab:table-native-audit-ddi}
\begin{tabular*}{\textwidth}{@{\extracolsep{\fill}}llcccc@{}}
\toprule
Method & Seed & \shortstack{Crust AUROC$(R)$\\linear} & \shortstack{Crust AUROC$(R)$\\nonlinear} & $G$ linear & $G$ nonlinear \\
\midrule
\multicolumn{6}{@{}l}{\textit{Fitzpatrick17k test}} \\
Dual baseline & 909 & .588 [.529, .650] & .615 [.553, .677] & .031 [-.029, .088] & -.014 [-.098, .061] \\
Dual baseline & 910 & .628 [.569, .689] & .606 [.545, .665] & .040 [-.030, .102] & .001 [-.082, .068] \\
SCCR ($\tau=.40$) & 909 & .635 [.578, .692] & .626 [.571, .681] & .074 [.027, .121] & .073 [.027, .114] \\
SCCR ($\tau=.40$) & 910 & .568 [.505, .631] & .579 [.522, .641] & .042 [-.004, .087] & .051 [.003, .093] \\
OLS & 909 & .604 [.538, .670] & .639 [.579, .703] & -.121 [-.239, -.020] & -.032 [-.112, .041] \\
OLS & 910 & .593 [.528, .661] & .627 [.567, .691] & .014 [-.051, .073] & .001 [-.059, .056] \\
Reg. LEACE-style & 909 & .500$^{\mathrm{P}}$ [.500, .500] & .500$^{\mathrm{P}}$ [.500, .500] & .000$^{\mathrm{F}}$ [.000, .000] & .000$^{\mathrm{F}}$ [.000, .000] \\
Reg. LEACE-style & 910 & .500$^{\mathrm{P}}$ [.500, .500] & .500$^{\mathrm{P}}$ [.500, .500] & .000$^{\mathrm{F}}$ [.000, .000] & .000$^{\mathrm{F}}$ [.000, .000] \\
VCCA-private & 909 & .483 [.422, .536] & .467 [.407, .526] & .000$^{\mathrm{F}}$ [.000, .000] & .000$^{\mathrm{F}}$ [.000, .000] \\
VCCA-private & 910 & .571 [.509, .640] & .479 [.418, .540] & .000$^{\mathrm{F}}$ [.000, .000] & .000$^{\mathrm{F}}$ [.000, .000] \\
DSN-MMD & 909 & .592 [.530, .649] & .531 [.474, .590] & .000$^{\mathrm{F}}$ [.000, .000] & .000$^{\mathrm{F}}$ [.000, .000] \\
DSN-MMD & 910 & .591 [.528, .652] & .570 [.513, .631] & .000$^{\mathrm{F}}$ [.000, .000] & .000$^{\mathrm{F}}$ [.000, .000] \\
\midrule
\multicolumn{6}{@{}l}{\textit{DDI}} \\
Dual baseline & 909 & .594 [.506, .676] & .599 [.516, .682] & -.501 [-.623, -.387] & -.937 [-1.090, -.789] \\
Dual baseline & 910 & .659 [.579, .740] & .605 [.532, .679] & -.465 [-.576, -.353] & -.739 [-.878, -.607] \\
SCCR ($\tau=.40$) & 909 & .748 [.679, .811] & .648 [.566, .726] & -.182 [-.260, -.103] & -.187 [-.266, -.109] \\
SCCR ($\tau=.40$) & 910 & .601 [.520, .676] & .612 [.542, .690] & -.205 [-.283, -.127] & -.229 [-.309, -.147] \\
OLS & 909 & .553 [.479, .623] & .557 [.479, .632] & -.728 [-.874, -.592] & -.642 [-.785, -.515] \\
OLS & 910 & .577 [.493, .661] & .651 [.563, .737] & -.281 [-.370, -.191] & -.279 [-.368, -.194] \\
Reg. LEACE-style & 909 & .500$^{\mathrm{P}}$ [.500, .500] & .500$^{\mathrm{P}}$ [.500, .500] & .000$^{\mathrm{F}}$ [.000, .000] & .000$^{\mathrm{F}}$ [.000, .000] \\
Reg. LEACE-style & 910 & .500$^{\mathrm{P}}$ [.500, .500] & .500$^{\mathrm{P}}$ [.500, .500] & .000$^{\mathrm{F}}$ [.000, .000] & .000$^{\mathrm{F}}$ [.000, .000] \\
VCCA-private & 909 & .482 [.405, .557] & .525 [.454, .609] & .000$^{\mathrm{F}}$ [.000, .000] & .000$^{\mathrm{F}}$ [.000, .000] \\
VCCA-private & 910 & .550 [.467, .623] & .485 [.402, .562] & .000$^{\mathrm{F}}$ [.000, .000] & .000$^{\mathrm{F}}$ [.000, .000] \\
DSN-MMD & 909 & .571 [.492, .649] & .545 [.465, .629] & .000$^{\mathrm{F}}$ [.000, .000] & .000$^{\mathrm{F}}$ [.000, .000] \\
DSN-MMD & 910 & .630 [.549, .702] & .587 [.509, .666] & .000$^{\mathrm{F}}$ [.000, .000] & .000$^{\mathrm{F}}$ [.000, .000] \\
\bottomrule
\end{tabular*}
\par\smallskip
$R$ is the residual, centered raw code for the dual baseline, or private code for VCCA/DSN. Each method learns its own embeddings, so this comparison does not isolate the operator. DSN-MMD is the paired-view adaptation. The selected regularized LEACE-style fits have literal zero residuals.
\end{table}

\begin{table}[H]
\centering
\small
\setlength{\tabcolsep}{2pt}
\caption{Raw-code learning controls and nonlinear audit prediction without expert-only fallback.}
\label{tab:table-native-code-controls-source-test}
\label{tab:table-native-code-controls-ddi}
\begin{tabular*}{\textwidth}{@{\extracolsep{\fill}}llcccc@{}}
\toprule
Method & Seed & \shortstack{Crust AUROC\\raw/full} & \shortstack{Diagnosis AUROC\\raw/full} & \shortstack{Diagnosis AUROC\\residual/private} & $G$ without fallback \\
\midrule
\multicolumn{6}{@{}l}{\textit{Fitzpatrick17k test}} \\
Dual baseline & 909 & .615 [.553, .677] & .865 [.815, .907] & .865 [.815, .907] & -.014 [-.098, .061] \\
Dual baseline & 910 & .606 [.545, .665] & .856 [.806, .903] & .856 [.806, .903] & .001 [-.082, .068] \\
SCCR ($\tau=.40$) & 909 & .620 [.565, .672] & .862 [.813, .902] & .868 [.824, .905] & .073 [.027, .114] \\
SCCR ($\tau=.40$) & 910 & .664 [.603, .722] & .853 [.803, .896] & .831 [.782, .876] & .051 [.003, .093] \\
OLS & 909 & .586 [.524, .643] & .822 [.766, .875] & .847 [.797, .894] & -.032 [-.112, .041] \\
OLS & 910 & .597 [.541, .654] & .842 [.791, .888] & .813 [.760, .863] & .001 [-.059, .056] \\
Reg. LEACE-style & 909 & .540 [.480, .593] & .652 [.595, .708] & .500$^{\mathrm{P}}$ [.500, .500] & .000 [.000, .000] \\
Reg. LEACE-style & 910 & .603 [.545, .658] & .736 [.684, .786] & .500$^{\mathrm{P}}$ [.500, .500] & .000 [.000, .000] \\
VCCA-private & 909 & .458 [.396, .516] & .595 [.526, .656] & .579 [.512, .638] & -.011 [-.027, .003] \\
VCCA-private & 910 & .538 [.477, .603] & .548 [.488, .606] & .500$^{\mathrm{P}}$ [.500, .500] & -.002 [-.009, .005] \\
DSN-MMD & 909 & .561 [.501, .619] & .860 [.814, .904] & .500$^{\mathrm{P}}$ [.500, .500] & -.024 [-.040, -.010] \\
DSN-MMD & 910 & .614 [.557, .671] & .813 [.757, .863] & .643 [.583, .699] & .002 [-.017, .018] \\
\midrule
\multicolumn{6}{@{}l}{\textit{DDI}} \\
Dual baseline & 909 & .599 [.516, .682] & .563 [.513, .610] & .563 [.513, .610] & -.937 [-1.090, -.789] \\
Dual baseline & 910 & .605 [.532, .679] & .572 [.518, .618] & .572 [.518, .618] & -.739 [-.878, -.607] \\
SCCR ($\tau=.40$) & 909 & .646 [.561, .725] & .580 [.529, .631] & .579 [.528, .629] & -.187 [-.266, -.109] \\
SCCR ($\tau=.40$) & 910 & .631 [.544, .710] & .593 [.542, .641] & .567 [.517, .616] & -.229 [-.309, -.147] \\
OLS & 909 & .542 [.464, .626] & .567 [.517, .615] & .576 [.526, .622] & -.642 [-.785, -.515] \\
OLS & 910 & .539 [.450, .633] & .593 [.542, .641] & .590 [.537, .642] & -.279 [-.368, -.194] \\
Reg. LEACE-style & 909 & .485 [.400, .563] & .578 [.527, .630] & .500$^{\mathrm{P}}$ [.500, .500] & .000 [.000, .000] \\
Reg. LEACE-style & 910 & .614 [.530, .695] & .573 [.520, .623] & .500$^{\mathrm{P}}$ [.500, .500] & .000 [.000, .000] \\
VCCA-private & 909 & .516 [.434, .604] & .511 [.457, .566] & .513 [.463, .566] & -.003 [-.020, .012] \\
VCCA-private & 910 & .480 [.401, .554] & .510 [.460, .558] & .500$^{\mathrm{P}}$ [.500, .500] & -.010 [-.020, .000] \\
DSN-MMD & 909 & .623 [.547, .699] & .587 [.536, .635] & .500$^{\mathrm{P}}$ [.500, .500] & .005 [-.011, .024] \\
DSN-MMD & 910 & .618 [.544, .693] & .547 [.496, .601] & .556 [.504, .605] & -.018 [-.041, .003] \\
\bottomrule
\end{tabular*}
\par\smallskip
All columns use nonlinear source-selected readouts. Poor raw/full-code recovery does not establish a private-allocation failure. The last column forces the selected combined predictor, preserving positive or negative performance hidden by fallback. Full VCCA/DSN codes have 64 dimensions, shared/private codes 32; other raw/residual codes have 32. Dual-baseline raw and residual diagnosis results coincide.
\end{table}

\begin{table}[H]
\centering
\setlength{\tabcolsep}{2pt}
\caption{Linear diagnosis gains with the combined predictor forced at assessment.}
\label{tab:table-forced-linear-gains}
\begin{tabular*}{\textwidth}{@{\extracolsep{\fill}}llcc@{}}
\toprule
Method & Seed & Fitzpatrick17k test $G$ & DDI $G$ \\
\midrule
Dual baseline & 909 & .031 [-.029, .088] & -.501 [-.623, -.387] \\
Dual baseline & 910 & .040 [-.030, .102] & -.465 [-.576, -.353] \\
SCCR ($\tau=.40$) & 909 & .074 [.027, .121] & -.182 [-.260, -.103] \\
SCCR ($\tau=.40$) & 910 & .042 [-.004, .087] & -.205 [-.283, -.127] \\
OLS & 909 & -.121 [-.239, -.020] & -.728 [-.874, -.592] \\
OLS & 910 & .014 [-.051, .073] & -.281 [-.370, -.191] \\
Reg. LEACE-style & 909 & .000 [.000, .000] & .000 [.000, .000] \\
Reg. LEACE-style & 910 & .000 [.000, .000] & .000 [.000, .000] \\
VCCA-private & 909 & .003 [-.006, .012] & .024 [.013, .036] \\
VCCA-private & 910 & .001 [-.001, .003] & .008 [.005, .011] \\
DSN-MMD & 909 & .004 [-.003, .011] & .006 [-.004, .017] \\
DSN-MMD & 910 & .021 [.012, .032] & -.002 [-.017, .012] \\
\bottomrule
\end{tabular*}
\par\smallskip
These source-selected linear readouts are assessed without the expert-only fallback. Positive baseline gains remain visible, including DSN-MMD in source seed 910. Residual/private inputs are as in the primary audit tables.
\end{table}

\begin{table}[H]
\centering
\small
\setlength{\tabcolsep}{2pt}
\caption{Residual audits on common frozen dual-baseline embeddings.}
\label{tab:table-common-audit-source-test}
\label{tab:table-common-audit-ddi}
\begin{tabular*}{\textwidth}{@{\extracolsep{\fill}}llcccc@{}}
\toprule
Method & Seed & \shortstack{Crust AUROC$(R)$\\linear} & \shortstack{Crust AUROC$(R)$\\nonlinear} & $G$ linear & $G$ nonlinear \\
\midrule
\multicolumn{6}{@{}l}{\textit{Fitzpatrick17k test}} \\
Dual baseline & 909 & .588 [.529, .650] & .615 [.553, .677] & .031 [-.029, .088] & -.014 [-.098, .061] \\
Dual baseline & 910 & .628 [.569, .689] & .606 [.545, .665] & .040 [-.030, .102] & .001 [-.082, .068] \\
SCCR ($\tau=.40$) & 909 & .554 [.493, .614] & .642 [.584, .698] & -.090 [-.185, -.003] & .000 [-.075, .068] \\
SCCR ($\tau=.40$) & 910 & .624 [.565, .684] & .637 [.574, .691] & .003 [-.034, .038] & .018 [-.040, .068] \\
Empirical LEACE & 909 & .500$^{\mathrm{P}}$ [.500, .500] & .500$^{\mathrm{P}}$ [.500, .500] & .000$^{\mathrm{F}}$ [.000, .000] & .000$^{\mathrm{F}}$ [.000, .000] \\
Empirical LEACE & 910 & .500$^{\mathrm{P}}$ [.500, .500] & .500$^{\mathrm{P}}$ [.500, .500] & .000$^{\mathrm{F}}$ [.000, .000] & .000$^{\mathrm{F}}$ [.000, .000] \\
JIVE & 909 & .500$^{\mathrm{P}}$ [.500, .500] & .500$^{\mathrm{P}}$ [.500, .500] & .000$^{\mathrm{F}}$ [.000, .000] & .000$^{\mathrm{F}}$ [.000, .000] \\
JIVE & 910 & .500$^{\mathrm{P}}$ [.500, .500] & .500$^{\mathrm{P}}$ [.500, .500] & .000$^{\mathrm{F}}$ [.000, .000] & .000$^{\mathrm{F}}$ [.000, .000] \\
AJIVE & 909 & .588 [.523, .651] & .619 [.560, .676] & .018 [-.045, .077] & -.032 [-.118, .048] \\
AJIVE & 910 & .551 [.486, .619] & .592 [.521, .659] & .033 [-.033, .090] & .015 [-.067, .079] \\
D-CCA & 909 & .561 [.497, .623] & .610 [.550, .668] & .010 [-.058, .074] & .005 [-.071, .079] \\
D-CCA & 910 & .603 [.540, .665] & .676 [.616, .728] & .030 [-.044, .092] & -.009 [-.096, .060] \\
\midrule
\multicolumn{6}{@{}l}{\textit{DDI}} \\
Dual baseline & 909 & .594 [.506, .676] & .599 [.516, .682] & -.501 [-.623, -.387] & -.937 [-1.090, -.789] \\
Dual baseline & 910 & .659 [.579, .740] & .605 [.532, .679] & -.465 [-.576, -.353] & -.739 [-.878, -.607] \\
SCCR ($\tau=.40$) & 909 & .514 [.422, .598] & .621 [.538, .707] & -.770 [-.937, -.609] & -.548 [-.664, -.438] \\
SCCR ($\tau=.40$) & 910 & .632 [.552, .713] & .554 [.462, .641] & -.067 [-.125, -.006] & -.240 [-.322, -.163] \\
Empirical LEACE & 909 & .500$^{\mathrm{P}}$ [.500, .500] & .500$^{\mathrm{P}}$ [.500, .500] & .000$^{\mathrm{F}}$ [.000, .000] & .000$^{\mathrm{F}}$ [.000, .000] \\
Empirical LEACE & 910 & .500$^{\mathrm{P}}$ [.500, .500] & .500$^{\mathrm{P}}$ [.500, .500] & .000$^{\mathrm{F}}$ [.000, .000] & .000$^{\mathrm{F}}$ [.000, .000] \\
JIVE & 909 & .500$^{\mathrm{P}}$ [.500, .500] & .500$^{\mathrm{P}}$ [.500, .500] & .000$^{\mathrm{F}}$ [.000, .000] & .000$^{\mathrm{F}}$ [.000, .000] \\
JIVE & 910 & .500$^{\mathrm{P}}$ [.500, .500] & .500$^{\mathrm{P}}$ [.500, .500] & .000$^{\mathrm{F}}$ [.000, .000] & .000$^{\mathrm{F}}$ [.000, .000] \\
AJIVE & 909 & .647 [.563, .729] & .621 [.537, .701] & -.532 [-.659, -.412] & -.746 [-.885, -.613] \\
AJIVE & 910 & .619 [.537, .703] & .569 [.482, .658] & -.463 [-.570, -.354] & -.582 [-.703, -.462] \\
D-CCA & 909 & .567 [.479, .655] & .641 [.555, .729] & -.556 [-.681, -.434] & -.741 [-.885, -.608] \\
D-CCA & 910 & .629 [.551, .712] & .579 [.494, .664] & -.465 [-.576, -.354] & -.772 [-.908, -.629] \\
\bottomrule
\end{tabular*}
\par\smallskip
All operators use the same projected codes and calibration images within each seed. SCCR and matched top-$q$ have identical fitted maps. JIVE and AJIVE use the primary paired individual output; JIVE's zero output follows from its supplied individual rank zero, not a failed fit. Empirical LEACE has a literal zero map.
\end{table}

\begin{table}[H]
\centering
\setlength{\tabcolsep}{2pt}
\caption{Primary common-embedding JIVE and AJIVE rank settings.}
\label{tab:table-operator-settings}
\begin{tabular}{@{}llccc@{}}
\toprule
Method & Seed & Initial / total ranks & Joint rank & Individual ranks \\
\midrule
JIVE & 909 & (1, 1) & 1 & (0, 0) \\
AJIVE & 909 & (1, 1) & 0 & (1, 1) \\
JIVE & 910 & (1, 1) & 1 & (0, 0) \\
AJIVE & 910 & (1, 1) & 0 & (1, 1) \\
\bottomrule
\end{tabular}
\par\smallskip
\begin{minipage}{\textwidth}Ranks are determined or declared on source calibration data, not assessment performance. JIVE's fixed rank budget here allocates no individual dimensions; this explains its constant individual code. This JIVE configuration does not use the original method's permutation rank selector. AJIVE uses native Wedin/random-direction joint selection conditional on the gap-based initial ranks. Both primary outputs are paired-view individual components with a training-only extension. Separate fixed-rank JIVE checks included six nonconverged fits; these remain unscored and are distinct from the primary rank-conditioned zero outputs.\end{minipage}
\end{table}

\begin{table}[H]
\centering
\small
\setlength{\tabcolsep}{2pt}
\caption{Paired SCCR-minus-baseline diagnosis gain differences on common embeddings.}
\label{tab:table-paired-gains-analytic-source-test}
\label{tab:table-paired-gains-analytic-ddi}
\begin{tabular*}{\textwidth}{@{\extracolsep{\fill}}llcccc@{}}
\toprule
Baseline & Seed & \shortstack{F17k\\linear} & \shortstack{F17k\\nonlinear} & \shortstack{DDI\\linear} & \shortstack{DDI\\nonlinear} \\
\midrule
Dual baseline & 909 & -.121 [-.190, -.052] & .014 [-.028, .058] & -.269 [-.420, -.129] & .389 [.291, .483] \\
Dual baseline & 910 & -.037 [-.092, .021] & .018 [-.031, .071] & .398 [.297, .503] & .499 [.396, .596] \\
Empirical LEACE & 909 & -.090 [-.185, -.003] & .000 [-.075, .068] & -.770 [-.937, -.609] & -.548 [-.664, -.438] \\
Empirical LEACE & 910 & .003 [-.034, .038] & .018 [-.040, .068] & -.067 [-.125, -.006] & -.240 [-.322, -.163] \\
JIVE & 909 & -.090 [-.185, -.003] & .000 [-.075, .068] & -.770 [-.937, -.609] & -.548 [-.664, -.438] \\
JIVE & 910 & .003 [-.034, .038] & .018 [-.040, .068] & -.067 [-.125, -.006] & -.240 [-.322, -.163] \\
AJIVE & 909 & -.109 [-.178, -.038] & .032 [-.016, .083] & -.238 [-.396, -.088] & .197 [.101, .285] \\
AJIVE & 910 & -.030 [-.082, .023] & .003 [-.044, .054] & .396 [.295, .497] & .341 [.250, .435] \\
D-CCA & 909 & -.101 [-.167, -.033] & -.005 [-.047, .032] & -.214 [-.359, -.078] & .193 [.113, .273] \\
D-CCA & 910 & -.026 [-.082, .032] & .027 [-.018, .081] & .398 [.295, .504] & .532 [.432, .628] \\
\bottomrule
\end{tabular*}
\par\smallskip
Entries are $G_{\rm SCCR}-G_{\rm baseline}$ using the primary source-selected readouts and fallback. Positive values favor SCCR for fitted diagnosis gain, not nonlinear independence or clinical usefulness. These saved paired intervals are not differences between marginal confidence bounds.
\end{table}

\begin{table}[H]
\centering
\small
\setlength{\tabcolsep}{2pt}
\caption{Paired SCCR-minus-baseline diagnosis gain differences for separately trained models.}
\label{tab:table-paired-gains-native-source-test}
\label{tab:table-paired-gains-native-ddi}
\begin{tabular*}{\textwidth}{@{\extracolsep{\fill}}llcccc@{}}
\toprule
Baseline & Seed & \shortstack{F17k\\linear} & \shortstack{F17k\\nonlinear} & \shortstack{DDI\\linear} & \shortstack{DDI\\nonlinear} \\
\midrule
Dual baseline & 909 & .043 [-.009, .098] & .087 [.022, .157] & .319 [.225, .422] & .751 [.627, .872] \\
Dual baseline & 910 & .002 [-.052, .062] & .050 [-.010, .122] & .260 [.169, .350] & .509 [.407, .622] \\
OLS & 909 & .194 [.106, .291] & .105 [.039, .174] & .547 [.426, .673] & .456 [.355, .572] \\
OLS & 910 & .028 [-.015, .080] & .050 [.005, .103] & .076 [.003, .153] & .050 [-.017, .127] \\
Reg. LEACE-style & 909 & .074 [.027, .121] & .073 [.027, .114] & -.182 [-.260, -.103] & -.187 [-.266, -.109] \\
Reg. LEACE-style & 910 & .042 [-.004, .087] & .051 [.003, .093] & -.205 [-.283, -.127] & -.229 [-.309, -.147] \\
VCCA-private & 909 & .074 [.027, .121] & .073 [.027, .114] & -.182 [-.260, -.103] & -.187 [-.266, -.109] \\
VCCA-private & 910 & .042 [-.004, .087] & .051 [.003, .093] & -.205 [-.283, -.127] & -.229 [-.309, -.147] \\
DSN-MMD & 909 & .074 [.027, .121] & .073 [.027, .114] & -.182 [-.260, -.103] & -.187 [-.266, -.109] \\
DSN-MMD & 910 & .042 [-.004, .087] & .051 [.003, .093] & -.205 [-.283, -.127] & -.229 [-.309, -.147] \\
\bottomrule
\end{tabular*}
\par\smallskip
Entries are $G_{\rm SCCR}-G_{\rm baseline}$ using the primary source-selected readouts and fallback. Positive values favor SCCR for fitted diagnosis gain, not nonlinear independence or clinical usefulness. These saved paired intervals are not differences between marginal confidence bounds.
\end{table}

\begin{table}[H]
\centering
\small
\setlength{\tabcolsep}{2pt}
\caption{Fixed SCCR omission residual: predictive overlap with Crust and the two contributions to its attenuation.}
\label{tab:skincon_fixed_crust}
\label{tab:table-fixed-residual-decomposition}
\begin{tabular*}{\textwidth}{@{\extracolsep{\fill}}llccc@{}}
\toprule
Cohort & Seed / probe & $G(\emptyset)$ & $A_C$ & $A_C-A_{\widetilde C}$ \\
\midrule
\multicolumn{5}{@{}l}{\textit{Predictive overlap: initial gain and attenuation}} \\
F17k & 909 / linear & .074 [.027, .121] & .003 [-.006, .012] & .005 [-.004, .014] \\
F17k & 909 / nonlinear & .073 [.027, .114] & .015 [.003, .027] & .020 [.008, .034] \\
F17k & 910 / linear & .042 [-.004, .087] & .000 [-.006, .008] & .001 [-.006, .008] \\
F17k & 910 / nonlinear & .051 [.003, .093] & .008 [-.002, .018] & .009 [-.001, .020] \\
DDI & 909 / linear & -.182 [-.260, -.103] & -.014 [-.023, -.005] & -.015 [-.024, -.006] \\
DDI & 909 / nonlinear & -.187 [-.266, -.109] & -.003 [-.015, .010] & .001 [-.013, .015] \\
DDI & 910 / linear & -.205 [-.283, -.127] & -.020 [-.028, -.012] & -.028 [-.037, -.018] \\
DDI & 910 / nonlinear & -.229 [-.309, -.147] & -.018 [-.029, -.006] & -.012 [-.023, .000] \\
\midrule
\multicolumn{5}{@{}l}{\textit{Decomposition of true-Crust attenuation (six decimal places)}} \\
Cohort & Seed / probe & Expert improvement & Combined improvement & Attenuation $A_C$ \\
F17k & 909 / linear & .009506 [-.002083, .021911] & .006323 [.001495, .011734] & .003183 [-.005962, .012102] \\
F17k & 909 / nonlinear & .014389 [.002885, .027178] & -.000128 [-.000249, -.000009] & .014516 [.003056, .027328] \\
F17k & 910 / linear & .009506 [-.002083, .021911] & .009029 [.000210, .018324] & .000476 [-.005941, .007834] \\
F17k & 910 / nonlinear & .014389 [.002885, .027178] & .006694 [.001788, .012030] & .007694 [-.001942, .018415] \\
DDI & 909 / linear & -.007896 [-.018858, .003870] & .006012 [.000220, .012379] & -.013908 [-.022556, -.004872] \\
DDI & 909 / nonlinear & -.004249 [-.016360, .008763] & -.001039 [-.001232, -.000845] & -.003210 [-.015225, .009766] \\
DDI & 910 / linear & -.007896 [-.018858, .003870] & .012132 [.002430, .022706] & -.020027 [-.028180, -.011974] \\
DDI & 910 / nonlinear & -.004249 [-.016360, .008763] & .013635 [.007713, .019969] & -.017885 [-.029415, -.005987] \\
\bottomrule
\end{tabular*}
\par\smallskip
No encoder or SCCR map changes. $A_C=G(\emptyset)-G(C)$ compares true Crust with omission; $A_C-A_{\widetilde C}$ compares true with shuffled Crust. Attenuation equals expert-only improvement minus combined-predictor improvement in log loss. Six decimals retain small nonzero changes. This measures fitted predictive overlap, not mediation or concept removal; negative DDI gains do not constitute useful residual signal explained by Crust.
\end{table}

\begin{table}[H]
\centering
\small
\setlength{\tabcolsep}{2pt}
\caption{Crust-only retraining: restoration contrasts at both training thresholds.}
\label{tab:skincon_refit}
\label{tab:skincon_tau_native_refit_source_test}
\label{tab:skincon_tau_native_refit_ddi}
\begin{tabular*}{\textwidth}{@{\extracolsep{\fill}}llcccc@{}}
\toprule
$\tau$ & Seed / probe & \shortstack{F17k\\$D_T-D_S$} & \shortstack{F17k\\$D_T-D_O$} & \shortstack{DDI\\$D_T-D_S$} & \shortstack{DDI\\$D_T-D_O$} \\
\midrule
.20 & 909 / linear & .064 [.002, .133] & .055 [-.032, .138] & .070 [-.039, .170] & .028 [-.088, .144] \\
.20 & 909 / nonlinear & -.012 [-.109, .093] & -.074 [-.169, .026] & -.025 [-.157, .108] & .044 [-.098, .194] \\
.20 & 910 / linear & .085 [.012, .161] & .089 [.013, .164] & .018 [-.069, .109] & -.024 [-.108, .075] \\
.20 & 910 / nonlinear & .001 [-.091, .093] & .018 [-.077, .114] & -.077 [-.188, .048] & .000 [-.100, .112] \\
.40 & 909 / linear & .016 [.002, .029] & .008 [-.023, .039] & .003 [-.013, .019] & .025 [-.005, .053] \\
.40 & 909 / nonlinear & .045 [-.016, .111] & .009 [-.062, .083] & -.013 [-.101, .087] & .017 [-.070, .111] \\
.40 & 910 / linear & .045 [.003, .088] & -.030 [-.082, .022] & .025 [-.034, .083] & .006 [-.060, .073] \\
.40 & 910 / nonlinear & .018 [-.059, .090] & -.076 [-.165, .011] & .093 [.004, .183] & .018 [-.102, .154] \\
\bottomrule
\end{tabular*}
\par\smallskip
$D_v$ is the raw-minus-residual Crust AUROC gap; T, S and O denote true, shuffled and omitted Crust. True versus shuffled is the primary contrast; both comparisons are retained. Erosion and Scale remain omitted. Retraining changes embeddings as well as the operator. The $.40$ primary results appear only here; $.20$ is an exploratory sensitivity check. Dual baseline is an encoder-learning control, not an independent suppression contrast.
\end{table}

\begin{table}[H]
\centering
\setlength{\tabcolsep}{2pt}
\caption{SCCR ($\tau=.40$): Crust accessibility after retraining.}
\label{tab:table-accessibility-sccr-tau040-source-test}
\label{tab:table-accessibility-sccr-tau040-ddi}
\begin{tabular*}{\textwidth}{@{\extracolsep{\fill}}llccc@{}}
\toprule
Seed / condition & Probe & Raw AUROC & Residual AUROC & Expert-code AUROC \\
\midrule
\multicolumn{5}{@{}l}{\textit{Fitzpatrick17k test}} \\
909 / Omitted & linear & .640 [.583, .695] & .635 [.578, .692] & .662 [.597, .725] \\
909 / Omitted & nonlinear & .620 [.565, .672] & .626 [.571, .681] & .674 [.610, .736] \\
909 / True Crust & linear & .605 [.546, .668] & .592 [.530, .657] & 1.000 [1.000, 1.000] \\
909 / True Crust & nonlinear & .583 [.524, .643] & .581 [.520, .646] & .999 [.999, 1.000] \\
909 / Shuffled Crust & linear & .654 [.593, .712] & .658 [.597, .717] & .671 [.606, .732] \\
909 / Shuffled Crust & nonlinear & .629 [.566, .689] & .672 [.612, .730] & .701 [.639, .755] \\
910 / Omitted & linear & .667 [.607, .727] & .568 [.505, .631] & .695 [.636, .753] \\
910 / Omitted & nonlinear & .664 [.603, .722] & .579 [.522, .641] & .703 [.647, .764] \\
910 / True Crust & linear & .612 [.550, .673] & .544 [.482, .610] & 1.000 [1.000, 1.000] \\
910 / True Crust & nonlinear & .587 [.528, .643] & .578 [.514, .649] & .999 [.999, 1.000] \\
910 / Shuffled Crust & linear & .643 [.580, .704] & .620 [.558, .683] & .691 [.629, .750] \\
910 / Shuffled Crust & nonlinear & .594 [.534, .655] & .604 [.548, .659] & .704 [.642, .766] \\
\midrule
\multicolumn{5}{@{}l}{\textit{DDI}} \\
909 / Omitted & linear & .727 [.651, .795] & .748 [.679, .811] & .742 [.669, .813] \\
909 / Omitted & nonlinear & .646 [.561, .725] & .648 [.566, .726] & .731 [.659, .798] \\
909 / True Crust & linear & .597 [.518, .669] & .593 [.515, .667] & 1.000 [1.000, 1.000] \\
909 / True Crust & nonlinear & .552 [.462, .631] & .537 [.448, .611] & .999 [.998, 1.000] \\
909 / Shuffled Crust & linear & .622 [.533, .702] & .620 [.533, .698] & .722 [.646, .795] \\
909 / Shuffled Crust & nonlinear & .672 [.593, .748] & .643 [.570, .727] & .738 [.665, .812] \\
910 / Omitted & linear & .652 [.578, .728] & .601 [.520, .676] & .755 [.685, .823] \\
910 / Omitted & nonlinear & .631 [.544, .710] & .612 [.542, .690] & .772 [.707, .838] \\
910 / True Crust & linear & .626 [.551, .695] & .569 [.486, .650] & 1.000 [1.000, 1.000] \\
910 / True Crust & nonlinear & .628 [.550, .707] & .592 [.516, .673] & 1.000 [.999, 1.000] \\
910 / Shuffled Crust & linear & .677 [.597, .751] & .645 [.563, .723] & .771 [.704, .836] \\
910 / Shuffled Crust & nonlinear & .589 [.509, .672] & .645 [.573, .721] & .734 [.671, .798] \\
\bottomrule
\end{tabular*}
\par\smallskip
Separate source-trained readouts probe each code. True Crust must be accessible in the expert code for removal to be a meaningful test. Raw changes reflect encoder learning; neither probe family establishes independence.
\end{table}

\begin{table}[H]
\centering
\setlength{\tabcolsep}{2pt}
\caption{Dual baseline: Crust accessibility after retraining.}
\label{tab:table-accessibility-identity-source-test}
\label{tab:table-accessibility-identity-ddi}
\begin{tabular*}{\textwidth}{@{\extracolsep{\fill}}llcc@{}}
\toprule
Seed / condition & Probe & Raw / residual AUROC & Expert-code AUROC \\
\midrule
\multicolumn{4}{@{}l}{\textit{Fitzpatrick17k test}} \\
909 / Omitted & linear & .588 [.529, .650] & .674 [.606, .736] \\
909 / Omitted & nonlinear & .615 [.553, .677] & .685 [.624, .743] \\
909 / True Crust & linear & .570 [.503, .634] & 1.000 [.999, 1.000] \\
909 / True Crust & nonlinear & .591 [.534, .651] & .999 [.997, 1.000] \\
909 / Shuffled Crust & linear & .644 [.587, .710] & .676 [.611, .737] \\
909 / Shuffled Crust & nonlinear & .627 [.567, .691] & .686 [.622, .746] \\
910 / Omitted & linear & .628 [.569, .689] & .697 [.637, .756] \\
910 / Omitted & nonlinear & .606 [.545, .665] & .699 [.641, .762] \\
910 / True Crust & linear & .641 [.588, .698] & 1.000 [1.000, 1.000] \\
910 / True Crust & nonlinear & .614 [.557, .673] & 1.000 [.999, 1.000] \\
910 / Shuffled Crust & linear & .623 [.557, .682] & .715 [.659, .774] \\
910 / Shuffled Crust & nonlinear & .577 [.510, .640] & .717 [.660, .774] \\
\midrule
\multicolumn{4}{@{}l}{\textit{DDI}} \\
909 / Omitted & linear & .594 [.506, .676] & .757 [.681, .828] \\
909 / Omitted & nonlinear & .599 [.516, .682] & .726 [.651, .798] \\
909 / True Crust & linear & .614 [.541, .693] & 1.000 [1.000, 1.000] \\
909 / True Crust & nonlinear & .634 [.565, .709] & 1.000 [.999, 1.000] \\
909 / Shuffled Crust & linear & .601 [.515, .684] & .742 [.670, .811] \\
909 / Shuffled Crust & nonlinear & .683 [.612, .753] & .741 [.669, .807] \\
910 / Omitted & linear & .659 [.579, .740] & .755 [.684, .823] \\
910 / Omitted & nonlinear & .605 [.532, .679] & .761 [.694, .824] \\
910 / True Crust & linear & .629 [.549, .704] & 1.000 [1.000, 1.000] \\
910 / True Crust & nonlinear & .620 [.535, .702] & 1.000 [.999, 1.000] \\
910 / Shuffled Crust & linear & .616 [.530, .697] & .766 [.694, .833] \\
910 / Shuffled Crust & nonlinear & .625 [.552, .705] & .759 [.691, .823] \\
\bottomrule
\end{tabular*}
\par\smallskip
Separate source-trained readouts probe each code. True Crust must be accessible in the expert code for removal to be a meaningful test. Raw changes reflect encoder learning; neither probe family establishes independence. Raw and residual codes differ only by centering, so their identical measurements are printed once.
\end{table}

\begin{table}[H]
\centering
\setlength{\tabcolsep}{2pt}
\caption{Diagnosis after SCCR Crust-only retraining ($\tau=.40$).}
\label{tab:table-retraining-diagnosis-source-test}
\label{tab:table-retraining-diagnosis-ddi}
\begin{tabular*}{\textwidth}{@{\extracolsep{\fill}}llccc@{}}
\toprule
Seed / condition & Probe & Native AUROC & $G$: common $\tilde{x}_{45}$ & $G$: matched expert \\
\midrule
\multicolumn{5}{@{}l}{\textit{Fitzpatrick17k test}} \\
909 / Omitted & linear & .845 [.796, .892] & .074 [.027, .121] & .074 [.027, .121] \\
909 / Omitted & nonlinear &  & .073 [.027, .114] & .073 [.027, .114] \\
909 / True Crust & linear & .842 [.795, .884] & .044 [-.006, .094] & .041 [-.009, .092] \\
909 / True Crust & nonlinear &  & .029 [-.023, .078] & .021 [-.028, .068] \\
909 / Shuffled Crust & linear & .872 [.825, .911] & .010 [-.058, .073] & .002 [-.065, .065] \\
909 / Shuffled Crust & nonlinear &  & .024 [-.039, .083] & .030 [-.035, .089] \\
910 / Omitted & linear & .805 [.756, .853] & .042 [-.004, .087] & .042 [-.004, .087] \\
910 / Omitted & nonlinear &  & .051 [.003, .093] & .051 [.003, .093] \\
910 / True Crust & linear & .820 [.771, .865] & -.019 [-.081, .032] & -.019 [-.078, .033] \\
910 / True Crust & nonlinear &  & .023 [-.026, .066] & .016 [-.034, .059] \\
910 / Shuffled Crust & linear & .838 [.791, .878] & .055 [.010, .095] & .055 [.010, .095] \\
910 / Shuffled Crust & nonlinear &  & .048 [-.002, .089] & .048 [-.001, .089] \\
\midrule
\multicolumn{5}{@{}l}{\textit{DDI}} \\
909 / Omitted & linear & .541 [.489, .588] & -.182 [-.260, -.103] & -.182 [-.260, -.103] \\
909 / Omitted & nonlinear &  & -.187 [-.266, -.109] & -.187 [-.266, -.109] \\
909 / True Crust & linear & .608 [.560, .654] & -.164 [-.246, -.085] & -.169 [-.249, -.091] \\
909 / True Crust & nonlinear &  & -.238 [-.325, -.154] & -.223 [-.303, -.142] \\
909 / Shuffled Crust & linear & .599 [.550, .648] & -.345 [-.452, -.242] & -.355 [-.464, -.251] \\
909 / Shuffled Crust & nonlinear &  & -.361 [-.470, -.254] & -.362 [-.470, -.253] \\
910 / Omitted & linear & .533 [.481, .581] & -.205 [-.283, -.127] & -.205 [-.283, -.127] \\
910 / Omitted & nonlinear &  & -.229 [-.309, -.147] & -.229 [-.309, -.147] \\
910 / True Crust & linear & .534 [.481, .582] & -.233 [-.313, -.152] & -.234 [-.313, -.156] \\
910 / True Crust & nonlinear &  & -.187 [-.258, -.120] & -.173 [-.241, -.105] \\
910 / Shuffled Crust & linear & .542 [.489, .592] & -.161 [-.235, -.092] & -.165 [-.238, -.095] \\
910 / Shuffled Crust & nonlinear &  & -.203 [-.285, -.127] & -.204 [-.287, -.128] \\
\bottomrule
\end{tabular*}
\par\smallskip
Native AUROC is printed once per condition because there is one trained head. The common input is the same 45 annotations; matched inputs also include true or shuffled Crust when supplied to the network. Gains use the selected fallback. A smaller gain with matched inputs can reflect better expert-only prediction, not greater representation suppression.
\end{table}

\begin{table}[H]
\centering
\small
\setlength{\tabcolsep}{2pt}
\caption{Fitzpatrick17k test: recovery from full, shared, and private codes.}
\label{tab:skincon_codes_source_test_linear}
\label{tab:skincon_codes_source_test_nonlinear}
\begin{tabular*}{\textwidth}{@{\extracolsep{\fill}}llcccc@{}}
\toprule
Method / seed & Code & \shortstack{Crust AUROC\\linear} & \shortstack{Diagnosis AUROC\\linear} & \shortstack{Crust AUROC\\nonlinear} & \shortstack{Diagnosis AUROC\\nonlinear} \\
\midrule
VCCA-private / 909 & Full & .506 [.446, .563] & .590 [.521, .653] & .458 [.396, .516] & .595 [.526, .656] \\
VCCA-private / 909 & Shared & .564 [.505, .631] & .500$^{\mathrm{P}}$ [.500, .500] & .434 [.382, .496] & .495 [.432, .556] \\
VCCA-private / 909 & Private & .483 [.422, .536] & .581 [.513, .645] & .467 [.407, .526] & .579 [.512, .638] \\
VCCA-private / 910 & Full & .526 [.464, .590] & .649 [.587, .714] & .538 [.477, .603] & .548 [.488, .606] \\
VCCA-private / 910 & Shared & .522 [.462, .584] & .648 [.588, .714] & .538 [.473, .600] & .558 [.496, .618] \\
VCCA-private / 910 & Private & .571 [.509, .640] & .500$^{\mathrm{P}}$ [.500, .500] & .479 [.418, .540] & .500$^{\mathrm{P}}$ [.500, .500] \\
DSN-MMD / 909 & Full & .616 [.552, .676] & .875 [.832, .914] & .561 [.501, .619] & .860 [.814, .904] \\
DSN-MMD / 909 & Shared & .600 [.541, .660] & .897 [.860, .929] & .578 [.515, .632] & .870 [.824, .908] \\
DSN-MMD / 909 & Private & .592 [.530, .649] & .532 [.474, .587] & .531 [.474, .590] & .500$^{\mathrm{P}}$ [.500, .500] \\
DSN-MMD / 910 & Full & .669 [.610, .725] & .806 [.742, .862] & .614 [.557, .671] & .813 [.757, .863] \\
DSN-MMD / 910 & Shared & .653 [.596, .710] & .824 [.768, .874] & .617 [.563, .675] & .810 [.751, .863] \\
DSN-MMD / 910 & Private & .591 [.528, .652] & .688 [.628, .749] & .570 [.513, .631] & .643 [.583, .699] \\
\bottomrule
\end{tabular*}
\par\smallskip
Full codes concatenate shared and private codes (64 dimensions); each component has 32. These controls distinguish poor private-code recovery from poor accessibility in the whole learned representation.
\end{table}

\begin{table}[H]
\centering
\small
\setlength{\tabcolsep}{2pt}
\caption{DDI: recovery from full, shared, and private codes.}
\label{tab:skincon_codes_ddi_linear}
\label{tab:skincon_codes_ddi_nonlinear}
\begin{tabular*}{\textwidth}{@{\extracolsep{\fill}}llcccc@{}}
\toprule
Method / seed & Code & \shortstack{Crust AUROC\\linear} & \shortstack{Diagnosis AUROC\\linear} & \shortstack{Crust AUROC\\nonlinear} & \shortstack{Diagnosis AUROC\\nonlinear} \\
\midrule
VCCA-private / 909 & Full & .509 [.422, .594] & .560 [.510, .610] & .516 [.434, .604] & .511 [.457, .566] \\
VCCA-private / 909 & Shared & .540 [.458, .615] & .500$^{\mathrm{P}}$ [.500, .500] & .456 [.370, .539] & .509 [.457, .561] \\
VCCA-private / 909 & Private & .482 [.405, .557] & .557 [.506, .606] & .525 [.454, .609] & .513 [.463, .566] \\
VCCA-private / 910 & Full & .591 [.508, .671] & .555 [.504, .605] & .480 [.401, .554] & .510 [.460, .558] \\
VCCA-private / 910 & Shared & .568 [.492, .647] & .559 [.508, .612] & .456 [.371, .536] & .515 [.467, .564] \\
VCCA-private / 910 & Private & .550 [.467, .623] & .500$^{\mathrm{P}}$ [.500, .500] & .485 [.402, .562] & .500$^{\mathrm{P}}$ [.500, .500] \\
DSN-MMD / 909 & Full & .669 [.591, .744] & .601 [.553, .647] & .623 [.547, .699] & .587 [.536, .635] \\
DSN-MMD / 909 & Shared & .658 [.583, .726] & .603 [.553, .649] & .627 [.558, .698] & .591 [.540, .638] \\
DSN-MMD / 909 & Private & .571 [.492, .649] & .495 [.445, .546] & .545 [.465, .629] & .500$^{\mathrm{P}}$ [.500, .500] \\
DSN-MMD / 910 & Full & .693 [.625, .766] & .568 [.521, .616] & .618 [.544, .693] & .547 [.496, .601] \\
DSN-MMD / 910 & Shared & .657 [.580, .741] & .581 [.534, .628] & .595 [.517, .669] & .547 [.498, .601] \\
DSN-MMD / 910 & Private & .630 [.549, .702] & .572 [.518, .620] & .587 [.509, .666] & .556 [.504, .605] \\
\bottomrule
\end{tabular*}
\par\smallskip
Full codes concatenate shared and private codes (64 dimensions); each component has 32. These controls distinguish poor private-code recovery from poor accessibility in the whole learned representation.
\end{table}

\subsection{Exploratory threshold sensitivity}
\label{app:skincon_tau_sensitivity}
This follow-up reports all three evaluated frozen-encoder thresholds, $\tau\in\{.20,.40,.50\}$, and both end-to-end SCCR thresholds, $\tau\in\{.20,.40\}$. Both seeds are retained. Frozen-encoder analyses refit the removal operator and audit predictors on the saved dual-baseline or SCCR $.40$ embeddings; they do not train a new neural prediction head. The end-to-end comparison adds six networks at $\tau=.20$ (two seeds and three Crust conditions) and reuses the $.40$ models. No threshold is selected from these results.

The tables retain saved pointwise 95\% paired image-bootstrap intervals and the same P/F flags as above. These cohorts had already been inspected. Frozen omission results measure sensitivity to the removal threshold on fixed encoders. Native restoration contrasts compare separately trained true-Crust, shuffled-Crust, and omission models, so their embeddings also differ. Saved paired restoration intervals are available for the native models; no corresponding paired restoration intervals are inferred from the marginal frozen-encoder intervals.
The frozen cutoffs were chosen from existing training-calibration
spectra to examine stronger and weaker removal around $.40$.
Each encoder pair has one calibration spectrum reused across these
cutoffs. The full saved follow-up covers 12 existing encoder pairs
(two model families, three Crust conditions, and two seeds), yielding
36 frozen maps, together with the native reference models and six
new $.20$ fits. The tables below report all cutoffs for the omission
encoders and both native training thresholds for the primary Crust
restoration contrasts. Secondary-concept probes and other frozen
restoration measurements remain in the saved analysis. No new
$.50$ network was trained.

Fresh $.20$ models use the same initialization, architecture,
augmentation, splits, optimizer, and early-stopping rule as their
$.40$ counterparts. Table~\ref{tab:skincon_tau_training} records all
six completed runs. The frozen-encoder results do not use the old
native head after changing its removal map: they use newly fitted
audit predictors. This separates a change to the map on fixed
embeddings from training a network under a different threshold.

At $.20$, the linear true-minus-shuffled restoration contrast remains
positive in both source seeds, but nonlinear contrasts still include
zero. The true-minus-omitted linear contrast excludes zero only in
seed 910. Omitted-model nonlinear diagnosis gains are $-.002$ and
$.033$ on Fitzpatrick17k, compared with $.073$ and $.051$ at $.40$.
DDI nonlinear gains remain negative at both training thresholds.
Lowering the threshold therefore does not give a uniform improvement
in suppression and retained prediction. This follow-up measures
sensitivity; it does not provide an independent threshold-selection
rule.

\begin{table}[H]
\centering
\small
\setlength{\tabcolsep}{2pt}
\caption{Threshold sensitivity on frozen omission encoders: additional settings around the primary $.40$ fit.}
\label{tab:skincon_tau_frozen_source_test}
\label{tab:skincon_tau_frozen_ddi}
\begin{tabular*}{\textwidth}{@{\extracolsep{\fill}}llcccc@{}}
\toprule
Encoder & Seed / $\tau$ / $q$ & \shortstack{Crust AUROC\\linear} & \shortstack{Crust AUROC\\nonlinear} & $G$ linear & $G$ nonlinear \\
\midrule
\multicolumn{6}{@{}l}{\textit{Fitzpatrick17k test}} \\
Dual baseline & 909 / .20 / 5 & .541 [.483, .603] & .566 [.510, .624] & -.050 [-.111, .006] & .020 [-.034, .075] \\
Dual baseline & 910 / .20 / 8 & .539 [.478, .596] & .548 [.488, .611] & .002 [-.022, .024] & .010 [-.030, .051] \\
Dual baseline & 910 / .50 / 0 & .628 [.569, .689] & .606 [.545, .665] & .040 [-.030, .102] & .001 [-.082, .068] \\
SCCR ($.40$) & 909 / .20 / 10 & .512 [.457, .569] & .579 [.516, .637] & -.028 [-.071, .017] & -.016 [-.057, .025] \\
SCCR ($.40$) & 909 / .50 / 1 & .651 [.596, .706] & .649 [.593, .701] & .074 [.028, .122] & .075 [.030, .117] \\
SCCR ($.40$) & 910 / .20 / 10 & .525 [.466, .582] & .575 [.515, .638] & .029 [.001, .055] & .021 [-.007, .048] \\
SCCR ($.40$) & 910 / .50 / 1 & .641 [.580, .699] & .603 [.541, .665] & .050 [.004, .097] & .059 [.014, .101] \\
\midrule
\multicolumn{6}{@{}l}{\textit{DDI}} \\
Dual baseline & 909 / .20 / 5 & .485 [.407, .567] & .574 [.490, .660] & -.354 [-.458, -.268] & -.300 [-.393, -.220] \\
Dual baseline & 910 / .20 / 8 & .496 [.414, .575] & .548 [.476, .628] & -.032 [-.068, .006] & -.110 [-.167, -.055] \\
Dual baseline & 910 / .50 / 0 & .659 [.579, .740] & .605 [.532, .679] & -.465 [-.576, -.353] & -.739 [-.878, -.607] \\
SCCR ($.40$) & 909 / .20 / 10 & .663 [.582, .736] & .597 [.513, .685] & -.174 [-.241, -.104] & -.126 [-.179, -.069] \\
SCCR ($.40$) & 909 / .50 / 1 & .744 [.674, .810] & .634 [.551, .719] & -.184 [-.263, -.106] & -.170 [-.248, -.096] \\
SCCR ($.40$) & 910 / .20 / 10 & .596 [.517, .674] & .491 [.410, .574] & -.027 [-.073, .018] & -.048 [-.093, -.007] \\
SCCR ($.40$) & 910 / .50 / 1 & .619 [.546, .692] & .630 [.540, .719] & -.194 [-.276, -.113] & -.155 [-.231, -.081] \\
\bottomrule
\end{tabular*}
\par\smallskip
$q$ is removed rank. The $.40$ rows are reported in Tables~\ref{tab:table-common-audit-source-test} and~\ref{tab:table-native-audit-source-test}: dual-baseline ranks are 1 and 2, and SCCR-encoder ranks are 2 and 3, for seeds 909 and 910. Dual seed 909 at $.50$ has $q=1$ and exactly the same results as $.40$ in both cohorts. The encoder and 45 annotations stay fixed; new maps and audit predictors are fitted, without reusing the old native head.
\end{table}

\begin{table}[H]
\centering
\setlength{\tabcolsep}{2pt}
\caption{Additional omission models trained end to end at $\tau=.20$.}
\label{tab:skincon_tau_native_diagnosis}
\begin{tabular*}{\textwidth}{@{\extracolsep{\fill}}llcccc@{}}
\toprule
Cohort & Seed & $\tau$ & Native AUROC & $G$ linear & $G$ nonlinear \\
\midrule
Fitzpatrick17k & 909 & .20 & .708 [.649, .767] & -.012 [-.050, .021] & -.002 [-.040, .031] \\
Fitzpatrick17k & 910 & .20 & .785 [.729, .836] & .016 [-.022, .053] & .033 [.003, .064] \\
DDI & 909 & .20 & .509 [.457, .560] & -.128 [-.181, -.073] & -.103 [-.152, -.051] \\
DDI & 910 & .20 & .553 [.501, .598] & -.032 [-.080, .013] & -.051 [-.100, -.004] \\
\bottomrule
\end{tabular*}
\par\smallskip
Native AUROC uses the trained head; $G$ uses separately fitted audit predictors. The $.40$ comparison is in Tables~\ref{tab:table-native-prediction-source-test} and~\ref{tab:table-native-audit-source-test}, avoiding duplicate measurements. Better native prediction need not imply greater residual gain; negative DDI gains remain visible.
\end{table}
\clearpage

\section{Experimental Details: Treatment Dropout}
\label{app:psyctraj-details}

\subsection{Dataset and prediction task}
\label{app:psyctraj-data}
\begin{table*}[t]
\centering
\scriptsize
\setlength{\tabcolsep}{2.7pt}
\begin{tabular}{lccccccc}
\toprule
& & \multicolumn{2}{c}{Site B}
& \multicolumn{2}{c}{Site C}
& \multicolumn{2}{c}{Site D} \\
Method & $n$
& AUROC & AUPRC
& AUROC & AUPRC
& AUROC & AUPRC \\
\midrule
Expert-only & 5
& $.565\pm.001$ & $.415\pm.001$
& $.574\pm.002$ & $.476\pm.001$
& $.555\pm.002$ & $.359\pm.002$ \\
Raw-only & 5
& $.619\pm.001$ & $.475\pm.001$
& $.611\pm.003$ & $.509\pm.002$
& $.585\pm.006$ & $.389\pm.002$ \\
Identity & 5
& $.615\pm.003$ & $.471\pm.004$
& $.608\pm.005$ & $.508\pm.004$
& $.583\pm.008$ & $.386\pm.008$ \\
OLS & 5
& $.613\pm.004$ & $.470\pm.004$
& $.605\pm.006$ & $.505\pm.004$
& $.573\pm.012$ & $.380\pm.007$ \\
LEACE & 4
& $.564\pm.001$ & $.414\pm.001$
& $.574\pm.002$ & $.468\pm.001$
& $.554\pm.004$ & $.358\pm.006$ \\
\midrule
SCCR, $\tau=.05$ & 5
& $.593\pm.004$ & $.446\pm.006$
& $.592\pm.007$ & $.492\pm.005$
& $.576\pm.005$ & $.381\pm.006$ \\
SCCR, $\tau=.40$ & 5
& $.613\pm.002$ & $.469\pm.002$
& $.605\pm.002$ & $.505\pm.002$
& $.578\pm.008$ & $.384\pm.004$ \\
SCCR, $\tau=.70$ & 5
& $.616\pm.002$ & $.472\pm.003$
& $.606\pm.004$ & $.506\pm.004$
& $.580\pm.007$ & $.386\pm.004$ \\
\bottomrule
\end{tabular}
\caption{Dropout prediction on held-out sites.
Values are mean $\pm$ SD across $n$ completed seeds;
AUPRC is computed as average precision.
Identity uses the dual architecture without removal.}
\label{tab:psych_evidence_audit_main}
\end{table*}
We study binary antidepressant treatment dropout in 155,552 patients from Sites A--D. Each patient is represented by age, demographic variables, and counts of CPT4, ICD10, and RxNorm codes. Race, ethnicity, and site indicators are excluded. Both branches use the same fixed panel of 256 codes selected from Site A training data. The raw branch receives all codes in this panel; the expert branch receives its five psychiatric codes and the same permitted age and demographic variables. Writing $m$ for the psychiatric-code mask, the paired inputs are
\[
x=[x_{\mathrm{age}},x_{\mathrm{demo}},x_{\mathrm{code}}],
\qquad
x_{\psi}=[x_{\mathrm{age}},x_{\mathrm{demo}},m\odot x_{\mathrm{code}}].
\]
For the omission audit, we additionally remove psychotherapy code CPT~90806 from the expert input while retaining it in the raw input.

\subsection{Train, validation, and test splits}
\label{app:psyctraj-splits}

We split Site A patients into 71,241 training and 7,916 validation examples, stratified by the label with seed 2025. Sites B, C, and D provide 55,642, 12,696, and 8,057 test patients, respectively. These splits are identical across methods and training seeds. Within Site A training, a fixed random partition with seed 421337 assigns 35,620 patients to operator calibration and 35,621 to audit-probe fitting. Both partitions are used to train the neural encoders. Audit probes are fitted on their partition only after freezing the network; Site A validation selects the checkpoint and probe settings.

\subsection{Count preprocessing and TF-IDF}
\label{app:psyctraj-tfidf}

We compute smoothed inverse document frequencies from Site A training counts only:
\[
\mathrm{df}_j=\sum_{i\in\mathcal D_{\mathrm{train}}}\mathbb{1}[x_{ij}>0],
\qquad
\mathrm{idf}_j=\log\frac{|\mathcal D_{\mathrm{train}}|+1}{\mathrm{df}_j+1}+1.
\]
The fixed panel contains the 256 allowed codes with the largest training sums $\sum_i x_{ij}\mathrm{idf}_j$; no additional psychiatric codes are forced into the panel. We apply $x_{ij}\leftarrow x_{ij}\mathrm{idf}_j$ to both views using the same saved weights. The expert mask is applied before this transformation. We use no additional $\ell_2$ normalization of the count block, and leave the permitted age and demographic values unchanged.

\subsection{Dual-head tabular-transformer architecture}
\label{app:psyctraj-architecture}

The two encoders have the same architecture and independent parameters. Each uses the fixed training-selected code panel, with absent codes masked. For a code $j$ with weighted count $v_j$, its token is $t_j=e_j+\phi(v_j)$, where $e_j$ is a learned embedding and $\phi$ is a two-layer MLP with SiLU activation. A learned CLS token and an age/demographic token are added. Three pre-layer-normalized transformer blocks use token width 128, four attention heads, feed-forward width 512, and dropout .1. Token dropout is disabled. The final CLS token is projected to a 256-dimensional encoder output; no hybrid MLP branch is used.

Each encoder output is independently $\ell_2$-normalized and passed through a $256\to64\to64$ projector with SiLU and dropout .2 between the linear layers:
\[
z_r=\Pi_r(\mathrm{normalize}(E_r(x))),
\qquad
z_{\psi}=\Pi_{\psi}(\mathrm{normalize}(E_{\psi}(x_{\psi}))).
\]
Raw-only and expert-only baselines use a single encoder with a linear output head. The dual baseline, OLS, LEACE, and SCCR use the same two-branch architecture and are trained end to end.

\subsection{Shared canonical component removal}
\label{app:psyctraj-sccr}

SCCR uses the branchwise OAS regularization defined in the main text. For centered 64-dimensional projected embeddings, let $S_{rr},S_{\psi\psi},S_{r\psi}$ be the empirical covariances. With $m_j=\operatorname{tr}(S_{jj})/64$, the OAS coefficient $\rho_j$ gives $\lambda_j=\rho_jm_j/(1-\rho_j)$, and $R_j=S_{jj}+\lambda_jI$. Using Cholesky factors $R_j=L_jL_j^\top$, we compute
\[
M=L_r^{-1}S_{r\psi}L_{\psi}^{-\top}
 =U\operatorname{diag}(s)V^\top,
\qquad H=\{i:s_i\geq\tau\}.
\]
The residual is
\[
z_\perp=(z_r-\mu_r)T_\tau,
\qquad
T_\tau=L_r^{-\top}(I-U_HU_H^\top)L_r^\top.
\]
During training, means and operators are computed on each minibatch, with gradients through the selected subspace. Covariance calculations use double precision. After each epoch, the means and operator are calibrated on the fixed calibration partition before validation. The selected checkpoint retains this calibrated operator, which is frozen for test-time inference.

The dual baseline uses $T=I$. OLS instead fits $B=Z_{\psi,c}^{\dagger}Z_{r,c}$ and returns $Z_{r,c}-Z_{\psi,c}B$, requiring paired expert embeddings at inference. LEACE uses unregularized empirical covariances and a relative rank tolerance of $10^{-6}$; its fitted map uses only the raw embedding at inference. These operators are also recomputed during training and frozen after calibration.

\subsection{Prediction head}
\label{app:psyctraj-head}

The head combines raw-residual and expert main effects with their elementwise interaction:
\[
\begin{aligned}
h_r&=g_r(\mathrm{LN}(z_\perp)),
& h_{\psi}&=g_{\psi}(\mathrm{LN}(z_{\psi}-\mu_{\psi})),\\
\ell&=w_r^\top h_r+w_{\psi}^\top h_{\psi}
 +w_{\mathrm{int}}^\top(h_r\odot h_{\psi})+b.
\end{aligned}
\]
Each $g$ is a $64\to64$ linear layer followed by SiLU and dropout .2. The sigmoid of $\ell$ gives the predicted dropout probability.

\subsection{Training protocol}
\label{app:psyctraj-training}

The models in Table~\ref{tab:psych_evidence_audit_main} minimize binary cross-entropy with logits, with positive-class weight $w_+=N_{\mathrm{neg}}/N_{\mathrm{pos}}$ computed on Site A training patients. We use AdamW with learning rate $2\times10^{-4}$, weight decay $10^{-5}$, and batch size 512. An epoch-wise cosine schedule decays toward zero over a maximum of 200 epochs, without warmup. We stop after 20 epochs without improvement and select the checkpoint with the lowest weighted validation loss. No auxiliary code-prediction loss is used.

The completed runs use training seeds 1996--2000 and the same data partitions. Table~\ref{tab:psych_evidence_audit_main} reports each method's completed count; unfinished runs are excluded. Runs use a hospital compute cluster with Python 3.10.18 and PyTorch 2.6.0.

\subsection{Evaluation}
\label{app:psyctraj-evaluation}

Native prediction uses the selected classifier and frozen operator, without recalibration on the test sites. We report AUROC and AUPRC (average precision). For accuracy, the native decision threshold maximizes Youden's $J(t)=\mathrm{TPR}(t)-\mathrm{FPR}(t)$ on Site A validation; AUROC and AUPRC do not depend on this threshold.

For the omission audit, we fit separate predictors of CPT~90806 presence and three psychiatric codes retained in the expert input. The included codes are chosen from the training panel among codes with prevalence between 5\% and 95\%, preferring prevalence closest to 50\%. Linear probes use standardized logistic regression with $C\in\{.1,1,10\}$ and at most 2,000 iterations. Nonlinear probes use gradient-boosted trees with at most 7 or 15 leaves and 150 iterations. Probes are fitted on the probe partition and selected by Site A validation log loss. Selection includes a constant predictor; a combined expert--residual predictor can also retain the fitted expert-only predictor. All selected predictors are frozen before evaluation on Sites B--D. Code recovery uses AUROC; included-code scores are averaged within each seed before computing the mean and sample SD across seeds.

To test whether the omitted code explains the residual's predictive contribution, we keep the residual fixed and add either the true code values or values randomly reassigned among patients to the observed expert inputs. The reassignment stays within each calibration, probe, validation, and test partition. We fit dropout predictors with and without the fixed residual and compare their log-loss gains before and after restoration. This differs from retraining the network with restored or shuffled inputs, which changes both the embeddings and the operator. Paired loss contrasts use 1,000 patient-bootstrap samples within each site and fitted seed. These intervals do not measure variability across training seeds, and a constant predictor or zero gain does not establish absence of information.

\subsection{SCCR threshold sweep}
\label{app:psyctraj-tau-sweep}

We train separate models for $\tau\in\{.05,.10,.20,.30,.40,.50,.60,.70,.80,.90\}$, keeping the architecture, optimizer, preprocessing, and patient splits fixed. The main audit uses $\tau=.40$; Table~\ref{tab:psych_evidence_audit_main} also reports $.05$ and $.70$. This sweep describes sensitivity to the threshold and does not establish an optimal value for a new hospital.

\subsection{Fixed code panel}
\label{app:psyctraj-code-panel}

Table~\ref{tab:psych-code-panel} lists the 256 codes in panel rank order. Prefixes identify the code systems: 123 CPT4, 86 ICD10, and 47 RxNorm codes. Read each rank--code pair from top to bottom. CPT~90806 is the code omitted from the expert view in the controlled audit.

\begingroup
\small
\setlength{\tabcolsep}{8pt}
\begin{longtable}{@{}rlrlrlrl@{}}
\caption{Fixed code panel for treatment-dropout prediction.}
\label{tab:psych-code-panel}\\
\toprule
Rank & Code & Rank & Code & Rank & Code & Rank & Code \\
\midrule
\endfirsthead
\multicolumn{8}{c}{\tablename~\thetable\ continued}\\
\toprule
Rank & Code & Rank & Code & Rank & Code & Rank & Code \\
\midrule
\endhead
\midrule
\multicolumn{8}{r}{Continued on next page}\\
\endfoot
\bottomrule
\endlastfoot
1 & \texttt{cpt4:99213} & 65 & \texttt{icd10:f33.9} & 129 & \texttt{cpt4:76856} & 193 & \texttt{cpt4:82247} \\
2 & \texttt{cpt4:99214} & 66 & \texttt{cpt4:99284} & 130 & \texttt{cpt4:88142} & 194 & \texttt{rxnorm:36567} \\
3 & \texttt{icd10:i10} & 67 & \texttt{rxnorm:1191} & 131 & \texttt{icd10:r06.02} & 195 & \texttt{rxnorm:4603} \\
4 & \texttt{cpt4:99211} & 68 & \texttt{cpt4:97110} & 132 & \texttt{rxnorm:8640} & 196 & \texttt{rxnorm:214182} \\
5 & \texttt{cpt4:85027} & 69 & \texttt{cpt4:71010} & 133 & \texttt{cpt4:85007} & 197 & \texttt{icd10:z04} \\
6 & \texttt{cpt4:36415} & 70 & \texttt{cpt4:82310} & 134 & \texttt{cpt4:83550} & 198 & \texttt{icd10:n95.1} \\
7 & \texttt{cpt4:85025} & 71 & \texttt{cpt4:82435} & 135 & \texttt{cpt4:71260} & 199 & \texttt{rxnorm:2598} \\
8 & \texttt{cpt4:84443} & 72 & \texttt{icd10:g44.1} & 136 & \texttt{cpt4:81025} & 200 & \texttt{rxnorm:4511} \\
9 & \texttt{icd10:z00.00} & 73 & \texttt{cpt4:87491} & 137 & \texttt{rxnorm:18631} & 201 & \texttt{icd10:f39} \\
10 & \texttt{cpt4:80048} & 74 & \texttt{cpt4:84460} & 138 & \texttt{cpt4:70450} & 202 & \texttt{cpt4:90853} \\
11 & \texttt{cpt4:82947} & 75 & \texttt{cpt4:82374} & 139 & \texttt{cpt4:90658} & 203 & \texttt{icd10:f31.9} \\
12 & \texttt{cpt4:71020} & 76 & \texttt{cpt4:87591} & 140 & \texttt{cpt4:84702} & 204 & \texttt{cpt4:90834} \\
13 & \texttt{icd10:e78.4} & 77 & \texttt{icd10:m54.5} & 141 & \texttt{cpt4:82728} & 205 & \texttt{cpt4:90804} \\
14 & \texttt{cpt4:80053} & 78 & \texttt{cpt4:87070} & 142 & \texttt{rxnorm:9143} & 206 & \texttt{icd10:f29} \\
15 & \texttt{cpt4:g0001} & 79 & \texttt{rxnorm:89905} & 143 & \texttt{cpt4:76092} & 207 & \texttt{icd10:f31.30} \\
16 & \texttt{icd10:f32.9} & 80 & \texttt{rxnorm:6387} & 144 & \texttt{cpt4:97001} & 208 & \texttt{cpt4:90807} \\
17 & \texttt{cpt4:82565} & 81 & \texttt{rxnorm:29046} & 145 & \texttt{icd10:r00.2} & 209 & \texttt{rxnorm:28439} \\
18 & \texttt{cpt4:84520} & 82 & \texttt{icd10:n39.0} & 146 & \texttt{cpt4:87340} & 210 & \texttt{icd10:f33.1} \\
19 & \texttt{icd10:z23} & 83 & \texttt{icd10:e03.9} & 147 & \texttt{icd10:c50.919} & 211 & \texttt{icd10:f33.2} \\
20 & \texttt{cpt4:88305} & 84 & \texttt{rxnorm:41126} & 148 & \texttt{cpt4:80100} & 212 & \texttt{icd10:f31.10} \\
21 & \texttt{cpt4:85610} & 85 & \texttt{cpt4:83690} & 149 & \texttt{cpt4:77052} & 213 & \texttt{cpt4:90791} \\
22 & \texttt{cpt4:80061} & 86 & \texttt{cpt4:81002} & 150 & \texttt{cpt4:83615} & 214 & \texttt{cpt4:90805} \\
23 & \texttt{cpt4:80076} & 87 & \texttt{rxnorm:214183} & 151 & \texttt{cpt4:82306} & 215 & \texttt{icd10:f25.9} \\
24 & \texttt{cpt4:99215} & 88 & \texttt{cpt4:99000} & 152 & \texttt{cpt4:99282} & 216 & \texttt{icd10:f31.60} \\
25 & \texttt{cpt4:83735} & 89 & \texttt{icd10:i25.10} & 153 & \texttt{rxnorm:36387} & 217 & \texttt{cpt4:80178} \\
26 & \texttt{icd10:r07.9} & 90 & \texttt{icd10:m54.2} & 154 & \texttt{rxnorm:5856} & 218 & \texttt{cpt4:90847} \\
27 & \texttt{icd10:z12.31} & 91 & \texttt{cpt4:84075} & 155 & \texttt{cpt4:93000} & 219 & \texttt{cpt4:90792} \\
28 & \texttt{cpt4:84100} & 92 & \texttt{cpt4:90471} & 156 & \texttt{cpt4:99243} & 220 & \texttt{rxnorm:6448} \\
29 & \texttt{cpt4:93005} & 93 & \texttt{cpt4:82150} & 157 & \texttt{cpt4:90801} & 221 & \texttt{cpt4:90832} \\
30 & \texttt{rxnorm:161} & 94 & \texttt{icd10:z01.810} & 158 & \texttt{icd10:n63} & 222 & \texttt{icd10:f33.0} \\
31 & \texttt{cpt4:87086} & 95 & \texttt{cpt4:82465} & 159 & \texttt{cpt4:99291} & 223 & \texttt{rxnorm:2002} \\
32 & \texttt{rxnorm:5640} & 96 & \texttt{icd10:d64.9} & 160 & \texttt{cpt4:84478} & 224 & \texttt{icd10:f20.9} \\
33 & \texttt{rxnorm:6470} & 97 & \texttt{cpt4:99204} & 161 & \texttt{rxnorm:723} & 225 & \texttt{icd10:f31.81} \\
34 & \texttt{cpt4:99212} & 98 & \texttt{cpt4:82040} & 162 & \texttt{icd10:k52.89} & 226 & \texttt{icd10:f32.1} \\
35 & \texttt{icd10:f41.9} & 99 & \texttt{cpt4:82607} & 163 & \texttt{cpt4:86592} & 227 & \texttt{icd10:f20.0} \\
36 & \texttt{icd10:e11.9} & 100 & \texttt{cpt4:80051} & 164 & \texttt{rxnorm:221147} & 228 & \texttt{cpt4:90870} \\
37 & \texttt{rxnorm:7804} & 101 & \texttt{cpt4:85652} & 165 & \texttt{icd10:z01.411} & 229 & \texttt{rxnorm:32624} \\
38 & \texttt{cpt4:81003} & 102 & \texttt{cpt4:99244} & 166 & \texttt{rxnorm:1202} & 230 & \texttt{icd10:f33.3} \\
39 & \texttt{rxnorm:7646} & 103 & \texttt{cpt4:g0202} & 167 & \texttt{cpt4:72193} & 231 & \texttt{icd10:z65.8} \\
40 & \texttt{cpt4:93010} & 104 & \texttt{icd10:z01.818} & 168 & \texttt{cpt4:88307} & 232 & \texttt{icd10:f10.99} \\
41 & \texttt{icd10:z12.4} & 105 & \texttt{rxnorm:83367} & 169 & \texttt{cpt4:87186} & 233 & \texttt{icd10:f33.41} \\
42 & \texttt{cpt4:99283} & 106 & \texttt{icd10:m25.569} & 170 & \texttt{rxnorm:10831} & 234 & \texttt{icd10:f23} \\
43 & \texttt{icd10:m54.9} & 107 & \texttt{icd10:r42} & 171 & \texttt{cpt4:82248} & 235 & \texttt{icd10:f32.2} \\
44 & \texttt{cpt4:85730} & 108 & \texttt{rxnorm:8591} & 172 & \texttt{icd10:j02.9} & 236 & \texttt{icd10:f32.0} \\
45 & \texttt{rxnorm:26225} & 109 & \texttt{icd10:r06.09} & 173 & \texttt{icd10:z33.1} & 237 & \texttt{cpt4:90837} \\
46 & \texttt{cpt4:84132} & 110 & \texttt{rxnorm:7052} & 174 & \texttt{cpt4:99395} & 238 & \texttt{icd10:f31.4} \\
47 & \texttt{cpt4:99232} & 111 & \texttt{rxnorm:3423} & 175 & \texttt{rxnorm:3264} & 239 & \texttt{icd10:f32.3} \\
48 & \texttt{icd10:m79.609} & 112 & \texttt{cpt4:82550} & 176 & \texttt{icd10:r10.84} & 240 & \texttt{cpt4:90836} \\
49 & \texttt{icd10:r10.9} & 113 & \texttt{cpt4:97140} & 177 & \texttt{rxnorm:5487} & 241 & \texttt{icd10:f32.89} \\
50 & \texttt{cpt4:99285} & 114 & \texttt{icd10:e78.0} & 178 & \texttt{rxnorm:3498} & 242 & \texttt{icd10:f32.4} \\
51 & \texttt{cpt4:84450} & 115 & \texttt{rxnorm:6918} & 179 & \texttt{rxnorm:6585} & 243 & \texttt{icd10:f31.32} \\
52 & \texttt{icd10:r53.1} & 116 & \texttt{cpt4:88175} & 180 & \texttt{icd10:n76.2} & 244 & \texttt{cpt4:90833} \\
53 & \texttt{icd10:k21.9} & 117 & \texttt{cpt4:83718} & 181 & \texttt{icd10:z71.89} & 245 & \texttt{cpt4:96151} \\
54 & \texttt{cpt4:83036} & 118 & \texttt{icd10:j06.9} & 182 & \texttt{rxnorm:2551} & 246 & \texttt{icd10:f33.42} \\
55 & \texttt{icd10:r05} & 119 & \texttt{rxnorm:10582} & 183 & \texttt{rxnorm:6960} & 247 & \texttt{icd10:f31.31} \\
56 & \texttt{cpt4:81001} & 120 & \texttt{rxnorm:25480} & 184 & \texttt{cpt4:99205} & 248 & \texttt{cpt4:90816} \\
57 & \texttt{rxnorm:82003} & 121 & \texttt{icd10:f17.200} & 185 & \texttt{icd10:m25.519} & 249 & \texttt{cpt4:g0073} \\
58 & \texttt{rxnorm:9863} & 122 & \texttt{cpt4:76830} & 186 & \texttt{icd10:j98.4} & 250 & \texttt{icd10:f32.5} \\
59 & \texttt{cpt4:90862} & 123 & \texttt{rxnorm:4337} & 187 & \texttt{icd10:r21} & 251 & \texttt{icd10:f31.5} \\
60 & \texttt{rxnorm:435} & 124 & \texttt{cpt4:99233} & 188 & \texttt{cpt4:86803} & 252 & \texttt{icd10:f20.89} \\
61 & \texttt{icd10:e66.9} & 125 & \texttt{cpt4:99231} & 189 & \texttt{cpt4:74160} & 253 & \texttt{icd10:f31.12} \\
62 & \texttt{cpt4:84295} & 126 & \texttt{cpt4:99396} & 190 & \texttt{icd10:n92.5} & 254 & \texttt{icd10:f31.64} \\
63 & \texttt{cpt4:99203} & 127 & \texttt{icd10:j45.998} & 191 & \texttt{cpt4:82570} & 255 & \texttt{cpt4:90868} \\
64 & \texttt{cpt4:90806} & 128 & \texttt{cpt4:87205} & 192 & \texttt{icd10:n94.89} & 256 & \texttt{icd10:f33.40} \\
\end{longtable}
\endgroup

\section{Experimental Details: CAD detection in Coronary angiography experiment}
\label{app:angio-sccr}
\begin{table*}[h]
\centering
\scriptsize
\setlength{\tabcolsep}{2.7pt}
\begin{tabular}{lcccccc}
\toprule
$\boldsymbol{\tau}$ & \textbf{AUROC [95\% CI]} & \textbf{AUPRC [95\% CI]} & \textbf{Mask drop} & \textbf{Raw drop} & \textbf{Mask/raw attr.} & \textbf{Evidence} \\
 & & & \textbf{zero / perm.} & \textbf{zero / perm.} & & \\
\midrule
0.1 & 0.792 [0.772, 0.809] & 0.766 [0.738, 0.792] & \textbf{0.283 / 0.303} & 0.000 / 0.003 & 31.25 & Mask \\
0.5 & 0.790 [0.771, 0.807] & 0.770 [0.745, 0.792] & \textbf{0.261 / 0.278} & 0.000 / 0.000 & 16.17 & Mask \\
0.7 & \textbf{0.835 [0.819, 0.851]} & 0.820 [0.795, 0.842] & 0.000 / 0.000 & \textbf{0.387 / 0.348} & 0.029 & Raw \\
0.9 & 0.831 [0.816, 0.846] & 0.820 [0.797, 0.842] & -0.001 / 0.000 & \textbf{0.342 / 0.318} & 0.015 & Raw \\
\bottomrule
\end{tabular}
\caption{\textbf{Historical CAD angiogram evidence-source audit.} AUROC and AUPRC include bootstrap 95\% confidence intervals. Drops are point-estimate decreases in test AUROC after zeroing or permuting the corresponding input branch. Attribution is the mask/raw Grad$\times$Input mass ratio for the full logit.}
\label{tab:cad_evidence_audit}
\end{table*}

\paragraph{Dataset and sample construction.}
We evaluate on a coronary angiography video prediction task using a metadata parquet file, a raw-frame HDF5 file, and a mask HDF5 file. The raw HDF5 stores each angiography video as a Blosc-compressed uint8 tensor of shape
\[
20 \times 256 \times 256,
\]
corresponding to 20 grayscale frames per video. The metadata table is indexed by video path and contains the binary label for CAD or no CAD.

Each metadata row contributes at most one training example. For a retained video, the raw branch input is the selected grayscale angiography frame. The expert branch input is the corresponding vessel mask for the same frame. Both raw frames and masks are converted to three-channel tensors by repeating the grayscale channel. Raw frames are scaled to $[0,1]$ and normalized with ImageNet statistics:
\[
\mu=(0.485,0.456,0.406), \qquad
\sigma=(0.229,0.224,0.225).
\]
Vessel masks are scaled to $[-1,1]$.

\paragraph{Train, validation, and test splits.}
The split is performed at the video level. The default split fractions are
\[
80\% \text{ train}, \qquad 10\% \text{ validation}, \qquad 10\% \text{ test}.
\]
The test split is produced with random seed 42, and the validation split is produced from the remaining train-validation pool with seed 43. Final metrics record both the number of videos in each split and the number of retained longest-frame samples after mask-availability filtering. The validation and test sets each contain 2,234 retained images; the follow-up audit uses 17,866 eligible training images.

\paragraph{Training-time augmentation.}
During classifier training, data augmentation is applied only to the training split. Geometric transformations are applied jointly to the raw angiography frame and the corresponding mask, preserving spatial alignment between branches. With probability 0.5, the pair undergoes a random affine transform with rotation sampled from $[-20^\circ,20^\circ]$, translations sampled from $[-10,10]$ pixels in each direction, and scale sampled from $[0.9,1.1]$.

Photometric augmentations are applied only to the raw angiography image. With probability 0.5, gamma is sampled from $[0.7,1.5]$. With probability 0.5, brightness and contrast factors are each sampled from $[0.8,1.2]$. With probability 0.3, Gaussian blur with kernel size 3 and $\sigma \in [0.1,1.5]$ is applied. Validation, test, and calibration loaders use no augmentation.

\paragraph{Dual-encoder architecture.}
The prediction model is a two-branch image encoder. The raw branch receives the normalized angiography frame, and the expert branch receives the corresponding vessel mask. Each branch uses an independent EfficientNetV2-S backbone initialized with ImageNet weights. The classifier layer of each EfficientNet is replaced by the identity map, yielding one image embedding per branch. If the requested torchvision backbone is unavailable, the implementation falls back to a small two-layer convolutional encoder; all reported runs used the EfficientNetV2-S backbone.

Let $E_r$ and $E_m$ denote the raw and mask encoders. Their outputs are $\ell_2$-normalized and passed through independent two-layer projector MLPs:
\[
z_r = \Pi_r(\mathrm{normalize}(E_r(x_r))),
\qquad
z_m = \Pi_m(\mathrm{normalize}(E_m(x_m))).
\]
Each projector has the form
\[
\Pi(z)
=
W_2 \,
\mathrm{SiLU}(W_1 z + b_1)
+ b_2.
\]
The reported checkpoints use a projected dimension of 30. The same dimension is used for the SCCR layer and the downstream prediction head.

\paragraph{Shared canonical component removal.}
We remove mask-shared directions from the raw branch using Shared Canonical Component Removal. Given centered projected embeddings
\[
Z_r, Z_m \in \mathbb{R}^{n \times d},
\]
we compute empirical covariances
\[
\Sigma_{rr}, \qquad \Sigma_{mm}, \qquad \Sigma_{rm}.
\]
Each branch covariance is regularized by an OAS-selected equivalent ridge. For a covariance matrix $S$, with
\[
\mu=\frac{\operatorname{tr}(S)}{d},
\]
the OAS shrinkage coefficient $\rho$ is converted to
\[
\lambda = \frac{\rho \mu}{1-\rho}.
\]
The regularized branch covariances are
\[
R_{rr} = \Sigma_{rr} + \lambda_r I,
\qquad
R_{mm} = \Sigma_{mm} + \lambda_m I.
\]
Let
\[
L_r L_r^\top = R_{rr},
\qquad
L_m L_m^\top = R_{mm}.
\]
SCCR forms the whitened cross-covariance
\[
M =
L_r^{-1}
\Sigma_{rm}
L_m^{-\top},
\]
and computes
\[
M = U \operatorname{diag}(s) V^\top.
\]
For a threshold $\tau$, all canonical directions with
\[
s_i \geq \tau
\]
are removed from the raw branch. If $U_\tau$ denotes the corresponding left singular vectors, then
\[
P_\tau = U_\tau U_\tau^\top,
\qquad
K_\tau = I - P_\tau,
\]
and the raw residualization operator is
\[
T_\tau
=
L_r^{-\top}
K_\tau
L_r^\top.
\]
The residualized raw representation is
\[
\widetilde{z}_r = z_r T_\tau.
\]

During training, embeddings are centered using the current minibatch mean, and SCCR is recomputed dynamically. Before validation after each epoch, and again before final test evaluation, the model is calibrated on the non-augmented training split. Calibration estimates the global raw and mask projector means on the training set and freezes the SCCR projection $T_\tau$. Validation and test metrics are computed using this frozen train-set projection.

\paragraph{Prediction head.}
The classifier head uses explicit raw, mask, and interaction terms. Given the SCCR-residualized raw representation $\widetilde{z}_r$ and the centered mask representation $z_m$, the head computes
\[
h_r = g_r(\mathrm{LN}(\widetilde{z}_r)),
\qquad
h_m = g_m(\mathrm{LN}(z_m)),
\]
where $g_r$ and $g_m$ are one-layer SiLU networks. The interaction representation is
\[
h_{\mathrm{int}} = h_r \odot h_m.
\]
The final logit is
\[
\ell
=
w_r^\top h_r
+
w_m^\top h_m
+
w_{\mathrm{int}}^\top h_{\mathrm{int}}
+
b.
\]
The output probability is $\sigma(\ell)$.

\paragraph{Optimization and model selection.}
The model is trained with binary cross-entropy with logits. When both positive and negative examples are present in the training split, the loss uses a positive-class weight
\[
w_+ = \frac{N_{\mathrm{neg}}}{N_{\mathrm{pos}}},
\]
computed at the retained sample level. The reported checkpoints use AdamW with learning rate $3\times 10^{-4}$, weight decay $10^{-2}$, batch size 32, no gradient clipping, and a maximum of 30 epochs, with training seed 42.

A ReduceLROnPlateau scheduler monitors validation loss, reduces the learning rate by a factor of 0.5 after three non-improving validation epochs, and uses a minimum learning rate of $10^{-6}$. Early stopping is triggered after five consecutive non-improving validation epochs. The best checkpoint is selected by lowest validation loss and saved.

\paragraph{Evaluation.}
After training, the best checkpoint is loaded, the SCCR projection is recalibrated on the non-augmented training split, and final validation and test metrics are computed. We report binary cross-entropy loss, accuracy, AUROC, AUPRC, sample count, and positive-label rate. 

\paragraph{SCCR threshold sweep.}
We sweep the SCCR threshold over
\[
\tau \in \{0.1,0.2,0.3,0.4,0.5,0.6,0.7,0.8,0.9\}.
\]
All other hyperparameters are held fixed unless explicitly overridden. Each threshold run should use its own output directory so that checkpoints, split files, and final metrics do not collide across runs.

\subsection{Follow-up prediction and residual audits}
\label{app:angio-followup}

We compare the saved SCCR checkpoints with a matched dual baseline without removal and with OLS and LEACE training adaptations. These are retrospective, single-seed comparisons on the previously inspected test set of 2,234 video rows, including 1,075 positive labels. Table~\ref{tab:cad_evidence_audit} retains the original threshold-sweep results; the following tables report the later audit and its bootstrap estimates. They do not provide external-site or training-seed validation.

\paragraph{Comparisons and readouts.}
Native prediction uses each trained head with its own expert and retained embeddings. The saved SCCR maps are loaded without recalibration. For OLS and LEACE, the operator is fitted on non-augmented training embeddings before the first epoch and refitted before validation, then held fixed during each epoch; gradients pass through the embeddings and fixed map, not through its fit. OLS uses $R=X_c-E_c(E_c^+X_c)$ and requires the paired expert at inference. Separately, we fit common-embedding operators on 8,933 training rows and readouts on a disjoint 8,933 training rows, both previously used to train the CNN. Validation selects readouts. Linear diagnosis probes use standardized logistic regression with $C\in\{.01,.1,1\}$; recovery uses ridge penalties $\{1,100\}$. Nonlinear probes apply an RBF Nystroem expansion with at most 64 components, with bandwidth multipliers $\{.5,2\}$. Constant predictors are eligible; failed or nonconverged candidates are excluded.

The common-embedding comparison uses the dual-baseline encoders and a fixed expert input $E_0$ containing its expert embedding and mask area. We define $G_0=L(E_0)-L(E_0,R)$ using unweighted diagnosis log loss. This input differs from each SCCR model's own expert $E$. The latter comparison uses $G_X=L(E,0)-L(E,X)$ and $G_R=L(E,0)-L(E,R)$, where zero padding holds the input width fixed for the nonlinear bandwidth. Common-embedding readouts retain the original width-dependent bandwidth, so adding constant columns can change their nonlinear predictions without adding information. All intervals use 1,000 paired bootstrap resamples of test video rows, conditional on the fitted models and readouts, without multiplicity adjustment.

\paragraph{Baseline adaptations.}
Common OLS and LEACE use empirical covariances. D-CCA uses a signal-covariance plug-in without S-POET denoising or correlation significance selection. JIVE has joint rank 1 and individual ranks $(15,15)$. AJIVE selects initial ranks $(5,2)$ by the largest adjacent singular-value ratio and joint rank zero using its Wedin/random-direction rule with 200 resamples. Both report individual outputs and use a calibration-fitted paired least-squares extension to new rows. The fixed-rank $(16,16)$ AJIVE sensitivity setting was ineligible because the implemented Wedin resampling requires dimension at least $2r$; whitened JIVE settings did not converge within 20,000 iterations. These outcomes are not assigned prediction scores. VCCA-private and DSN were not run here.

\begin{table}[H]
\centering\scriptsize
\setlength{\tabcolsep}{3pt}
\begin{tabular*}{\textwidth}{@{\extracolsep{\fill}}lccc@{}}
\toprule
\multicolumn{4}{l}{\textbf{(a) Separately trained native heads}}\\
Model & AUROC & AUPRC & Log loss \\
\midrule
Dual baseline & .822 [.806, .839] & .815 [.794, .837] & .526 [.499, .550] \\
Epoch OLS & .774 [.754, .793] & .722 [.691, .752] & .594 [.570, .621] \\
Epoch LEACE & .793 [.775, .810] & .773 [.748, .801] & .560 [.537, .585] \\
SCCR $\tau=.10$ & .792 [.775, .811] & .766 [.739, .793] & .552 [.535, .569] \\
SCCR $\tau=.30$ & .788 [.770, .807] & .767 [.742, .792] & .553 [.535, .569] \\
SCCR $\tau=.90$ & .831 [.815, .847] & .820 [.797, .840] & .516 [.491, .541] \\
\midrule
\multicolumn{4}{l}{\textbf{(b) Common frozen embeddings: linear probes unless marked RBF}}\\
Retained output & Residual AUROC & $(E_0,R)$ AUROC & $G_0$ \\
\midrule
Dual baseline / SCCR $.90$ & .819 [.804, .837] & .819 [.803, .837] & .0856 [.0489, .1259] \\
OLS residual & .815 [.798, .834] & .819 [.802, .837] & .0902 [.0535, .1298] \\
LEACE residual & .500 [.500, .500] & .567 [.544, .591] & -.0003 [-.0007, .0001] \\
D-CCA distinctive & .818 [.800, .835] & .819 [.802, .837] & .0882 [.0507, .1272] \\
JIVE individual & .728 [.708, .749] & .819 [.803, .837] & .0907 [.0545, .1301] \\
AJIVE individual & .819 [.803, .837] & .819 [.802, .837] & .0856 [.0491, .1260] \\
SCCR $.10$ & .708 [.686, .729] & .715 [.694, .736] & .0191 [-.0094, .0470] \\
SCCR $.10$, RBF & .807 [.790, .825] & .780 [.761, .799] & .0902 [.0602, .1219] \\
JIVE individual, RBF & .815 [.799, .833] & .817 [.800, .835] & .1004 [.0647, .1412] \\
\bottomrule
\end{tabular*}
\caption{Follow-up CAD prediction, with 95\% bootstrap intervals. Native heads use their own expert and retained embeddings; common-embedding readouts use $E_0$. SCCR $.90$ removes no common directions and agrees with the dual baseline. Common SCCR $.10$ removes three directions and equals matched top-3 CCA; the identical control is not repeated. Larger $G_0$ means lower diagnosis log loss after adding the residual. These rows do not establish a consistent SCCR advantage.}
\label{tab:angio_followup_prediction}
\end{table}

\paragraph{Diagnosis and descriptor recovery.}
The saved native $.10$ map removes four directions, while the $.30$ and $.90$ maps remove none. At $.10$, the own-expert linear diagnosis AUROC is $.794$ $[.777,.813]$, whereas the pre-removal raw AUROC is only $.548$ $[.522,.571]$. Neither the raw embedding nor the residual shows a clear log-loss gain beyond this expert (Table~\ref{tab:angio_followup_audit}). Thus, the absent residual gain cannot be attributed only to removal. Linear mask-area and image-mean recovery decrease, but nonlinear recovery remains. Area is the mask foreground fraction, and perimeter is the four-connected boundary count divided by image pixel count. Image mean and SD use the unaugmented grayscale image; vessel mean uses its pixels inside the mask. These descriptors do not measure clinically validated disease mechanisms. In the separate $E_0$ audit, we keep the residual fixed and add the appearance panel $C$ (global, vessel, and background intensity summaries) or its row-shuffled values $C^\pi$. Defining $G_C=L(E_0,C)-L(E_0,C,R)$, the true-versus-shuffled contrast $G_{C^\pi}-G_C$ for native SCCR at $.10$ is $-.0004$ $[-.0011,.0004]$ with linear probes and $.0009$ $[-.0015,.0033]$ with nonlinear probes. Branch sensitivity and reduced linear recovery therefore do not establish clinically useful complementary evidence.

\begin{table}[H]
\centering\scriptsize
\setlength{\tabcolsep}{3pt}
\begin{tabular*}{\textwidth}{@{\extracolsep{\fill}}llccc@{}}
\toprule
\multicolumn{5}{l}{\textbf{(a) Diagnosis gain beyond each SCCR model's own expert}}\\
$\tau$ & Probe & $G_X$ & $G_R$ & $G_X-G_R$ \\
\midrule
.10 & Linear & -.0022 [-.0044, .0001] & -.0002 [-.0022, .0017] & -.0020 [-.0040, .0001] \\
.10 & RBF & -.0019 [-.0060, .0025] & -.0009 [-.0055, .0040] & -.0010 [-.0042, .0025] \\
.30 & Linear & .0009 [-.0006, .0024] & .0009 [-.0006, .0024] & .0000 [.0000, .0000] \\
.30 & RBF & -.0005 [-.0057, .0049] & -.0005 [-.0057, .0049] & .0000 [.0000, .0000] \\
\midrule
\multicolumn{5}{l}{\textbf{(b) Descriptor recovery in the saved SCCR $\tau=.10$ model}}\\
Descriptor & $X$, linear & $R$, linear & $X$, RBF & $R$, RBF \\
\midrule
Mask area & .092 [.072, .108] & .003 [-.004, .008] & .107 [.082, .126] & .087 [.060, .106] \\
Mask perimeter & .111 [.091, .129] & .006 [-.003, .013] & .141 [.113, .163] & .124 [.096, .147] \\
Image mean & .922 [.916, .927] & .169 [.139, .197] & .949 [.944, .953] & .936 [.931, .941] \\
Image SD & .752 [.732, .771] & .136 [.106, .166] & .806 [.788, .821] & .783 [.765, .801] \\
Vessel mean & .672 [.647, .695] & .375 [.336, .412] & .703 [.680, .725] & .677 [.651, .699] \\
\bottomrule
\end{tabular*}
\caption{Follow-up CAD audit with 95\% bootstrap intervals. (a) Positive $G_X$ or $G_R$ indicates improvement beyond the same model's own expert; positive $G_X-G_R$ indicates loss of gain after removal. Near-zero differences at $.30$ are rounded, and its frozen map is the identity. (b) Values are test $R^2$ before and after removal. RBF denotes the bounded nonlinear probe, not a quadratic probe. The geometric and appearance targets are computed from the masks and images.}
\label{tab:angio_followup_audit}
\end{table}

\subsection{Vessel mask generation}
\label{app:angio-mask-generation}

The expert branch uses vessel masks generated by a U-Net trained on the ARCADE coronary angiography segmentation data. The segmentation model is trained using grayscale angiography images and COCO-style polygon annotations. All images and masks are resized to
\[
256 \times 256.
\]
For each annotated image, polygon coordinates are scaled from the original image resolution to $256\times256$, and vessel polygons are rasterized into a binary mask. The U-Net input has shape $256\times256\times1$, and the output is a single-channel sigmoid mask.

\paragraph{U-Net architecture.}
The mask generator is a four-level U-Net with skip connections. The encoder uses convolutional blocks with channel widths
\[
64,\ 128,\ 256,\ 512,
\]
followed by a bottleneck with 1024 channels. Each block contains two $3\times3$ convolutions with ReLU activations and batch normalization. Max pooling is used for downsampling. Dropout probabilities are 0.1 in the first two encoder stages, 0.2 in the deeper encoder stages, and 0.3 in the bottleneck.

The decoder upsamples by a factor of two at each stage, concatenates the corresponding encoder skip connection, and applies two $3\times3$ convolutional layers with batch normalization. The decoder channel widths are
\[
512,\ 256,\ 128,\ 64.
\]
The final layer is a $1\times1$ convolution with sigmoid activation.

\paragraph{Segmentation training.}
The U-Net is trained with batch size 16 for up to 2000 epochs. The optimizer is Adam. The loss is the sum of binary cross-entropy and Dice loss:
\[
\mathcal{L}_{\mathrm{seg}}
=
\mathcal{L}_{\mathrm{BCE}}
+
\left(1 -
\frac{
2\langle y,\hat{y}\rangle + \epsilon
}{
\|y\|_1 + \|\hat{y}\|_1 + \epsilon
}
\right),
\]
with $\epsilon=10^{-6}$. The monitored metric is the Dice coefficient.

Training uses random augmentations of image-mask pairs. With probability 0.3 each, the generator applies a small rotation sampled from $[-10^\circ,10^\circ]$, a translation sampled from $\pm 5$ pixels, a brightness/contrast perturbation applied only to the image, and light Gaussian blur applied only to the image. Geometric transformations use bilinear interpolation for images and nearest-neighbor interpolation for masks.

The best segmentation model is selected by validation loss and saved. The segmentation training uses ReduceLROnPlateau with factor 0.5, patience 5, and minimum learning rate $10^{-6}$, and early stopping with patience 15 and best-weight restoration.

The downstream SCCR classifier consumes the generated masks from an HDF5 file organized by frame index. For each frame group \texttt{frame\_i}, the file stores mask arrays and the corresponding video-path indices. The classifier uses these stored masks as the expert-view input for the longest frame of each video.

The angiogram experiments used an NVIDIA T4 GPU with 16GB of RAM, and Pytorch 2.2.1.

\end{document}